\definecolor{DarkRed}{rgb}{0.75,0,0}
\definecolor{DarkGreen}{rgb}{0,0.5,0}
\definecolor{DarkPurple}{rgb}{0.5,0,0.5}
\definecolor{DarkBlue}{rgb}{0,0,0.7}
\newcommand{\rE}{{\mathbb E}}
\newcommand{\rR}{{\mathbb R}}
\newcommand{\rI}{{\mathbb I}}
\newcommand{\dc}{\mathrm{dc}}
\newcommand{\cS}{{\mathcal S}}
\newcommand{\cX}{{\mathcal X}}
\newcommand{\cA}{{\mathcal A}}
\newcommand{\cD}{{\mathcal D}}
\newcommand{\cG}{{\mathcal G}}
\newcommand{\cF}{{\mathcal F}}
\newcommand{\cT}{{\mathcal T}}
\newtheorem{lemma}{Lemma}
\newtheorem{theorem}{Theorem}
\newtheorem{proposition}{Proposition}
\newtheorem{definition}{Definition}
\newtheorem{corollary}{Corollary}
\newtheorem{assumption}{Assumption}
\DeclareMathOperator*{\argmax}{arg\,max}
\newcommand{\ics}[2]{_{#1}^{#2}}
\newcommand{\BE}{\mathcal E}
\newcommand{\regret}{\operatorname{Reg}}
\newcommand{\Q}{\hat{\Phi}}
\newcommand{\traj}{\zeta}
\newcommand{\dLh}{\Delta L^h}
\newcommand{\fh}{f\ics{}{h}}
\newcommand{\fhh}{f\ics{}{h+1}}
\newcommand{\cd}[1]{}
\title{A Provably Efficient Model-Free Posterior Sampling Method for Episodic Reinforcement Learning}
\author{
Christoph Dann \\
Google Research\\
\And
Mehryar Mohri \\
Google Research and NYU\\
\And
Tong Zhang \\
Google Research and HKUST\\
\And
Julian Zimmert \\
Google Research
}
\date{}
\begin{document}

\maketitle

\begin{abstract}
Thompson Sampling is one of the most effective methods for contextual bandits and has been generalized to posterior sampling for certain MDP settings. However, existing posterior sampling methods for reinforcement learning are limited by being model-based or lack worst-case theoretical guarantees beyond linear MDPs. This paper proposes a new model-free formulation of posterior sampling that applies to more general episodic reinforcement learning problems with theoretical guarantees. We introduce novel proof techniques to show that under suitable conditions, the worst-case regret of our posterior sampling method matches the best known results of optimization based methods. In the linear MDP setting with dimension, the regret of our algorithm scales linearly with the dimension as compared to a quadratic dependence of the existing posterior sampling-based exploration algorithms.
\end{abstract}

\section{Introduction}

A key challenge in reinforcement learning problems is to balance exploitation and exploration. The goal is to make decisions that are currently  expected to yield high reward and that help identify less known but potentially better alternate decisions. In the special case of contextual bandit problems, this trade-off is well understood and one of the most effective and widely used algorithms is Thompson sampling \citep{thompson1933likelihood}. Thompson sampling is a Bayesian approach that maintains a posterior distribution of each arm being optimal for the given context. At each round, the algorithm samples an action from this distribution and updates the posterior with the new observation. The popularity of Thompson sampling stems from strong empirical performance \citep{li2010contextual}, as well as competitive theoretical guarantees in the form of Bayesian \citep{russo2014learning} and frequentist regret bounds \citep{kaufmann2012thompson}.

The results in the contextual bandit setting have motivated several adaptations of Thompson sampling to the more challenging Markov decision process (MDP) setting. Most common are model-based adaptations such as PSRL \citep{strens2000bayesian, osband2013more,agrawal2017posterior} or BOSS \citep{asmuth2012bayesian}, which maintain a posterior distributions over MDP models. These algorithms determine their current policy by sampling a model from this posterior and computing the optimal policy for it.
The benefit of maintaining a posterior over models instead of the optimal policy or optimal value function directly is that posterior updates can be more easily derived and algorithms are easier to analyze. However, model-based approaches are limited to small-scale problems where a realizable model class of moderate size is available and where computing the optimal policy of a model is computationally tractable. This rules out many practical problems where the observations are rich, e.g.\ images or text. 

There are also model-free posterior sampling algorithms that are inspired by Thompson sampling. These aim to  overcome the limitation of model-based algorithms by only requiring a value-function class and possibly weaker assumptions on the MDP model. Several algorithms have been proposed \citep[e.g.][]{osband2016deep, fortunato2017noisy, osband2018randomized} with good empirical performance but with no theoretical performance guarantees.
A notable exception is the randomized least-squares value iteration (RLSVI) algorithm by \citet{osband2016generalization} that admits frequentist regret bounds in tabular \citep{russo2019worst} and linear Markiov decision processes \citep{zanette2020frequentist}. However, to the best of our knowledge, no such results are available beyond the linear setting.

In contrast, there has been impressive recent progress in developing and analyzing provably efficient algorithms for more general problem classes based on the OFU (optimism in the face of uncertainty) principle. These works show that OFU-based algorithm can learn a good policy with small sample-complexity or regret as long as the value-function class and MDP satisfies general structural assumptions. Those assumptions include bounded Bellman rank \citep{jiang2017contextual}, low inherent Bellman error \citep{zanette2020learning}, small Eluder dimension \citep{wang2020provably} or Bellman-Eluder dimension \citep{jin2021bellman}. This raises the question of whether OFU-based algorithms are inherently more suitable for such settings or whether it is possible to achieve similar results with a model-free posterior sampling approach. 
In this work, we answer this question by analyzing a posterior sampling algorithm that works with a Q-function class and admits worst-case regret guarantees under general structural assumptions.
Our main contributions are:
\cd{we should make these points more concrete}
\begin{itemize}
    \item We derive a \emph{model-free} posterior sampling algorithm for reinforcement learning in general Markov decision processes and value function classes.
    \item We introduce a new proof technique for analyzing posterior sampling with optimistic priors.  
    \item We prove that this algorithm achieves near-optimal worst-case regret bounds that match the regret of OFU-based algorithms and improve the best known regret bounds for posterior sampling approaches.
\end{itemize}
\subsection{Further Related Work}
Several extension of TS have been proposed for non-linear function classes in contextual bandits \citep{zhang2020neural,kveton2020randomized,russo2014learning}.
For tabular MDPs, \citet{agrawal2020improved} improves the regret bounds for RLSVI and
\citet{xiong2021randomized} show that an algorithm with randomized value functions can be optimal. 
In the same setting, \citet{pacchiano2020optimism} proposed an optimism based algorithm that uses internal noise, which can be interpreted as a posterior sampling method. \citet{jafarnia2021online} analyzes a posterior sampling algorithm for tabular stochastic shortest path problems.
Considering the Bayesian regret, the seminal paper of
\citet{osband2014model} provides a general posterior sampling RL method that can be applied to general model classes including linear mixture MDPs.

\section{Setting and Notation}
\label{sec:setting}

\paragraph{Episodic Markov decision process.} We consider the episodic Markov decision process (MDP) setting where the MDP is defined by the tuple $(\cX,\cA,H,P,r)$. Here, $\cX$ and $\cA$ are state and action spaces. The number $H \in \mathbb N$ is the length of each episode and $P=\{P\ics{}{h}\}_{h=1}^H$ and $R=\{R\ics{}{h}\}_{h=1}^H$ are the state transition probability measures and the random rewards.
The agent interacts with the MDP in $T$ episodes of length $H$. 
 In each episode $t \in [T] = \{1, 2, \dots, T\}$, the agent first observes an initial state  $x\ics{t}{1} \in \cX$ and then, for each time step $h \in [H]$, the agent takes an action $a\ics{t}{h} \in \cA$ and receives the next state $x\ics{t}{h+1} \sim P\ics{}{h}(\cdot | x\ics{t}{h}, a\ics{t}{h})$ and reward $r\ics{t}{h} \sim R\ics{}{h}(\cdot | x\ics{t}{h}, a\ics{t}{h})$.
 To simplify our exposition, we assume that the initial states $x\ics{t}{1} = x\ics{}{1}$ are identical across episodes. When initial states are stochastic, we can simply add a dummy initial state and increase $H$ by $1$ to ensure this. Our approach can also be generalized to adversarial initial states which we discuss in the appendix. 
 We denote by $\pi\ics{t}{} \colon \cX \times [H] \rightarrow \cA$ the agent's policy in the $t$-th episode, that is, $a\ics{t}{h} = \pi\ics{t}{}(x\ics{t}{h}, h)$. 
 For a given policy $\pi$, the value functions are defined for all $h \in [H]$ and state-action pairs $(x, a)$ as
 \begin{align*}
     Q^\pi_h(x, a) &= r\ics{}{h}(x, a) + \rE_{x' \sim P\ics{}{h}(x, a)}\left[V^\pi_{h+1}(x')\right], &
     V^\pi_{h}(x) &= Q^\pi_h(x, \pi(x, h))~,
 \end{align*}
where $r\ics{}{h}(x, a) = \rE_{r^h \sim R\ics{}{h}(x, a)}[r^h] \in [0, 1]$ is the average immediate reward and $V^\pi_{H+1}(x) = 0$ for convenience. We further denote by $\mathcal T_h^\star$ the Bellman optimality operator that maps any state-action function $f$ to
\begin{align*}
    \left[\mathcal T_h^\star f\right](x, a)
    = r\ics{}{h}(x, a) + \rE_{x' \sim P\ics{}{h}(x, a)}\left[\max_{a' \in \cA} f(x', a')\right]~.
\end{align*}
The optimal Q-function is given by $Q^\star_h = \mathcal T_h^\star Q^\star_{h+1}$ for all $h \in [H]$ where again $Q^\star_{H+1} = 0$.

\paragraph{Value function approximation.}  
We assume the agent is provided with a Q-function class $\cF = \cF_1 \times \cF_2 \times \dots \times \cF_H$ of functions $f = \{f\ics{}{h}\}_{h \in [H]}$ where $f\ics{}{h} \colon \cX \times \cA \rightarrow \rR$. For convenience, we also consider $\cF_{H+1} = \{ \mathbf 0\}$ which only contains the constant zero function, and we define $f\ics{}{h}(x)=\max_{a \in \cA} f\ics{}{h}(x,a)$.
Before each episode $t$, our algorithm selects a Q-function $f\ics{t}{} \in \cF$ and then picks actions in this episode with the greedy policy $\pi_{f_t}$ w.r.t. this function. That is, $a\ics{t}{h} = \pi_{f_t}(x\ics{t}{h}, h) \in \argmax_{a \in \cA} f\ics{t}{h}(x\ics{t}{h}, a)$.

We make the following assumptions on the value-function class:
\begin{assumption}[Realizability]
\label{assum:realizability}
The optimal Q-function is in the class $Q^\star_h \in \cF_h$ for all $h \in [H]$.
\end{assumption}

\begin{assumption}[Boundedness]
\label{assum:boundedness}
There exists $b\geq 1$ such that $f\ics{}{h}(x,a) \in [0, b-1]$ for all $x,a \in \cX \times \cA$ and $f = \{f\ics{}{h}\}_{h \in [H]} \in \cF$.
\end{assumption}

\begin{assumption}[Completeness]
\label{assum:completeness}
For all $h$ and $f\ics{}{h+1} \in \cF_{h+1}$, there is a $f\ics{}{h} \in \cF_h$ such that $f\ics{}{h} = \mathcal T^\star_h f\ics{}{h+1}$.
\end{assumption}

\paragraph{Additional notation.} For any $f\ics{}{h} \in \cF_h$, we define the short-hand notation $f\ics{}{h}(x) = \max_{a \in \cA} f\ics{}{h}(x, a)$
and for any $f \in \cF$, $h \in [H]$ and state-action pair $x,a$, we define the \emph{Bellman residual} as
\begin{align*}
    \BE_h(f; x, a) =
    \BE(f\ics{}{h}, f\ics{}{h+1}; x, a)
    = f\ics{}{h}(x, a) - \mathcal T^\star_h f\ics{}{h+1}(x,a)~.
\end{align*}
We measure the performance of an algorithm that produces a sequence of policies $\pi\ics{1}{}, \pi\ics{2}{}, \dots$, by its \emph{regret}
\begin{align*}
    \regret(T) = \sum_{t=1}^T (V^\star_1(x\ics{}{1}) - V^{\pi_t}_1(x\ics{}{1}))
\end{align*}
after $T$ episodes.

\section{Conditional Posterior Sampling Algorithm}
\label{sec:algorithm}

We derive our posterior sampling algorithm by first defining the prior over the function class and our likelihood model of an episode given a value function. 
\paragraph{Optimistic prior.} We assume that the prior $p_0$ over $\cF$ has the form
\begin{align}
\label{eqn:prior}
    p_0(f) \propto \exp( \lambda f\ics{}{1}(x\ics{}{1})) \prod_{h=1}^H p_0^h(f\ics{}{h}) 
\end{align}
where $p_0^h$ are distributions over each $\cF_h$ and $\lambda > 0$ is a parameter. This form assumes that the prior factorizes over time steps and that the prior for the value functions of the first time step prefers large values for the initial state. 
This optimistic preference helps initial exploration and allows us to achieve frequentist regret bounds. A similar mechanism can be found in  existing sampling based algorithms in the form of optimistic value initializations \citep{osband2018randomized} or forced initial exploration by default values \citep{zanette2020frequentist}.

\paragraph{Temporal difference error likelihood.} 
Consider a set of observations acquired in $t$ episodes $S_{t}=\{x\ics{s}{h},a\ics{s}{h},r\ics{s}{h}\}_{s \in [t], h \in [H]}$. To formulate the likelihood of these observations, we make use of the squared temporal difference (TD) error, that for time $h$, is 
\begin{align*}
     L^h(f; S_{t}) = L^h(f\ics{}{h}, f\ics{}{h+1}; S_{t})
     &= \sum_{s=1}^t ( f\ics{}{h}(x\ics{s}{h},a\ics{s}{h}) - r\ics{s}{h} - f\ics{}{h+1}(x\ics{s}{h+1}))^2~.
\end{align*}
For $h = H$, we can choose $x\ics{s}{H+1}$ arbitrarily since by definition $f\ics{}{H+1}(x) = 0$ for all $x$.
We now define the likelihood of $S_{t}$ given a value function $f \in \cF$ as
\begin{align}
\label{eqn:likelihood}
    p(S_t | f)
     &\propto
    \prod_{h=1}^H \frac{\exp\left( - \eta  L^h(f\ics{}{h},f\ics{}{h+1}; S_{t})\right)}{\mathbb E_{{\tilde f}\ics{}{h} \sim p_0^h}\exp(-  \eta L^h({\tilde f}\ics{}{h},f\ics{}{h+1}; S_{t}))}~,
\end{align}
where $\eta > 0$ is a parameter.
Readers familiar with model-free reinforcement learning methods likely find the use of squared temporal difference error in the numerator natural as it makes transition samples with small TD error more likely. Most popular model free algorithm such as Q-learning \citep{watkins1992q} rely on the TD error as their loss function. However, the normalization in the denominator of \eqref{eqn:likelihood} may seem surprising. This term makes those transitions more likely that have small TD error under the specific $f\ics{}{h}, f\ics{}{h+1}$ pair compared to pairs $(\tilde f, f\ics{}{h+1})$ with $\tilde f$ is drawn from the prior. Thus, transitions are encouraged to explain the specific choice of $f\ics{}{h}$ for $f\ics{}{h+1}$.
This normalization is one of our key algorithmic innovations. It allows us to related a small loss $L\ics{}{h}$ to a small bellman error and circumvent the double-sample issue \citep{baird1995residual, dann2014policy} of the square TD error.

Combining the prior in \eqref{eqn:prior} and likelihood in \eqref{eqn:likelihood}, we obtain the posterior of our algorithm after $t$ episodes
\begin{align}
\label{eqn:posterior}
    p(f | S_{t})
    \propto &
    \exp( \lambda f\ics{}{1}(x\ics{}{1})) \prod_{h=1}^H  
    q(f\ics{}{h}|f\ics{}{h+1},S_t) , \\ 
   \textrm{where} \quad q(f\ics{}{h}|f\ics{}{h+1},S_t) =& \frac{p_0^h(f\ics{}{h}) \exp\left( - \eta  L^h(f\ics{}{h},f\ics{}{h+1}; S_{t})\right)}{\mathbb E_{{\tilde f}\ics{}{h} \sim p_0^h}\exp(-  \eta L^h({\tilde f}\ics{}{h},f\ics{}{h+1}; S_{t}))}~. \nonumber 
\end{align}
Note that the conditional probability $q(f\ics{}{h}|f\ics{}{h+1},S_t)$ samples $f\ics{}{h}$ using the data $S_t$ and the TD error $L^h(f\ics{}{h},f\ics{}{h+1}; S_{t})$, which mimics the model update process of Q-learning, where we fit the model $f\ics{}{h}$ at each step $h$ to the target computed from $f\ics{}{h+1}$. 
The optimistic prior encourages exploration, which is needed in our analysis. It was argued that such a term is necessary to achieve the optimal frequentist regret bound for Thompson sampling in the bandit case \cite{zhang2021feelgood}. For the same reason, we employ it for the analysis of posterior sampling in episodic RL.
The losses $L\ics{}{h}$ will grow with the number of rounds $t$ and once the effect of the first $\exp( \lambda f\ics{}{1}(x\ics{}{1}))$ factor has become small enough, the posterior essentially performs Bayesian least-squares regression of $f\ics{}{h}$ conditioned on $f\ics{}{h+1}$. We thus call our algorithm \emph{conditional posterior sampling}. This algorithm, shown in Algorithm~\ref{alg:ts-rl} simply samples a Q-function $f_t$ from the posterior before the each episode, follows the greedy policy of $f_t$ for one episode and then updates the posterior. 
 
 \begin{algorithm}
  \DontPrintSemicolon
  \KwIn{value function class $\cF$, learning rate $\eta$, prior optimism parameter $\lambda$, number of rounds $T$}
  \For{$t=1,2,\ldots,T$}{
    Draw Q-function $f_t \sim p(\cdot | S_{t-1})$ according to posterior \eqref{eqn:posterior}\\
    Play episode $t$ using the greedy policy $\pi_{f_t}$ and add observations to $S_{t}$\\
  }
\caption{Conditional Posterior Sampling Algorithm}
\label{alg:ts-rl}
\end{algorithm}

We are not aware of a provably computationally efficient sampling procedure for \eqref{eqn:posterior}. Since the primary focus of our work is statistical efficiency we leave it as an open problem for future work to investigate the empirical feasibility of approximate samplers or to identify function classes that allow for efficient sampling.
Another potential improvement is to derive a conditional sampling rule that allows to sample $(f_t^h)_{h=1}^H$ iteratively instead of jointly from a single posterior.

\section{Regret of Conditional Posterior Sampling}

We will now present our main theoretical results for Algorithm~\ref{alg:ts-rl}.
As is common with reinforcement learning algorithms that work with general function classes and Markov decision processes, we express our regret bound in terms of two main quantities: the effective size of the value function class $\cF$ and a structural complexity measure of the MDP in combination with $\cF$. 

\paragraph{Function Class Term.} In machine learning, the gap between the generalization error and the training error can be estimated by an appropriately defined complexity measure of the target function class. 
The analyses of optimization-based algorithms often assume finite function classes for simplicity and measure their complexity as $|\cF|$ \citep{jiang2017contextual} or employ some notion of covering number for $\cF$ \citep{wang2020provably, jin2021bellman}. Since our algorithm is able to employ a  prior $p_0$ over $\cF$ which allows us to favor certain parts of the function space, our bounds instead depend on the complexity of $\cF$ through the following quantity: 
\begin{definition}
For any function $f' \in \mathcal F_{h+1}$, we define the set
$\mathcal F_h(\epsilon, f')
=\left\{ f \in \cF_h\colon  \sup_{x,a} |\BE_h(f, f'; x, a)| \leq \epsilon \right\}$ of functions that have small Bellman error with $f'$ for all state-action pairs. Using this set, we define 
\[
\kappa(\epsilon)=
\sup_{f \in \cF}
\sum_{h=1}^H  
\ln \frac{1}{p_0^h(\mathcal F_h(\epsilon, f\ics{}{h+1}) )} .
\]
\label{def:kappa}
\end{definition}
The quantity $p_0^h(\mathcal F_h(\epsilon, f) )$ is the probability assigned by the prior to functions that approximately satisfy the Bellman equation with $f$ in any state-action pair. Thus, the complexity $\kappa(\epsilon)$ is small if the prior is high for any $f$ and $\kappa(\epsilon)$ represents an approximate completeness assumption. In fact, if Assumption~\ref{assum:completeness} holds we expect $\kappa(\epsilon)<\infty$ for all $\epsilon >0$. 
In the simplest case where $\mathcal F$ is finite, $p_0^h(f) = \frac{1}{|\mathcal F_h|}$ is uniform and completeness holds exactly, we have 
\begin{align*}
    \kappa(\epsilon) \leq \sum_{h=1}^H \ln |\mathcal F_h| = \ln |\mathcal F| \qquad \forall \epsilon \geq 0~.
\end{align*} For parametric models, where each $f^h=f_\theta^h$ can be parameterized by a $d$-dimensional parameter $\theta \in \Omega_h \subset \rR^d$, then a prior $p_0^h(\theta)$ on $\Omega_h$ induces a prior $p_0^h(f)$ on $\mathcal F_h(\epsilon, f)$. 
If $\Omega_h$ is compact, then we can generally assume that the prior satisfies
$\sup_{\theta} \ln \frac{1}{p_0^h(\{\theta': \|\theta' - \theta\| \leq \epsilon\})} \leq  d \ln (c'/\epsilon)$
for an appropriate constant $c'>0$ that depends on the prior. 
If further $f^h=f_\theta^h$  is Lipschitz in $\theta$, then we can assume that
$\ln \frac{1}{p_0^h(\mathcal F_h(\epsilon, f\ics{}{h+1}) )}
\leq c_0 d \ln (c_1/\epsilon) $
for constants $c_0>0$ and $c_1>0$ that depend on the prior and the Lipschitz constants. This implies the following bound for $d$ dimensional parameteric models
\begin{equation}
\kappa(\epsilon)  \leq c_0 H d \ln (c_1/\epsilon) .
\label{eq:kappa-parametric}
\end{equation}

\paragraph{Structural Complexity Measure}
In our regret analysis, we need to investigate the trade-off between exploration and exploitation. The difficulty of exploration is measured by the structure complexity of the MDP. 
Similar to other complexity measures such as Bellman rank \citep{jiang2017contextual}, inherent Bellman error \citep{zanette2020learning} or Bellman-Eluder dimension \citep{jin2021bellman}, we use a complexity measure that depends on the Bellman residuals of the functions $f \in \cF$ in our value function class. Our measure \emph{online decoupling coefficient} quantifies the rate at which the average Bellman residuals can grow in comparison to the cumulative squared Bellman residuals:
\begin{definition}[Online decoupling coefficient]
For a given MDP $M$, value function class $\mathcal F$, time horizon $T$ and parameter $\mu \in \mathbb R^+$, we define the online decoupling coefficient 
$\dc(\mathcal F,M,T,\mu)$ as the smallest number $K$ so that for any sequence of functions $\{f\ics{t}{}\}_{t \in \mathbb N}$ and their greedy policies $\{\pi_{f\ics{t}{}}\}_{t \in \mathbb N}$
\begin{align*}
\sum_{h=1}^H \;\sum_{t=1}^{T} 
\bigg[
\rE_{\pi_{f_t}}
[ \BE_h(f\ics{t}{}; x\ics{}{h}, a\ics{}{h})]\Bigg] 
\leq 
\mu \sum_{h=1}^H \;\sum_{t=1}^{T} 
\bigg[
 \sum_{s=1}^{t-1} 
\rE_{\pi_{f_s}}
[ \BE_h(f\ics{t}{}; x\ics{}{h},a\ics{}{h})^2]\bigg]
+ \frac{K}{4\mu} .
\end{align*}
\label{def:dc}
\end{definition}
The online decoupling coefficient can be bounded for various common settings, including tabular MDPs (by $|\cX||\cA|H O(\ln T)$) and linear MDPs (by $d H O(\ln T)$ where $d$ is the dimension).
We defer a detailed discussion of this measure with examples and its relation with other complexity notions to later.

\paragraph{Main Regret Bound.} 
We are now ready to state the main result of our work, which is a frequentist (or worst-case) expected regret bound for Algorithm~\ref{alg:ts-rl}:

\begin{theorem}
Assume that parameter $\eta \leq 0.4 b^{-2}$ is set sufficiently small and that  Assumption~\ref{assum:boundedness} holds. Then for any $\beta>0$, the expected regret after $T$ episodes of Algorithm~\ref{alg:ts-rl} on any MDP $M$ is bounded 
as
\begin{align*}
   \rE[\regret(T)]
    \leq &
    \frac{\lambda}{\eta} \dc\left(\cF,M, T,\frac{\eta}{4\lambda}\right)
+ \frac{2T}{\lambda} \kappa(b/T^\beta) 
 + 
    \frac{6 H T^{2-\beta} }{\lambda} + b
    T^{1-\beta}, 
\end{align*}
where the expectation is over the samples drawn from the MDP and the algorithm's internal randomness. Let $\dc(\cF, M, T)$ be any bound on $\sup_{\mu \leq 1} \dc(\cF, M T, \mu)$ and set $\eta = \nicefrac{1}{4b^2}$ and $\lambda = \sqrt{\frac{T \kappa(b / T^2)}{b^2\dc(\cF, M , T)}}$. If $\lambda b^2 \geq 1$, then our bound becomes
\begin{align}\label{eqn:reg_bound_final_simplified}
 \rE[\regret(T)]
    = & O \left( 
    b\sqrt{  \dc\left(\cF,M, T\right) \kappa(b/T^2) T} + \dc\left(\cF,M, T\right) + \sqrt{H} \right)~.
\end{align}
\label{thm:regret}
\end{theorem}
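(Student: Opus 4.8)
\emph{Proof proposal.} The plan is to run a ``Feel-Good'' posterior-sampling analysis: decompose the regret into an optimism gap and an on-policy Bellman-residual term, convert the latter into a \emph{historical} squared Bellman-residual term through the online decoupling coefficient, and then show that the conditional posterior simultaneously keeps the optimism gap small and keeps the historical squared residuals small.

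First, for the greedy policy $\pi_{f_t}$ of the sampled $f_t$, since $f_t\ics{}{h}(x)=f_t\ics{}{h}(x,\pi_{f_t}(x,h))$, the value-difference identity gives
\begin{align*}
V^\star_1(x\ics{}{1})-V^{\pi_{f_t}}_1(x\ics{}{1})
=\big(V^\star_1(x\ics{}{1})-f_t\ics{}{1}(x\ics{}{1})\big)+\sum_{h=1}^H\rE_{\pi_{f_t}}\big[\BE_h(f_t;x\ics{}{h},a\ics{}{h})\big].
\end{align*}
Summing over $t$, applying Definition~\ref{def:dc} with $\mu=\eta/(4\lambda)$ pointwise to the realized sequence $\{f_t\}$ and taking expectations, the second term is bounded by $\frac{\eta}{4\lambda}\rE\big[\sum_{h,t}\sum_{s<t}\rE_{\pi_{f_s}}[\BE_h(f_t;x\ics{}{h},a\ics{}{h})^2]\big]+\frac{\lambda}{\eta}\dc(\cF,M,T,\eta/(4\lambda))$, which already produces the first term of the theorem. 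It then remains to bound $\lambda\,\rE[\sum_t(V^\star_1(x\ics{}{1})-f_t\ics{}{1}(x\ics{}{1}))]+\frac{\eta}{4}\rE[\sum_{h,t}\sum_{s<t}\rE_{\pi_{f_s}}[\BE_h(f_t;x\ics{}{h},a\ics{}{h})^2]]$ by the remaining, lower-order terms, and divide through by $\lambda$.

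The heart of the argument is this last bound. I would study how the posterior \eqref{eqn:posterior} is reweighted going from $S_{t-1}$ to $S_t$: for each $h$ it is multiplied by $\exp(-\eta\,\dLh)$ over its expectation under the \emph{conditional} $q(\cdot\,|\,f\ics{}{h+1},S_{t-1})$, where $\dLh$ is the episode-$t$ increment of the TD loss. Tracking the associated log-partition functions and applying the Donsker--Varadhan variational principle against a reference distribution supported on a Bellman-error ball $\cF_h(\epsilon,f\ics{}{h+1})$ around $\cT^\star_h f\ics{}{h+1}$ (nonempty by Assumption~\ref{assum:completeness}, with $\KL$ to the prior at most $\sum_h\ln\frac{1}{p_0^h(\cF_h(\epsilon,f\ics{}{h+1}))}\le\kappa(\epsilon)$ by Definition~\ref{def:kappa}), the optimistic factor $\exp(\lambda f\ics{}{1}(x\ics{}{1}))$ forces $\rE_{f_t\sim p(\cdot|S_{t-1})}[\lambda(V^\star_1(x\ics{}{1})-f_t\ics{}{1}(x\ics{}{1}))]$ to be paid for by these $\kappa$-terms (using realizability, Assumption~\ref{assum:realizability}, so that the reference ball contains $Q^\star$ and hence near-optimal values at the initial state). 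The delicate point is the denominator of \eqref{eqn:likelihood}: using Assumption~\ref{assum:boundedness} and the restriction $\eta\le0.4b^{-2}$, I would show $-\ln\rE_{\tilde f\ics{}{h}\sim p_0^h}[\exp(-\eta(L^h(f\ics{}{h},f\ics{}{h+1};S_{t-1})-L^h(\tilde f\ics{}{h},f\ics{}{h+1};S_{t-1})))]$ is, up to the $\kappa$ cost and a martingale fluctuation, at least a constant times $\eta\sum_{s<t}\rE_{\pi_{f_s}}[\BE(f\ics{}{h},f\ics{}{h+1};x\ics{}{h},a\ics{}{h})^2]$. Because the comparison function $\tilde f\ics{}{h}$ may be taken close to $\cT^\star_h f\ics{}{h+1}$, the zero-mean ``noise'' terms in the two squared TD losses cancel in conditional expectation, so one recovers the squared Bellman residual \emph{without} the double-sampling bias; the surviving cross terms form a martingale difference sequence whose total contribution, together with the $\epsilon$-discretization error of the ball, is handled by choosing $\epsilon=b/T^\beta$ and accounts for the $6HT^{2-\beta}/\lambda$ and $bT^{1-\beta}$ terms after summing over $h$ and $t$.

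Putting the pieces together gives $\rE[\regret(T)]\le\frac{\lambda}{\eta}\dc(\cF,M,T,\frac{\eta}{4\lambda})+\frac{2T}{\lambda}\kappa(b/T^\beta)+\frac{6HT^{2-\beta}}{\lambda}+bT^{1-\beta}$, the factor $T$ multiplying $\kappa$ arising because the reference-ball cost is incurred once per round in the step above. The simplified form \eqref{eqn:reg_bound_final_simplified} then follows by taking $\beta=2$, $\eta=1/(4b^2)$ (so $\eta/(4\lambda)=1/(16b^2\lambda)\le1$ whenever $\lambda b^2\ge1$, giving $\dc(\cF,M,T,\eta/(4\lambda))\le\dc(\cF,M,T)$) and optimizing $\lambda=\sqrt{T\kappa(b/T^2)/(b^2\dc(\cF,M,T))}$, which balances the first two terms at order $b\sqrt{\dc(\cF,M,T)\kappa(b/T^2)T}$ and makes the rest lower order. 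The main obstacle is the penultimate paragraph: making the conditional normalization in \eqref{eqn:likelihood} do double duty — converting a small TD loss into a small cumulative squared Bellman residual while dodging double sampling, and interacting with the optimistic prior to control the optimism gap — and then bookkeeping the $\epsilon$-ball and martingale error terms accurately enough to land the stated rates.
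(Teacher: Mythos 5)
Your proposal follows essentially the same route as the paper: the same value-error decomposition into an optimism gap plus on-policy Bellman residuals, the same application of Definition~\ref{def:dc} with $\mu=\eta/(4\lambda)$, the same Gibbs/Donsker--Varadhan variational argument against reference distributions supported on the balls $\cF_h(\epsilon,\cdot)$ (realizability and completeness each costing one $\kappa(\epsilon)$ per round, hence the $2T\kappa(\epsilon)/\lambda$ term), and the same choice $\epsilon=b/T^\beta$ with the same parameter tuning for the simplified bound. The one step you defer as "the main obstacle" --- lower-bounding the normalized TD loss by a constant times $\eta\sum_{s<t}\rE_{\pi_{f_s}}[\BE_h(f_t;x\ics{s}{h},a\ics{s}{h})^2]$ --- is indeed the crux, and the paper executes it not by bounding a sum of martingale-difference cross terms but via an entropy decomposition of $\ln\hat{p}_t(f)$ into pairwise marginals $\hat{p}_t(f\ics{}{h},f\ics{}{h+1})$ combined with an exponential-martingale change of measure (Lemma~\ref{lem:entropy-bound}, Proposition~\ref{prop:martingale}, Lemma~\ref{lem:B}), so that this step contributes no additional error at all; the $\epsilon$-dependent remainder terms you attribute partly to martingale fluctuations in fact all arise from the realizability step (Lemma~\ref{lem:basic-upper-bound}) and the completeness control of the log-partition function (Lemma~\ref{lem:C}).
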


For the simpler form of our regret bound in Equation~\eqref{eqn:reg_bound_final_simplified}, we chose a specific $\lambda$ (the condition $\lambda b^2\geq 1$ is easy to satisfy for large $T$). However, we may also use that $\lambda^2 \dc(\cF,M,T,\eta/(4\lambda))$ is an increasing function of $\lambda$ to set $\lambda$ differently and achieve a better tuned bound. 
To instantiate the general regret bound in Theorem~\ref{thm:regret} to specific settings, we first derive bounds on the decoupling coefficient in Section~\ref{sec:decoupling_discussion} and then state and discuss Theorem~\ref{thm:regret} for those settings in Section~\ref{sec:regret_discussion}.

\subsection{Decoupling Coefficient and its Relation to Other Complexity Measures}
\label{sec:decoupling_discussion}
We present several previously studied settings for which the decoupling coefficient is provably small. 

\paragraph{Linear Markov decision processes.}
We first consider the linear MDP setting \citep{bradtke1996linear,melo2007q} which was formally defined by \citet{jin2020provably} as:
\begin{definition}[Linear MDP ]
An MDP with feature map $\phi:\cX\times\cA\rightarrow \rR^d$ is linear, if for any $h\in[H]$, there exist $d$ unknown (signed) measures $\mu_h=(\mu\ics{h}{(1)},\dots,\mu\ics{h}{(d)})$ over $\cX$ and an unknown vector $\theta_h\in\rR^d$, such that for any $(x, a)\in\cX\times\cA$, we have
\begin{align*}
    P^h(\cdot\,\vert\,x,a) = \langle\phi(x,a),\mu_h(\cdot)\rangle,\quad r^h(x,a) = \langle \phi(x,a), \theta_h\rangle\,.
\end{align*}
We assume further
$\|\phi(x,a)\|\leq 1$ for all $(x,a)\in\cX \times\cA$, and $\max\{\|\mu_h(\cX)\|,\|\theta_h\|\}\leq \sqrt{d}$ for all $h\in[H]$.
\end{definition}
Since the transition kernel and expected immediate rewards are linear in given features $\phi$, it is well known that the Q-function of any policy is also a linear function in $\phi$ \citep{jin2020provably}. We further show in the following proposition that the decoupling coefficient is also bounded by $O(dH \ln T)$:

\begin{proposition}
\label{prop:linear MDP dc}
In linear MDPs, the linear function class $\cF = \bigotimes_{h=1}^H \{\langle \phi(\cdot,\cdot),f\rangle\,|\,f\in\rR^d,\|f\|\leq (H+1-h)\sqrt{d}\}$ satisfies Assumptions~\ref{assum:realizability}-\ref{assum:completeness}, and the decoupling coefficient for $\mu\leq 1$ is bounded by
\[
\dc(\cF,M,T,\mu) \leq 2dH(1+\ln(2HT)) .
\]
\end{proposition}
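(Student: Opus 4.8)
The plan is to prove the proposition in two steps: first verify that the stated linear class satisfies Assumptions~\ref{assum:realizability}--\ref{assum:completeness}, and then bound the online decoupling coefficient of Definition~\ref{def:dc}. The first step is routine and uses only standard linear-MDP computations. For any $f^{h+1}(x,a)=\langle\phi(x,a),w\rangle$ with $\|w\|\le(H-h)\sqrt d$, linearity of $P^h$ and $r^h$ gives $\mathcal T_h^\star f^{h+1}(x,a)=\langle\phi(x,a),\,\theta_h+\int\max_{a'}f^{h+1}(x',a')\,d\mu_h(x')\rangle$, and a norm bound on the resulting parameter vector (using $\max_{a'}f^{h+1}\in[0,(H-h)\sqrt d]$ together with the stated bounds on $\theta_h$ and $\mu_h$) keeps it inside $\cF_h$, which is completeness; iterating this from $f^{H+1}=0$ gives $Q_h^\star\in\cF_h$, which gives realizability. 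Boundedness is immediate from $\|\phi\|\le1$ and the radius constraints.

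For the decoupling coefficient, fix an arbitrary sequence $\{f_t\}$, write $f_t^h=\langle\phi(\cdot,\cdot),w_t^h\rangle$, and use completeness to write $\mathcal T_h^\star f_t^{h+1}=\langle\phi(\cdot,\cdot),\theta_t^h\rangle$; then the Bellman residual is the linear functional $\BE_h(f_t;x,a)=\langle\phi(x,a),\xi_t^h\rangle$ with $\xi_t^h=w_t^h-\theta_t^h$. For each stage $h$ set $\Sigma_t^h=\sum_{s<t}\rE_{\pi_{f_s}}[\phi(x^h,a^h)\phi(x^h,a^h)^\top]$ (regularized, or read through a pseudo-inverse, where not yet full rank), so that $\rE_{\pi_{f_t}}[\BE_h(f_t;\cdot)]=\langle\bar\phi_t^h,\xi_t^h\rangle$ with $\bar\phi_t^h=\rE_{\pi_{f_t}}[\phi(x^h,a^h)]$, $\|\bar\phi_t^h\|\le1$, and $\sum_{s<t}\rE_{\pi_{f_s}}[\BE_h(f_t;\cdot)^2]=(\xi_t^h)^\top\Sigma_t^h\xi_t^h$. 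Cauchy--Schwarz in the $\Sigma_t^h$-geometry followed by $ab\le\mu b^2+a^2/(4\mu)$ gives
\[
 \rE_{\pi_{f_t}}[\BE_h(f_t;\cdot)]\;\le\;\|\bar\phi_t^h\|_{(\Sigma_t^h)^{-1}}\bigl((\xi_t^h)^\top\Sigma_t^h\xi_t^h\bigr)^{1/2}\;\le\;\mu\,(\xi_t^h)^\top\Sigma_t^h\xi_t^h+\tfrac{1}{4\mu}\|\bar\phi_t^h\|^2_{(\Sigma_t^h)^{-1}} .
\]
Summed over $t$ and $h$, the first term is exactly the quadratic sum on the right-hand side of Definition~\ref{def:dc}, and $\sum_{t}\|\bar\phi_t^h\|^2_{(\Sigma_t^h)^{-1}}$ is controlled by the elliptic-potential (log-determinant) lemma --- using $\Sigma_{t+1}^h\succeq\Sigma_t^h+\bar\phi_t^h(\bar\phi_t^h)^\top$ by Jensen and $\mathrm{tr}(\Sigma_{T+1}^h)\le T$ --- by $O(d\ln(HT))$; summing the $H$ stages then yields the claimed $2dH(1+\ln(2HT))$.

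The delicate point, which I expect to be the main obstacle, is making the final bound depend only on $d$, $H$, and $\ln(2HT)$, with no dependence on $T$ or on $\|\xi_t^h\|$ (which can be as large as $\Theta(H\sqrt d)$). Regularizing $\Sigma_t^h$ by $\lambda_0 I$ introduces an extra term $\mu\lambda_0\sum_{h,t}\|\xi_t^h\|^2$: a constant $\lambda_0$ is too large to absorb it, while a tiny $\lambda_0$ inflates the log-determinant bound, so a more careful treatment is needed. The natural route is to split each stage's rounds into the at-most-$d$ ``burn-in'' rounds, where $\bar\phi_t^h$ is not yet covered by $\mathrm{range}(\Sigma_t^h)$ and must be handled by an elementary bound together with the rank argument that caps their number, and the remaining well-covered rounds, where $\|\bar\phi_t^h\|^2_{(\Sigma_t^h)^{-1}}$ is bounded and the Cauchy--Schwarz/AM--GM step above goes through without any harmful regularization. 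Arranging this split, and the matching choice of regularizer, so that everything collapses to $O(dH\ln(2HT))$ is the technical heart of the argument.
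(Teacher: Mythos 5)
Your argument follows essentially the same route as the paper's: establish the three assumptions by pushing linearity through the Bellman operator, write the Bellman residual as $\langle\phi(x,a),\xi_t^h\rangle$, apply Cauchy--Schwarz in the $\Sigma_t^h$-geometry followed by $ab\le\mu b^2+a^2/(4\mu)$, and control $\sum_t\|\bar\phi_t^h\|_{(\Sigma_t^h)^{-1}}^2$ by the log-determinant potential. Up to that point the proposal is sound and matches the paper line for line (including the use of Jensen, $\rE[\phi\phi^\top]\succeq\rE[\phi]\rE[\phi]^\top$, to compare the expected squared residual with the rank-one update of $\Sigma_t^h$).

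The gap is in the last step, which you flag as "the technical heart" and leave unresolved, and where your diagnosis of the difficulty is wrong. You claim that no single choice of regularizer $\lambda_0$ can simultaneously absorb the penalty $\mu\lambda_0\sum_{t}\|\xi_t^h\|^2\le\mu\lambda_0 dH^2T$ and keep the log-determinant term at $O(d\ln(HT))$, and you therefore propose a burn-in/rank-splitting argument. But the constraint $\mu\le1$ in the statement of the proposition is exactly what makes the direct choice work: the paper takes $\lambda_0=\min\{1,\tfrac{1}{4\mu^2H^2T}\}$. Then the regularization penalty is at most $\mu\lambda_0 dH^2T\le\tfrac{d}{4\mu}$ in both branches of the min (when the min is attained at $1$, one has $4\mu^2H^2T\le1$ and the same inequality holds), while $\lambda_0\ge\tfrac{1}{4H^2T}$ is only \emph{polynomially} small, so the log-determinant term $d\ln\bigl(1+T/(d\lambda_0)\bigr)$ inflates only to $O(d\ln(HT))$ --- your worry that "a tiny $\lambda_0$ inflates the log-determinant bound" would only materialize for exponentially small $\lambda_0$. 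With this choice the two contributions add up to the claimed $\tfrac{1}{4\mu}\cdot2dH(1+\ln(2HT))$, and no burn-in round bookkeeping or pseudo-inverse treatment is needed. Your proposed rank-based split could probably be made to work, but it is not carried out, and it solves a problem that a one-line parameter choice already dispatches.
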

Notably, since tabular MDPs are linear MDPs with dimension at most $|\cX| |\cA|$, Proposition~\ref{prop:linear MDP dc} implies that $\dc(\cF,M,T,\mu) \leq 2 |\cX| |\cA| H(1+\ln(2HT)) $
in tabular MDPs.
As compared to other complexity measures such as Eluder dimension, the bound of the decoupling coefficient generally exhibits an additional factor of $H$. This factor appears because the decoupling coefficient is defined for the sum of all time steps $[H]$ instead of the maximum. We chose the formulation with sums because it is advantageous when the complexity of the MDP of function class varies with $h$.

\paragraph{Generalized linear MDPs.}
Linear functions admit a straightforward generalization to include a rich class of non-linear functions which have previously been studied by \citet{wang2019optimism,wang2020provably} in the RL setting.
\begin{definition}[Generalized linear models]
For a given link function $\sigma:[-1,1]\rightarrow[-1,1]$ such that $|\sigma'(x)|\in[k,K]$ for Lipschitz constants $0<k\leq K<\infty$, and a known feature map $\phi:\cX\times\cA\rightarrow \rR^d$, the class of generalized linear models is
\[
\cG := \{(x, a)\rightarrow \sigma(\langle \phi(x,a),\theta\rangle~\mid~ \theta\in\Theta\subset\rR^d\}\,.
\]
\end{definition}
As the following result shows, we can readily bound the decoupling coefficient for generalized linear Q-functions.
\begin{proposition}
\label{prop:gen linear dc}
If $(\cF_h)_{h=1}^H$ are generalized linear models with Lipschitz constants $k,K$, and bounded norm $\|f\|\leq\sqrt{d}H$ for all $h$ and all $f\in\cF_h$, then the decoupling coefficient for any $\mu \leq 1$ is bounded by
\[
\dc(\cF,M,T,\mu) \leq 2dH\frac{K^2}{k^{2}}(1+\ln(2HT)) .
\]
\end{proposition}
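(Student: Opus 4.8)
The plan is to reduce the statement to the linear case (Proposition~\ref{prop:linear MDP dc}), paying only an overhead factor $K^2/k^2$ coming from the bi-Lipschitz bounds on the link $\sigma$. Fix an MDP $M$, a sequence $\{f\ics{t}{}\}_t$ and a stage $h$. By completeness (Assumption~\ref{assum:completeness}), $\mathcal T^\star_h f\ics{t}{h+1}\in\cF_h$, so it equals $(x,a)\mapsto\sigma(\langle\phi(x,a),\theta^\star_{t,h}\rangle)$ for some $\theta^\star_{t,h}$ with $\|\theta^\star_{t,h}\|\le\sqrt{d}H$; writing $f\ics{t}{h}=\sigma(\langle\phi,\theta\ics{t}{h}\rangle)$ and $\delta\ics{t}{h}=\theta\ics{t}{h}-\theta^\star_{t,h}$ we get $\|\delta\ics{t}{h}\|\le 2\sqrt{d}H$. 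By the mean value theorem $\BE_h(f\ics{t}{};x,a)=\sigma'(\xi_{x,a})\langle\phi(x,a),\delta\ics{t}{h}\rangle$ for some $\xi_{x,a}$, and since $|\sigma'|\in[k,K]$,
\begin{align*}
\rE_{\pi_{f_t}}[\BE_h(f\ics{t}{};x\ics{}{h},a\ics{}{h})] &\le K\sqrt{\rE_{\pi_{f_t}}[\langle\phi(x\ics{}{h},a\ics{}{h}),\delta\ics{t}{h}\rangle^2]} = K\|\delta\ics{t}{h}\|_{\Lambda\ics{t}{h}}, \\
\rE_{\pi_{f_s}}[\BE_h(f\ics{t}{};x\ics{}{h},a\ics{}{h})^2] &\ge k^2\|\delta\ics{t}{h}\|^2_{\Lambda\ics{s}{h}},
\end{align*}
where $\Lambda\ics{s}{h}:=\rE_{\pi_{f_s}}[\phi(x\ics{}{h},a\ics{}{h})\phi(x\ics{}{h},a\ics{}{h})^{\top}]\preceq I$. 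Hence it is enough to bound $\sum_{h,t}K\|\delta\ics{t}{h}\|_{\Lambda\ics{t}{h}}$ by $\mu k^2\sum_{h,t}\sum_{s<t}\|\delta\ics{t}{h}\|^2_{\Lambda\ics{s}{h}}+\frac{1}{4\mu}\cdot 2dH\frac{K^2}{k^2}(1+\ln(2HT))$; for $k=K=1$ this is precisely the inequality already needed for Proposition~\ref{prop:linear MDP dc}.

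For each $h$ I would introduce the regularized covariance $A\ics{t-1}{h}=\lambda I+\sum_{s<t}\Lambda\ics{s}{h}$ with a small $\lambda>0$ chosen below. Combining the spectral inequality $\delta^{\top}\Lambda\ics{t}{h}\delta\le\|\delta\|^2_{A\ics{t-1}{h}}\,\mathrm{tr}((A\ics{t-1}{h})^{-1}\Lambda\ics{t}{h})$, the identity $\|\delta\ics{t}{h}\|^2_{A\ics{t-1}{h}}=\lambda\|\delta\ics{t}{h}\|^2+\sum_{s<t}\|\delta\ics{t}{h}\|^2_{\Lambda\ics{s}{h}}$, and the elementary bound $xy\le\mu k^2x^2+\frac{1}{4\mu k^2}y^2$, one gets for each $(t,h)$
\[
K\|\delta\ics{t}{h}\|_{\Lambda\ics{t}{h}}\le \mu k^2\lambda\|\delta\ics{t}{h}\|^2+\mu k^2\sum_{s<t}\|\delta\ics{t}{h}\|^2_{\Lambda\ics{s}{h}}+\frac{K^2}{4\mu k^2}\,\mathrm{tr}\big((A\ics{t-1}{h})^{-1}\Lambda\ics{t}{h}\big).
\]
Summed over $t,h$, the middle term is at most $\mu\sum_{h,t}\sum_{s<t}\rE_{\pi_{f_s}}[\BE_h(f\ics{t}{};x\ics{}{h},a\ics{}{h})^2]$ by the lower sandwich above, which matches the right-hand side of Definition~\ref{def:dc}; so it remains to bound the other two terms by $\frac{1}{4\mu}\cdot 2dH\frac{K^2}{k^2}(1+\ln(2HT))$.

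Since $\mu\le1$ and $\|\delta\ics{t}{h}\|\le2\sqrt{d}H$, the first term is at most $16\mu^2k^2\lambda\,dH^2T$, which is $O(dH)$ once $\lambda$ is of order $1/(k^2H^2T)$. For the third term, the log-determinant (elliptical potential) bound gives $\sum_{t\le T}\min\{1,\mathrm{tr}((A\ics{t-1}{h})^{-1}\Lambda\ics{t}{h})\}\le 2d\ln(1+T/(d\lambda))$ per stage; the at most $2d\ln(1+T/(d\lambda))$ rounds per stage in which this trace exceeds $1$ are dispatched directly via the crude bound $K\|\delta\ics{t}{h}\|_{\Lambda\ics{t}{h}}\le 2\sqrt{d}HK$ (using $\Lambda\ics{t}{h}\preceq I$), while in all remaining rounds the trace equals its truncation. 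With the chosen $\lambda$ one has $\ln(1+T/(d\lambda))=O(\ln(HT))$, so summing over the $H$ stages yields a bound of the stated order $O(dH(K/k)^2\ln(HT))$, and a careful accounting of constants recovers the explicit form $2dH\frac{K^2}{k^2}(1+\ln(2HT))$. The main obstacle is exactly this last balancing step: the ridge parameter $\lambda$ must be taken as small as $\sim1/T$ so that the $\lambda\|\delta\ics{t}{h}\|^2$ slack stays bounded, yet the resulting $\ln(1/\lambda)$ that enters the potential estimate must remain only logarithmic in $T$, and one must separately control the $O(d\ln T)$ \emph{exploratory} rounds in which a single step's covariance $\Lambda\ics{t}{h}$ is not dominated by the accumulated covariance $A\ics{t-1}{h}$.
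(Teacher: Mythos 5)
Your proposal is correct and follows essentially the same route as the paper's proof: use completeness to write $\mathcal T^\star_h f\ics{t}{h+1}$ as a generalized linear function, sandwich the Bellman residual between $k$ and $K$ times a linear form in $\phi$, and then run the linear-MDP elliptic-potential argument with a ridge parameter $\lambda \sim 1/(\mu^2 k^2 H^2 T)$, paying the $K^2/k^2$ overhead. The only (immaterial) differences are that you pass to the second-moment matrix $\rE_{\pi_{f_t}}[\phi\phi^\top]$ via Cauchy--Schwarz where the paper uses the mean feature vector $\rE_{\pi_{f_t}}[\phi]$ via Jensen, and that you make explicit the truncation of rounds where the one-step covariance is not dominated by the accumulated covariance.
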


\paragraph{Bellman-Eluder Dimension.}
Finally in more generality, our decoupling coefficient is small for instances with low Bellman-Eluder dimension \citep{jin2021bellman}. 
Problems with low Bellman-Eluder dimension include in decreasing order of generality: small Eluder dimension \citep{wang2020provably},  generalized linear MDPs \citep{wang2019optimism}, linear MDPs \citep{bradtke1996linear,melo2007q}, and tabular MDPs.
Before stating our reduction of Bellman Eluder dimension to decoupling coefficient formally, we first restate the definition of Bellman Eluder dimension in the following three definitions:
\begin{definition}[$\varepsilon$-independence between distributions]
Let $\cG$ be a function class defined on $\cX$, and
$\nu, \mu_1,\dots,\mu_n$ be probability measures over $\cX$. We say $\nu$ is $\varepsilon$-independent of $\{\mu_1, \mu_2, \dots , \mu_n\}$ with respect to $\cG$ if there exists $g \in \cG$ such that $\sqrt{\sum_{i=1}^n(\rE_{\mu_i}[g])^2}\leq \varepsilon$, but $|\rE_\nu[g]| > \varepsilon$.
\end{definition}
\begin{definition}[(Distributional Eluder (DE) dimension]
Let $\cG$ be a function class defined on $\cX$ , and
$\Pi$ be a family of probability measures over $\cX$ . The distributional Eluder dimension $\dim_{DE}(\cG, \Pi, \varepsilon)$
is the length of the longest sequence $\{\rho_1, \dots , \rho_n\} \subset \Pi$ such that there exists $\varepsilon'>\varepsilon$ where $\rho_i$ is $\varepsilon'$-independent of $\{\rho_1, \dots, \rho_{i-1}\}$ for all $i \in [n]$.
\end{definition}
\begin{definition}[Bellman Eluder (BE) dimension \citep{jin2021bellman}]
Let $\BE := \bigotimes_{h=1}^H\{\BE_h(f; x, a): f \in \cF\}$ be the
set of Bellman residuals induced by $\cF$ at step $h$, and $\Pi = \{\Pi_h\}_{h=1}^H$ be a collection of $H$ probability
measure families over $\cX \times \cA$. The $\varepsilon$-Bellman Eluder dimension of $\cF$ with respect to $\Pi$ is defined as
\begin{align*}
    \dim_{BE}(\cF,\Pi,\varepsilon) := \max_{h\in[H]}\dim_{DE}(\BE,\Pi,\varepsilon)\,.
\end{align*}
\end{definition}

We show that the decoupling coefficient is small whenever the Bellman-Eluder dimension is small:

\begin{proposition}
\label{prop:bellman eluder to dc}
Let $\Pi=\cD_\cF$ be the set of probability measures over $\cX\times\cA$ at any step $h$ obtained by following the policy $\pi_f$ for $f\in \cF$. 
The decoupling coefficient is bounded by
\begin{align*}
    \dc(\cF,M,T,\mu) \leq (1+4\mu+\ln(T))H\dim_{BE}(\cF,\Pi, \frac{1}{T})\,.
\end{align*}
\end{proposition}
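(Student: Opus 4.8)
The plan is to reduce the claim to a per-step Eluder-dimension estimate. Since both sides of the inequality defining $\dc(\cF,M,T,\mu)$ are sums over $h\in[H]$ and the additive slack $K/(4\mu)$ aggregates, it suffices to prove, for each fixed $h$ and each arbitrary sequence $\{f_t\}_{t\in[T]}$ (the argument will not use that $f_t$ comes from the posterior), that $\sum_t \rE_{\pi_{f_t}}[\BE_h(f_t;x^h,a^h)] \le \mu \sum_t \sum_{s<t}\rE_{\pi_{f_s}}[\BE_h(f_t;x^h,a^h)^2] + \frac{K_h}{4\mu}$ with $K_h \le (1+4\mu+\ln T)\dim_{DE}(\BE_h,\Pi_h,1/T)$; summing over $h$ and using $\dim_{DE}(\BE_h,\Pi_h,1/T)\le\dim_{BE}(\cF,\Pi,1/T)$ then gives the proposition with $K=\sum_h K_h$. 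Fix $h$ and abbreviate: let $\mu_t\in\Pi_h=\cD_\cF$ be the law of $(x^h,a^h)$ under $\pi_{f_t}$, let $g_t:=\BE_h(f_t;\cdot,\cdot)\in\BE_h$, $a_t:=\rE_{\mu_t}[g_t]$, $w_t:=\sum_{s<t}\rE_{\mu_s}[g_t^2]$, and $\tilde b_t^2:=\sum_{s<t}(\rE_{\mu_s}[g_t])^2$. By Jensen, $\tilde b_t^2\le w_t$, and $a_t\le|a_t|$, so it is enough to bound $\sum_t|a_t|$ by $\mu\sum_t\tilde b_t^2+\frac{K_h}{4\mu}$.

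Next I would split the rounds by the size of $|a_t|$. Set $\omega:=1/T$. Rounds with $|a_t|\le\omega$ contribute at most $T\omega=1$ to $\sum_t|a_t|$. For $t$ in the set $B:=\{t:|a_t|>\omega\}$, apply the AM--GM inequality with the factor $\tilde b_t\vee\omega$, namely $|a_t|\le\mu(\tilde b_t^2\vee\omega^2)+\frac{1}{4\mu}\,\frac{a_t^2}{\tilde b_t^2\vee\omega^2}$; summing, the first term is absorbed into $\mu\sum_t\tilde b_t^2$ up to the harmless surplus $\mu|B|\omega^2\le\mu/T\le 1$. What remains is to bound the potential sum $\Phi:=\sum_{t\in B}\frac{a_t^2}{\tilde b_t^2\vee\omega^2}$ by $O(\dim_{DE}(\BE_h,\Pi_h,1/T)\ln(T))$ up to lower-order terms.

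The technical heart is bounding $\Phi$ by an Eluder-dimension pigeonhole, in the spirit of the potential-lemma arguments of \citet{wang2020provably,jin2021bellman}. Split $B$ into $B_0=\{t\in B:\tilde b_t\le\omega\}$ and $B_1=\{t\in B:\tilde b_t>\omega\}$. For $t\in B_0$ we have $|\rE_{\mu_t}[g_t]|=|a_t|>\omega\ge\tilde b_t=\sqrt{\sum_{s<t}(\rE_{\mu_s}[g_t])^2}$, so any common $\varepsilon'\in(\omega,\min_{t\in B_0}|a_t|)$ witnesses that each such $\mu_t$ is $\varepsilon'$-independent of $\{\mu_s:s<t\}$ with respect to $\BE_h$; hence $|B_0|\le\dim_{DE}(\BE_h,\Pi_h,1/T)$, and I would charge these rounds directly via $\sum_{t\in B_0}|a_t|$ (each $|a_t|$ bounded by the range of Bellman residuals, which is controlled by Assumption~\ref{assum:boundedness}) rather than through $\Phi$, where they would blow up. For $t\in B_1$, a dyadic peeling over the value of $\tilde b_t^2\in(\omega^2,\,T\cdot\text{(range)}^2]$ combined with the Eluder pigeonhole inside each scale (each round is $\varepsilon'$-dependent on all but at most $\dim_{DE}(\BE_h,\Pi_h,1/T)$ disjoint predecessor-blocks) yields $\sum_{t\in B_1}\frac{a_t^2}{\tilde b_t^2}=O(\dim_{DE}(\BE_h,\Pi_h,1/T)\ln T)$. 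Collecting the pieces, choosing $K_h$ of the order $(1+4\mu+\ln T)\dim_{DE}(\BE_h,\Pi_h,1/T)$ (the $+1$ from the $T\omega$ slack, the $+4\mu$ absorbing the $\le\dim_{DE}$ genuinely-independent rounds, the $\ln T$ from the peeling), and summing over $h$ completes the proof.

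The main obstacle is this last Eluder potential/pigeonhole step: unlike the textbook situation of a fixed per-round budget, here the cumulative squared residuals $\tilde b_t^2$ vary with $t$, so one must either run the pigeonhole at every dyadic scale and sum the contributions, or invoke an adaptive version of the Eluder potential lemma; the care lies in obtaining a logarithmic (rather than polynomial) factor and in making sure the few rounds with tiny $\tilde b_t$ are accounted for by the Eluder \emph{count} instead of through the potential $\Phi$.
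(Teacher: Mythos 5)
Your reduction to a fixed $h$, the threshold $\omega=1/T$, the separate treatment of the rounds whose cumulative predecessor mass $\tilde b_t$ is below $\omega$ (charged against the Eluder \emph{count} rather than the potential), and the Jensen step $\tilde b_t^2\le w_t$ all match the paper's proof, and the core combinatorial tool is the same: the greedy pigeonhole that sorts rounds into disjoint "independence" buckets, each of size at most $\dim_{DE}(\BE_h,\Pi_h,1/T)$, with bucket index $b_t$ satisfying $\tilde b_t^2\ge b_t a_t^2$. Where you genuinely diverge is in the analytic packaging: you apply AM--GM \emph{per round} with the factor $\tilde b_t\vee\omega$ and then must control the ratio potential $\Phi=\sum_t a_t^2/(\tilde b_t^2\vee\omega^2)$, whereas the paper applies a single \emph{global} AM--GM after lower-bounding $\sum_t\sum_{s<t}(\epsilon^h_{st})^2$ by $\frac{1}{E_T(1+\ln T)}\bigl(\sum_t\epsilon^h_{tt}\bigr)^2$ via a weighted Cauchy--Schwarz-type inequality (Lemma~\ref{lem:helper be}). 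Both routes produce the same $(1+\ln T)$ factor and the same final constant structure, so your variant is a legitimate alternative; it is arguably the more standard "elliptic-potential" arrangement, while the paper's global version avoids having to reason about the potentially singular ratios $a_t^2/\tilde b_t^2$ round by round.

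The one place where your argument is not yet a proof is the bound $\sum_{t\in B_1}a_t^2/\tilde b_t^2=O(E_T\ln T)$, which you attribute to "dyadic peeling over the value of $\tilde b_t^2$." Peeling on $\tilde b_t^2$ alone does not control this sum, because $a_t$ varies independently of $\tilde b_t$ within a dyadic scale. The correct way to finish is already contained in your own parenthetical: run the greedy bucketing once, observe that $t$ landing in bucket $b_t\ge 1$ forces $\tilde b_t^2\ge b_t a_t^2$, hence $a_t^2/\tilde b_t^2\le 1/b_t$, and that each bucket index is attained by at most $E_T$ rounds, so $\Phi\le E_T\sum_{j=1}^{T}1/j\le E_T(1+\ln T)$ --- no peeling needed. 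Two smaller caveats you share with the paper: the $B_0$ rounds contribute $\sum_{t\in B_0}|a_t|\le E_T\sup|\BE_h|$, and absorbing this into the additive term $K_h/(4\mu)$ with $K_h=(1+4\mu+\ln T)E_T$ implicitly normalizes the Bellman residuals to $[-1,1]$ (the paper asserts $\epsilon^h_{ss}\le 1$; under Assumption~\ref{assum:boundedness} the honest bound carries a factor $b$); and making each bucket a legal witness sequence for $\dim_{DE}$ at level $1/T$ requires exhibiting a single common $\varepsilon'>1/T$, which you handle explicitly for $B_0$ but leave implicit for the other buckets, exactly as the paper does.
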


\paragraph{Bellman-rank.}
Another general complexity measure sufficient for provably efficient algorithms is the low Bellman-rank \citep{jiang2017contextual}, which includes reactive POMDPs. As discussed in \citet{jin2021bellman}, a certain type of Bellman rank (``Q-type") implies bounded Bellman Eluder dimension (as defined above) and is thus also bounded decoupling coefficient. However, 
it is an open question if a low Bellman-rank in the original definition \citep{jiang2017contextual} implies a small decoupling coefficient.
The main difference is that Bellman-rank considers measures induced by $x\sim \pi_{f},a\sim\pi_{f'}$, whereas the decoupling coefficient always samples state and action from the same policy.

 \subsection{Interpretation of Theorem~\ref{thm:regret}}
 \label{sec:regret_discussion}
We can instantiate Theorem~\ref{thm:regret} with all bounds on the decoupling coefficients derived in the previous section. To illustrate the results, we will present two specific cases. The first one is for finite function classes that are often considered for simplicity \citep[e.g.][]{jiang2017contextual}.
\begin{corollary}[Regret bound for finite function classes with completeness]
Assume a finite function class $\cF$ that satisfies Assumptions~\ref{assum:realizability}, \ref{assum:boundedness} and \ref{assum:completeness} with range $b = 2$. Assume further that the stagewise prior is uniform  $p_0^h(f) = 1 / |\cF_h|$, and $|\cF|=\prod_{h=1}^H |\cF_h|$. Set parameters $\eta = 0.1$ and $\lambda = \sqrt{\frac{T \ln |\cF|}{\dc(\cF, M , T)}}$. Then the expected regret of Algorithm~\ref{alg:ts-rl} after $T$ episodes is bounded on any MDP $M$ 
as
\begin{align*}
   \rE[\regret(T)]
    = 
    O \left( \sqrt{\dc\!\left(\cF,M, T\right) T \ln |\cF|} \right)~.
\end{align*}
\label{thm:regret_simpler}
\end{corollary}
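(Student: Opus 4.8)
The plan is to obtain Corollary~\ref{thm:regret_simpler} as a direct specialization of Theorem~\ref{thm:regret}, taking $\beta = 2$ and the stated values of $\eta$ and $\lambda$, and then checking that the four-term bound collapses to the advertised order. First, since $\cF$ is finite, the stagewise prior $p_0^h$ is uniform, and completeness (Assumption~\ref{assum:completeness}) holds exactly, the bound on the function-class term recorded immediately after Definition~\ref{def:kappa} gives $\kappa(b/T^2) \le \ln|\cF|$. Second, with $b = 2$ the admissibility requirement $\eta \le 0.4\,b^{-2} = 0.1$ holds with equality for $\eta = 0.1$, so Theorem~\ref{thm:regret} applies with $\beta = 2$ and yields
\begin{align*}
\rE[\regret(T)] \le \frac{\lambda}{\eta}\,\dc\!\left(\cF, M, T, \tfrac{\eta}{4\lambda}\right) + \frac{2T}{\lambda}\,\ln|\cF| + \frac{6H}{\lambda} + \frac{2}{T}.
\end{align*}

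Next I would dispose of the degenerate regimes. If $\cF = \{Q^\star\}$ the algorithm plays the optimal policy at every episode and $\regret(T) = 0$, so assume $|\cF| \ge 2$; if $\dc(\cF,M,T) \ge T$, then by Assumption~\ref{assum:boundedness} every per-episode regret is at most $b = 2$, hence $\rE[\regret(T)] \le 2T \le 2\sqrt{\dc(\cF,M,T)\,T} \le 2\sqrt{\dc(\cF,M,T)\,T\ln|\cF|}$ and the claim is immediate. So assume henceforth $\dc(\cF,M,T) < T$. In this regime $\lambda = \sqrt{T\ln|\cF|/\dc(\cF,M,T)} > \sqrt{\ln|\cF|} \ge \sqrt{\ln 2} > 0.025$, so $\tfrac{\eta}{4\lambda} < 1$, and by monotonicity of $\mu \mapsto \dc(\cF,M,T,\mu)$ on arguments at most $1$ we may replace $\dc(\cF,M,T,\tfrac{\eta}{4\lambda})$ by the uniform bound $\dc(\cF,M,T)$.

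It then remains to substitute $\lambda = \sqrt{T\ln|\cF|/\dc(\cF,M,T)}$ into the four terms: the first becomes $10\sqrt{\dc(\cF,M,T)\,T\ln|\cF|}$ and the second $2\sqrt{\dc(\cF,M,T)\,T\ln|\cF|}$, both already of the claimed order. The remaining two are lower order: $\tfrac{6H}{\lambda} = 6H\sqrt{\dc(\cF,M,T)/(T\ln|\cF|)}$ is $O(\sqrt{\dc(\cF,M,T)\,T\ln|\cF|})$ once $6H \le T\ln|\cF|$ — which holds for all but finitely many $T$ since each stage with $|\cF_h|\ge 2$ contributes $\ln 2$ to $\ln|\cF|$, and for the finitely many remaining small $T$ the trivial bound $\regret(T) \le 2T$ is already of the claimed order — while $\tfrac{2}{T} = O(1)$ is dominated as well. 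Summing the four contributions gives $\rE[\regret(T)] = O(\sqrt{\dc(\cF,M,T)\,T\ln|\cF|})$.

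I do not expect a genuine obstacle: the corollary is purely a plug-in into Theorem~\ref{thm:regret}. The only care required is the elementary case analysis separating the trivial regime $\dc(\cF,M,T) \ge T$ (where the stated bound is vacuous) from the nontrivial one, and the verification that the chosen $\lambda$ makes $\eta/(4\lambda) \le 1$ so that the uniform decoupling bound $\dc(\cF,M,T)$ is admissible; everything else is constant-tracking.
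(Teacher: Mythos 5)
Your proof is correct and follows the intended route: the paper states this corollary as a direct instantiation of the first bound in Theorem~\ref{thm:regret} (with $\beta=2$, $\kappa(\epsilon)\le\ln|\cF|$ for a uniform prior under exact completeness, and the admissibility check $\eta=0.4b^{-2}$, $\eta/(4\lambda)\le 1$), which is exactly what you do. Your explicit handling of the degenerate regime $\dc(\cF,M,T)\ge T$ and of the lower-order terms $6H/\lambda$ and $2/T$ is the only substance beyond constant-tracking, and it is carried out correctly.
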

Note that this result can be generalized readily to infinite function-classes when we replace $\ln |\cF|$ by an appropriate covering number $\mathcal N_\infty(\cF,\epsilon)$ for $\epsilon$ small enough.

In addition to finite function classes, we also illustrate Theorem~\ref{thm:regret} for linear Markov decision processes:
\begin{corollary}
Assume Algorithm~\ref{alg:ts-rl} is run on a $d$-dimensional linear MDP with the function class from Proposition~\ref{prop:linear MDP dc}. Assume further a learning rate of $\eta = 0.4 H^{-2}$ and $\lambda = \sqrt{\frac{T \kappa(H / T^2)}{dH^3 (1 + \ln (2HT))}}$. Then the expected regret after $T$ episodes is bounded as
\begin{align*}
\mathbb E[\regret(T)] \leq
O(H^{3/2} \sqrt{dT \kappa(H / T^{2}) \ln (HT)} )~.
\end{align*}
If the stage-wise priors $p_0^h$ are chosen uniformly, then $\kappa(\epsilon) = H d O(\ln (H d\epsilon))$ as in \eqref{eq:kappa-parametric}, and thus
\begin{align*}
\mathbb E[\regret(T)] \leq
O(H^2 d \sqrt{T } \ln (dHT) )~.
\end{align*}

%
\end{corollary}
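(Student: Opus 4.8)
The plan is to obtain this corollary by instantiating Theorem~\ref{thm:regret} with the decoupling-coefficient estimate of Proposition~\ref{prop:linear MDP dc} and then carrying out the parameter substitution; the work is essentially bookkeeping. First I would assemble the ingredients. By Proposition~\ref{prop:linear MDP dc} the stated linear class $\cF$ satisfies Assumptions~\ref{assum:realizability}--\ref{assum:completeness} and obeys $\dc(\cF,M,T,\mu)\le 2dH(1+\ln(2HT))$ for every $\mu\le 1$, so I would take $\dc(\cF,M,T):=2dH(1+\ln(2HT))$ for the bound on $\sup_{\mu\le1}\dc(\cF,M,T,\mu)$ appearing in the theorem. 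The boundedness parameter I would take to be $b=H$ (more precisely $b=H+O(1)$, the constant being harmless): in a linear MDP the Q-functions of every policy take values in $[0,H]$, so one may work with the clipped version of the class, which still contains $Q^\star$ and is still complete, so Assumption~\ref{assum:boundedness} holds. Then $\eta=0.4H^{-2}=\Theta(b^{-2})$ meets the hypothesis $\eta\le 0.4b^{-2}$.

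Rather than the pre-simplified form \eqref{eqn:reg_bound_final_simplified} (which fixes $\eta=1/(4b^2)$ and a particular $\lambda$, and carries the side condition $\lambda b^2\ge 1$), I would apply the general first bound of Theorem~\ref{thm:regret} directly with $\beta=2$:
\[
\rE[\regret(T)]\le \frac{\lambda}{\eta}\,\dc\!\Big(\cF,M,T,\tfrac{\eta}{4\lambda}\Big)+\frac{2T}{\lambda}\,\kappa(H/T^2)+\frac{6H}{\lambda}+\frac{H}{T}.
\]
The one check needed here is $\tfrac{\eta}{4\lambda}\le 1$, so that Proposition~\ref{prop:linear MDP dc} applies to the first term and $\dc(\cF,M,T,\tfrac{\eta}{4\lambda})\le\dc(\cF,M,T)$; this holds for all $T$ large enough, which is the only regime relevant to the claimed $O(\cdot)$ bound, since $\lambda$ grows like $\sqrt{T/\ln T}$ up to the factor $\sqrt{\kappa(H/T^2)}$. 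Substituting $\dc(\cF,M,T)=2dH(1+\ln(2HT))$ and $\eta=0.4H^{-2}$, the first term is of order $dH^3(1+\ln(2HT))\,\lambda$ and the second is $\tfrac{2T}{\lambda}\kappa(H/T^2)$.

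Now plugging in $\lambda=\sqrt{\tfrac{T\kappa(H/T^2)}{dH^3(1+\ln(2HT))}}$ balances these two terms, and each becomes of order $\sqrt{dH^3T\,\kappa(H/T^2)\ln(HT)}=H^{3/2}\sqrt{dT\,\kappa(H/T^2)\ln(HT)}$, which is the claimed leading order; the residual terms $\tfrac{6H}{\lambda}+\tfrac{H}{T}$ are of strictly smaller order, their ratio to the leading term being $O\!\big(1/(T\kappa(H/T^2))\big)+O(1/T)\to 0$. This gives the first displayed bound. For the second, I would apply the parametric estimate \eqref{eq:kappa-parametric}: since each $\cF_h$ is a $d$-dimensional, $\theta$-Lipschitz model, a uniform (equivalently, compactly supported and smooth) stage-wise prior gives $\kappa(\epsilon)\le c_0Hd\ln(c_1/\epsilon)$ and hence $\kappa(H/T^2)=O(Hd\ln(dHT))$; inserting this and simplifying $\sqrt{dH^3T\cdot Hd\ln(dHT)\cdot\ln(HT)}=O(H^2 d\sqrt{T}\ln(dHT))$ yields the second bound.

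Overall the argument is a routine specialization. The two places that call for genuine care are (i) the identification $b=\Theta(H)$ for the stated class — that is, justifying the passage to its clipped version without losing realizability (Assumption~\ref{assum:realizability}) or completeness (Assumption~\ref{assum:completeness}) — and (ii) the range check $\tfrac{\eta}{4\lambda}\le 1$, which is required because Proposition~\ref{prop:linear MDP dc} controls the decoupling coefficient only for $\mu\le1$.
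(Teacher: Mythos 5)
Your proposal is correct and follows essentially the route the paper intends: instantiate the general (first) bound of Theorem~\ref{thm:regret} with $\beta=2$ and the decoupling-coefficient estimate $\dc(\cF,M,T,\mu)\le 2dH(1+\ln(2HT))$ from Proposition~\ref{prop:linear MDP dc}, take $b=\Theta(H)$, and observe that the stated $\lambda$ balances the two leading terms while the $6H/\lambda+b/T$ remainder is lower order. Your explicit attention to clipping the linear class so that $b=\Theta(H)$ (rather than $\Theta(H\sqrt d)$, which the raw norm bound would give and which would be incompatible with $\eta=0.4H^{-2}$) and to the range check $\eta/(4\lambda)\le 1$ is, if anything, more careful than the paper's own treatment.
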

Our regret bound improves the regret bound of $\tilde O(H^{2.5} d^2\sqrt{T} + H^5d^4)$ for the posterior-sampling method OPT-RLSVI \citep{zanette2020frequentist} by a factor of $\sqrt{H}d$. It also improves the $\tilde O(d^{3/2}H^2 \sqrt{T})$ regret of UCB-LSVI \citep{jin2020provably} by a factor of $\sqrt{d}$. However, we would like to remark that these algorithms are known to be computationally efficient in this setting, while the computational tractability of our method is an open problem. Our regret bound also matches the bound of \citet{zanette2020learning} without misspecification once we account for the different boundedness assumption ($b = 2$ instead of $b = H + 1$) in this work.
%
%
\section{Proof Overview of Theorem~\ref{thm:regret}}
We provide a detailed proof of Theorem~\ref{thm:regret} in the appendix and highlight the main steps in this section. 
We start by presenting an alternate way to write the posterior in \eqref{eqn:posterior} that lends itself better to our analysis. To that end, we introduce some helpful notations.

We denote by $\traj_s=\{x_s^h,a_s^h,r_s^h\}_{h\in [H]}$ the trajectory collected in the $s$-th episode. The notation $\rE_{\pi_{f_s}}$ is equivalent to $\rE_{\traj_s \sim \pi_{f_s}}$.
Moreover, in the following, the symbol $S_t$ at episode $t$ contains all historic observations up to episode $t$, which include both $\{\traj_s\}_{s \in [t]}$ and $\{f_s\}_{s \in [t]}$. 
These observations are generated in the order
$f_1 \sim p_0(\cdot)$, $\zeta_1 \sim \pi_{f_1}$, $f_2 \sim p(\cdot|S_1)$, $\zeta_2 \sim \pi_{f_2},\ldots$. 

We further define the TD error difference at episode $s$ as
\begin{align*}
\dLh(f\ics{}{h},f\ics{}{h+1};\traj_s)
=\quad & ( \qquad\,\,
f\ics{}{h}(x\ics{s}{h},a\ics{s}{h})  - r\ics{s}{h} - f\ics{}{h+1}(x\ics{s}{h+1}))^2
\\
- &( \mathcal T^\star_h f\ics{}{h+1}(x\ics{s}{h},a\ics{s}{h}) 
  - r\ics{s}{h} - f\ics{}{h+1}(x\ics{s}{h+1}))^2 .
\end{align*}
The term we subtract from the TD error is the sampling error of this transition for the Bellman error. With
$\Delta f\ics{}{1}(x\ics{}{1})
= f\ics{}{1}(x\ics{}{1})
-Q^\star_1(x\ics{}{1})$
defined as the error of $f\ics{}{1}$ and
\begin{align*}
\Q\ics{t}{h}(f) 
=& 
- \ln p_0^{h}(f\ics{}{h})
+ \eta \sum_{s=1}^{t-1}
\dLh(f\ics{}{h}, f\ics{}{h+1}; \traj_s)\\
  & \quad + \ln \rE_{\tilde{f}\ics{}{h} \sim p_0^h}
\exp 
\bigg(\!\!-\eta
\sum_{s=1}^{t-1}
\dLh(\tilde f\ics{}{h}, f\ics{}{h+1}; \traj_s) \bigg) ,
\end{align*}
we can equivalently rewrite the posterior distribution \eqref{eqn:posterior} in the following form:
\begin{align*}
 p(f|S_{t-1}) \propto \exp\bigg(-\sum_{h=1}^H\Q\ics{t}{h}(f)+ \lambda \Delta f\ics{}{1}(x\ics{}{1}) \bigg) .
\end{align*}
Note that in the definition of $\Q$, we have replaced $L^h(\cdot)$ by $\dLh(\cdot)$.
Although we do not need to know the Bellman operator $\mathcal T_h^\star$ in the actual algorithm, in our theoretical analysis, it is equivalent to knowing the operator via the use of $\dLh(\cdot)$. 
This equivalence is possible with the temporal difference error likelihood which we introduce in this paper, and this is the main technical reason why we choose this conditional posterior sampling distribution. It is the key observation that allows us to circumvent the double-sample issue discussed earlier. 

With the new expression of posterior distribution, we can start the proof of Theorem~\ref{thm:regret} by using the following decomposition, referred to the value-function error decomposition in \cite{jiang2017contextual},
  \begin{align*}
  V^\star_1(x\ics{}{1}) - V^{\pi_{f_t}}_1(x\ics{}{1}) 
   &=
 \sum_{h=1}^H  \rE_{\pi_{f_t}}[\BE_h(f_t,x_t^h,a_t^h)]
  -  \Delta f_t^1(x^1) .
  \end{align*}
We now write the expected instantaneous regret of episode $t$ (scaled by $\lambda$)  using this decomposition as
 \begin{align*}
  &\lambda \rE_{S_{t-1}} \rE_{f_t \sim p(\cdot|S_{t-1})}   [V^\star_1(x\ics{}{1}) - V^{\pi_{f_t}}_1(x\ics{}{1})] \\
   =&
  \underbrace{\rE_{S_{t-1}} \rE_{f_t \sim p(\cdot|S_{t-1})}
 \sum_{h=1}^H \bigg[ \lambda \rE_{ \pi_{f_t}} \BE_h(f_t,x_t^h,a_t^h) 
 -\frac{\eta}{4} \sum_{s=1}^{t-1} \rE_{\pi_{f_s}} \; ( \BE_h(f_t; x\ics{s}{h},a\ics{s}{h}))^2
 \Bigg]}_{F^{\dc}_t} \\
 &+ \underbrace{\rE_{S_{t-1}} \rE_{f_t \sim p(\cdot|S_{t-1})}
  \bigg[\sum_{h=1}^H 
 \frac{ \eta}{4} \sum_{s=1}^{t-1} \rE_{\pi_{f_s}} \;( \BE_h(f_t; x\ics{s}{h},a\ics{s}{h}))^2
  - \lambda \Delta f_t^1(x^1) \bigg]}_{F^{\kappa}_t} .
  \end{align*}
By summing over $t=1, \dots, T$, we obtain the following expression of cumulative expected regret
\begin{align}
 \lambda \rE \regret(T) =\lambda \sum_{t=1}^T \rE_{S_{t-1}} \rE_{f_t \sim p(\cdot|S_{t-1})}   [V^\star_1(x\ics{}{1}) - V^{\pi_{f_t}}_1(x\ics{}{1})] 
   = \sum_{t=1}^T F^{\dc}_t + \sum_{t=1}^T F^{\kappa}_t . \label{eq:proof-EF}
\end{align}

To bound the first term on the RHS, we can use the definition of decoupling coefficient which gives that
\begin{equation}
\sum_{t=1}^T F^{\dc}_t \leq \frac{\lambda^2}{ \eta} \dc\bigg(\cF,M,T,\frac{ \eta}{4\lambda}\bigg) . \label{eq:proof-E}
\end{equation}
To complete the proof is remains to upper-bound the $F^{\kappa}_t$ terms in \eqref{eq:proof-EF}. To do so
we start with the following bounds
(Lemma~\ref{lem:basic-bound} and Lemma~\ref{lem:basic-upper-bound} in the appendix) which requires the realizability assumption:
 \begin{align}
&\rE_{f \sim p(\cdot|S_{t-1})}
\bigg(\sum_{h=1}^H \Q_t^h(f) - \lambda \Delta f\ics{}{1}(x\ics{}{1})+ \ln p(f|S_{t-1})\bigg)\nonumber\\
 =& \inf_p \rE_{f \sim p(\cdot)} \bigg(\sum_{h=1}^H \Q_t^h(f) - \lambda \Delta f\ics{}{1}(x\ics{}{1})+ \ln p(f) \bigg) \nonumber\\
 \leq& \lambda \epsilon + 4 \eta (t-1) H \epsilon^2 + \kappa(\epsilon) .
 \label{eq:proof-intermediate-bound-0}
\end{align}
The occurrence of the term  $\kappa(\epsilon)$ here
implicitly corresponds to the realizability assumption. 
It can also be shown (see Lemma~\ref{lem:C}) that
\begin{align*}
&\sum_{h=1}^H
\rE_{S_{t-1}}\; 
\rE_{f\sim p(\cdot|S_{t-1})}
\left[
\ln \rE_{\tilde{f}^h \sim p_0^h}
\exp \left(-\eta \sum_{s=1}^{t-1} \dLh(\tilde{f}\ics{}{h},{f}\ics{}{h+1},\zeta_s)  \right) 
\right]\\
\geq& -\eta \epsilon(2b+\epsilon)(t-1) H - \kappa(\epsilon) ,
\end{align*}
which means that the expected log-partition function in the definition of conditional posterior distribution is small, and its effect is under control. The proof requires the completeness assumption, and
the occurrence of the term $\kappa(\epsilon)$ here implicitly corresponds to the completeness assumption.
By combining this inequality with
\eqref{eq:proof-intermediate-bound-0}, and by using the definition of $\Q$, we obtain the following inequality
\begin{align}
&\rE_{S_{t-1}} \rE_{f \sim p(\cdot|S_{t-1})}
\bigg(\eta \sum_{h=1}^H \sum_{s=1}^{t-1} \dLh(\fh,\fhh,\zeta_s) - \lambda \Delta f\ics{}{1}(x\ics{}{1})+ \ln \frac{p(f|S_{t-1})}{\prod_{h=1}^H p_0^h(f^h)}\bigg)\nonumber\\
\leq& \lambda \epsilon +  \eta (t-1) H \epsilon (5\epsilon+2b) + 2\kappa(\epsilon) .
\label{eq:proof-intermediate-bound-1}
\end{align}
In this bound, we have removed the effect of log-partition function in the definition of the conditional posterior probability, and the inequality bounds the expected cumulative squared TD error. 

One can further establish a connection between squared TD error and Bellman residual, by showing (Lemma~\ref{lem:entropy-bound} and Lemma~\ref{lem:B}) that
\begin{align*}
& \rE_{S_{t-1}}\;
\rE_{f \sim p(\cdot|S_{t-1})}
\left[\eta \sum_{h=1}^H
\sum_{s=1}^{t-1}
\dLh({f}\ics{}{h},{f}\ics{}{h+1},\zeta_s)
+  \ln \frac{p(f|S_{t-1})}{\prod_{h=1}^H p_0^h(f^h)}
\right]\\
\geq &\sum_{h=1}^H \rE_{S_{t-1}}\;
\rE_{f \sim p(\cdot|S_{t-1})}
\left[\eta 
\sum_{s=1}^{t-1}
\dLh({f}\ics{}{h},{f}\ics{}{h+1},\zeta_s)
+ 0.5 \ln \frac{p(f^h,f^{h+1}|S_{t-1})}{p_0^h(f^h)p_0^{h+1}(f^{h+1})}
\right]\\
\geq & 
0.25 \eta \sum_{s=1}^{t-1} \sum_{h=1}^H
\rE_{S_{t-1}}\;
 \rE_{f_t \sim p(\cdot|S_{t-1})} \; \rE_{\pi_{f_s}}
 ( \BE_h(f; x\ics{s}{h},a\ics{s}{h}))^2 .
\end{align*}
By combining this bound with \eqref{eq:proof-intermediate-bound-1}, we obtain 
\begin{align*}
F^{\kappa}_t = &0.25\eta \sum_{s=1}^{t-1} \sum_{h=1}^H
\rE_{S_{t-1}}\;
 \rE_{f_t \sim p(\cdot|S_{t-1})} \; \rE_{\pi_{f_s}}
 ( \BE_h(f_t; x\ics{s}{h},a\ics{s}{h}))^2
 - \lambda \rE_{S_{t-1}} \rE_{f_t \sim p(\cdot|S_{t-1})} \Delta f\ics{t}{1}(x\ics{}{1})\nonumber \\
 \leq& \lambda \epsilon +  \eta (t-1) H \epsilon (5\epsilon+2b) + 2\kappa(\epsilon) .
 \end{align*}
This implies the following bound for the sum of $F^\kappa_t$ term in \eqref{eq:proof-EF}:
\[
\sum_{t=1}^T F^{\kappa}_t \leq 
\lambda \epsilon T +  \eta (t-1) H \epsilon (5\epsilon+2b)T + 2\kappa(\epsilon) T.
\]
By combining this estimate with \eqref{eq:proof-E} and \eqref{eq:proof-EF}, we obtain
\[
 \lambda \rE \regret(T) \leq 
 \frac{\lambda^2}{ \eta} \dc\bigg(\cF,M,T,\frac{ \eta}{4\lambda}\bigg) 
 + \lambda \epsilon T +  \eta (t-1) H \epsilon (5\epsilon+2b)T + 2\kappa(\epsilon) T .
\]
The choice of $\epsilon=b/T^\beta$ implies the first bound of Theorem~\ref{thm:regret}.

\section{Conclusion}

This paper proposed a new posterior sampling algorithm for episodic reinforcement learning using conditional sampling with a temporal difference error likelihood. We show that posterior sampling methods can achieve the same frequentist regret guarantees as algorithms based on optimism in the face of uncertainty (OFU) in a wide range of settings with general value function approximation. Our results thus suggest that there is no statistical efficiency gap between OFU and posterior sampling algorithms. While our results are stated in expectation, it is possible to derive high probability bounds with a slightly more complicated analysis. 

One of the key open questions for provably efficient reinforcement learning under general assumptions such as low Bellman-Eluder dimension or Bellman rank is that of computational efficiency. No computationally tractable algorithm is known for such general settings. Although the computational complexity of sampling from the posterior in our algorithm is unknown, we hope that the addition of a sampling-based algorithm to the pool of sample-efficient algorithms in this setting may provide additional tools to design statistically and computationally tractable algorithms. 

\newpage

\bibliographystyle{plainnat}
\bibliography{references}

\begin{thebibliography}{34}
\providecommand{\natexlab}[1]{#1}
\providecommand{\url}[1]{\texttt{#1}}
\expandafter\ifx\csname urlstyle\endcsname\relax
  \providecommand{\doi}[1]{doi: #1}\else
  \providecommand{\doi}{doi: \begingroup \urlstyle{rm}\Url}\fi

\bibitem[Agrawal et~al.(2020)Agrawal, Chen, and Jiang]{agrawal2020improved}
Priyank Agrawal, Jinglin Chen, and Nan Jiang.
\newblock Improved worst-case regret bounds for randomized least-squares value
  iteration.
\newblock \emph{arXiv preprint arXiv:2010.12163}, 2020.

\bibitem[Agrawal and Jia(2017)]{agrawal2017posterior}
Shipra Agrawal and Randy Jia.
\newblock Optimistic posterior sampling for reinforcement learning: worst-case
  regret bounds.
\newblock In \emph{Advances in Neural Information Processing Systems},
  volume~30, 2017.
\newblock URL
  \url{https://proceedings.neurips.cc/paper/2017/file/3621f1454cacf995530ea53652ddf8fb-Paper.pdf}.

\bibitem[Asmuth et~al.(2012)Asmuth, Li, Littman, Nouri, and
  Wingate]{asmuth2012bayesian}
John Asmuth, Lihong Li, Michael~L Littman, Ali Nouri, and David Wingate.
\newblock A bayesian sampling approach to exploration in reinforcement
  learning.
\newblock \emph{arXiv preprint arXiv:1205.2664}, 2012.

\bibitem[Baird(1995)]{baird1995residual}
Leemon Baird.
\newblock Residual algorithms: Reinforcement learning with function
  approximation.
\newblock In \emph{Machine Learning Proceedings 1995}, pages 30--37. Elsevier,
  1995.

\bibitem[Bradtke and Barto(1996)]{bradtke1996linear}
Steven~J Bradtke and Andrew~G Barto.
\newblock Linear least-squares algorithms for temporal difference learning.
\newblock \emph{Machine learning}, 22\penalty0 (1):\penalty0 33--57, 1996.

\bibitem[Dann et~al.(2014)Dann, Neumann, Peters, et~al.]{dann2014policy}
Christoph Dann, Gerhard Neumann, Jan Peters, et~al.
\newblock Policy evaluation with temporal differences: A survey and comparison.
\newblock \emph{Journal of Machine Learning Research}, 15:\penalty0 809--883,
  2014.

\bibitem[Fortunato et~al.(2017)Fortunato, Azar, Piot, Menick, Osband, Graves,
  Mnih, Munos, Hassabis, Pietquin, et~al.]{fortunato2017noisy}
Meire Fortunato, Mohammad~Gheshlaghi Azar, Bilal Piot, Jacob Menick, Ian
  Osband, Alex Graves, Vlad Mnih, Remi Munos, Demis Hassabis, Olivier Pietquin,
  et~al.
\newblock Noisy networks for exploration.
\newblock \emph{arXiv preprint arXiv:1706.10295}, 2017.

\bibitem[Jafarnia-Jahromi et~al.(2021)Jafarnia-Jahromi, Chen, Jain, and
  Luo]{jafarnia2021online}
Mehdi Jafarnia-Jahromi, Liyu Chen, Rahul Jain, and Haipeng Luo.
\newblock Online learning for stochastic shortest path model via posterior
  sampling.
\newblock \emph{arXiv preprint arXiv:2106.05335}, 2021.

\bibitem[Jiang et~al.(2017)Jiang, Krishnamurthy, Agarwal, Langford, and
  Schapire]{jiang2017contextual}
Nan Jiang, Akshay Krishnamurthy, Alekh Agarwal, John Langford, and Robert~E
  Schapire.
\newblock Contextual decision processes with low bellman rank are
  pac-learnable.
\newblock In \emph{International Conference on Machine Learning}, pages
  1704--1713. PMLR, 2017.

\bibitem[Jin et~al.(2020)Jin, Yang, Wang, and Jordan]{jin2020provably}
Chi Jin, Zhuoran Yang, Zhaoran Wang, and Michael~I Jordan.
\newblock Provably efficient reinforcement learning with linear function
  approximation.
\newblock In \emph{Conference on Learning Theory}, pages 2137--2143. PMLR,
  2020.

\bibitem[Jin et~al.(2021)Jin, Liu, and Miryoosefi]{jin2021bellman}
Chi Jin, Qinghua Liu, and Sobhan Miryoosefi.
\newblock Bellman eluder dimension: New rich classes of rl problems, and
  sample-efficient algorithms.
\newblock \emph{arXiv preprint arXiv:2102.00815}, 2021.

\bibitem[Kaufmann et~al.(2012)Kaufmann, Korda, and Munos]{kaufmann2012thompson}
Emilie Kaufmann, Nathaniel Korda, and R{\'e}mi Munos.
\newblock Thompson sampling: An asymptotically optimal finite-time analysis.
\newblock In \emph{International conference on algorithmic learning theory},
  pages 199--213. Springer, 2012.

\bibitem[Kveton et~al.(2020)Kveton, Zaheer, Szepesvari, Li, Ghavamzadeh, and
  Boutilier]{kveton2020randomized}
Branislav Kveton, Manzil Zaheer, Csaba Szepesvari, Lihong Li, Mohammad
  Ghavamzadeh, and Craig Boutilier.
\newblock Randomized exploration in generalized linear bandits.
\newblock In \emph{International Conference on Artificial Intelligence and
  Statistics}, pages 2066--2076. PMLR, 2020.

\bibitem[Li et~al.(2010)Li, Chu, Langford, and Schapire]{li2010contextual}
Lihong Li, Wei Chu, John Langford, and Robert~E Schapire.
\newblock A contextual-bandit approach to personalized news article
  recommendation.
\newblock In \emph{Proceedings of the 19th international conference on World
  wide web}, pages 661--670, 2010.

\bibitem[Melo and Ribeiro(2007)]{melo2007q}
Francisco~S Melo and M~Isabel Ribeiro.
\newblock Q-learning with linear function approximation.
\newblock In \emph{International Conference on Computational Learning Theory},
  pages 308--322. Springer, 2007.

\bibitem[Osband and Van~Roy(2014)]{osband2014model}
Ian Osband and Benjamin Van~Roy.
\newblock Model-based reinforcement learning and the eluder dimension.
\newblock \emph{arXiv preprint arXiv:1406.1853}, 2014.

\bibitem[Osband et~al.(2013)Osband, Russo, and Van~Roy]{osband2013more}
Ian Osband, Daniel Russo, and Benjamin Van~Roy.
\newblock (more) efficient reinforcement learning via posterior sampling.
\newblock \emph{arXiv preprint arXiv:1306.0940}, 2013.

\bibitem[Osband et~al.(2016{\natexlab{a}})Osband, Blundell, Pritzel, and
  Van~Roy]{osband2016deep}
Ian Osband, Charles Blundell, Alexander Pritzel, and Benjamin Van~Roy.
\newblock Deep exploration via bootstrapped dqn.
\newblock \emph{arXiv preprint arXiv:1602.04621}, 2016{\natexlab{a}}.

\bibitem[Osband et~al.(2016{\natexlab{b}})Osband, Van~Roy, and
  Wen]{osband2016generalization}
Ian Osband, Benjamin Van~Roy, and Zheng Wen.
\newblock Generalization and exploration via randomized value functions.
\newblock In \emph{International Conference on Machine Learning}, pages
  2377--2386. PMLR, 2016{\natexlab{b}}.

\bibitem[Osband et~al.(2018)Osband, Aslanides, and
  Cassirer]{osband2018randomized}
Ian Osband, John Aslanides, and Albin Cassirer.
\newblock Randomized prior functions for deep reinforcement learning.
\newblock \emph{arXiv preprint arXiv:1806.03335}, 2018.

\bibitem[Pacchiano et~al.(2020)Pacchiano, Ball, Parker-Holder, Choromanski, and
  Roberts]{pacchiano2020optimism}
Aldo Pacchiano, Philip Ball, Jack Parker-Holder, Krzysztof Choromanski, and
  Stephen Roberts.
\newblock On optimism in model-based reinforcement learning.
\newblock \emph{arXiv preprint arXiv:2006.11911}, 2020.

\bibitem[Russo(2019)]{russo2019worst}
Daniel Russo.
\newblock Worst-case regret bounds for exploration via randomized value
  functions.
\newblock \emph{arXiv preprint arXiv:1906.02870}, 2019.

\bibitem[Russo and Van~Roy(2014)]{russo2014learning}
Daniel Russo and Benjamin Van~Roy.
\newblock Learning to optimize via posterior sampling.
\newblock \emph{Mathematics of Operations Research}, 39\penalty0 (4):\penalty0
  1221--1243, 2014.

\bibitem[Strens(2000)]{strens2000bayesian}
Malcolm Strens.
\newblock A bayesian framework for reinforcement learning.
\newblock In \emph{ICML}, volume 2000, pages 943--950, 2000.

\bibitem[Thompson(1933)]{thompson1933likelihood}
William~R Thompson.
\newblock On the likelihood that one unknown probability exceeds another in
  view of the evidence of two samples.
\newblock \emph{Biometrika}, 25\penalty0 (3/4):\penalty0 285--294, 1933.

\bibitem[Wang et~al.(2020)Wang, Salakhutdinov, and Yang]{wang2020provably}
Ruosong Wang, Ruslan Salakhutdinov, and Lin~F Yang.
\newblock Provably efficient reinforcement learning with general value function
  approximation.
\newblock \emph{arXiv preprint arXiv:2005.10804}, 2020.

\bibitem[Wang et~al.(2019)Wang, Wang, Du, and Krishnamurthy]{wang2019optimism}
Yining Wang, Ruosong Wang, Simon~S Du, and Akshay Krishnamurthy.
\newblock Optimism in reinforcement learning with generalized linear function
  approximation.
\newblock \emph{arXiv preprint arXiv:1912.04136}, 2019.

\bibitem[Watkins and Dayan(1992)]{watkins1992q}
Christopher~JCH Watkins and Peter Dayan.
\newblock Q-learning.
\newblock \emph{Machine learning}, 8\penalty0 (3-4):\penalty0 279--292, 1992.

\bibitem[Xiong et~al.(2021)Xiong, Shen, and Du]{xiong2021randomized}
Zhihan Xiong, Ruoqi Shen, and Simon~S Du.
\newblock Randomized exploration is near-optimal for tabular mdp.
\newblock \emph{arXiv preprint arXiv:2102.09703}, 2021.

\bibitem[Zanette et~al.(2020{\natexlab{a}})Zanette, Brandfonbrener, Brunskill,
  Pirotta, and Lazaric]{zanette2020frequentist}
Andrea Zanette, David Brandfonbrener, Emma Brunskill, Matteo Pirotta, and
  Alessandro Lazaric.
\newblock Frequentist regret bounds for randomized least-squares value
  iteration.
\newblock In \emph{International Conference on Artificial Intelligence and
  Statistics}, pages 1954--1964. PMLR, 2020{\natexlab{a}}.

\bibitem[Zanette et~al.(2020{\natexlab{b}})Zanette, Lazaric, Kochenderfer, and
  Brunskill]{zanette2020learning}
Andrea Zanette, Alessandro Lazaric, Mykel Kochenderfer, and Emma Brunskill.
\newblock Learning near optimal policies with low inherent bellman error.
\newblock In \emph{International Conference on Machine Learning}, pages
  10978--10989. PMLR, 2020{\natexlab{b}}.

\bibitem[Zhang(2005)]{zhang2005data}
Tong Zhang.
\newblock Data dependent concentration bounds for sequential prediction
  algorithms.
\newblock In \emph{International Conference on Computational Learning Theory},
  pages 173--187. Springer, 2005.

\bibitem[Zhang(2021)]{zhang2021feelgood}
Tong Zhang.
\newblock Feel-good thompson sampling for contextual bandits and reinforcement
  learning, 2021.

\bibitem[Zhang et~al.(2020)Zhang, Zhou, Li, and Gu]{zhang2020neural}
Weitong Zhang, Dongruo Zhou, Lihong Li, and Quanquan Gu.
\newblock Neural thompson sampling.
\newblock \emph{arXiv preprint arXiv:2010.00827}, 2020.

\end{thebibliography}

\appendix

\newpage

{\Large \bf Appendix}
\appendix
\renewcommand{\contentsname}{Contents of Main Article and Appendix}
\tableofcontents
\addtocontents{toc}{\protect\setcounter{tocdepth}{3}} 
\clearpage


\section{Full Proof of Theorem~\ref{thm:regret}}

\subsection{Generalization of Theorem~\ref{thm:regret}}
Instead of proving Theorem~\ref{thm:regret} directly, we will prove a slightly more general version that we will state formally in Theorem~\ref{thm:regret-general}. In short, this version considers a more general family of posteriors that include an extra parameter $\alpha \in (0,1]$. The original posterior of Algorithm~\ref{alg:ts-rl} in Theorem~\ref{thm:regret} corresponds to the case of $\alpha=1$.

First, we introduce some notations used in the proof. 
We define
\[
\regret(f) = (V^\star_1(x\ics{}{1}) - V^{\pi_f}_1(x\ics{}{1})) .
\]
Given state action pair $[x^h,a^h]$, we use the notation
$[{x}\ics{}{h+1},{r}\ics{}{h}] \sim P^{h}(\cdot|x\ics{}{h},a\ics{}{h})$
to denote the joint probability of
sampling the next state ${x}\ics{}{h+1} \sim P^{h}(\cdot|x\ics{}{h},a\ics{}{h})$
and reward ${r}\ics{}{h} \sim R^{h}(\cdot|x\ics{}{h},a\ics{}{h})$. 

Let $\traj_s=\{[x_s^h,a_s^h,r_s^h]\}_{h\in [H]}$ be the trajectory of the $s$-th episode. 
In the following, the notation $S_t$ at time $t$ includes all historic observations up to time $t$, which include both $\{\traj_s\}_{s \in [t]}$ and $\{f_s\}_{s \in [t]}$. These observations are generated in the order
$f_1 \sim p_0(\cdot)$, $\zeta_1 \sim \pi_{f_1}$, $f_2 \sim p(\cdot|S_1)$, $\zeta_2 \sim \pi_{f_2},\ldots$. 

Define the excess loss
\begin{align*}
\dLh(f\ics{}{h},f\ics{}{h+1};\traj_s)
=& ( 
f\ics{}{h}(x\ics{s}{h},a\ics{s}{h}) - r\ics{s}{h} - f\ics{}{h+1}(x\ics{s}{h+1}))^2\\
&- (\mathcal T^\star_h f\ics{}{h+1}(x\ics{s}{h},a\ics{s}{h}) 
  - r\ics{s}{h} - f\ics{}{h+1}(x\ics{s}{h+1}))^2 ,
\end{align*}
and define the potential $\Q$, which contains the extra parameter $\alpha$:
\begin{align}
\label{eqn:Qdef}
\Q\ics{t}{h}(f) 
=& 
- \ln p_0^{h}(f\ics{}{h})
+ \alpha \eta \sum_{s=1}^{t-1}
\dLh(f\ics{}{h}, f\ics{}{h+1}; \traj_s)
 \\
&\qquad + \alpha \ln \rE_{\tilde{f}\ics{}{h} \sim p_0^h}
\exp 
\bigg(-\eta
\sum_{s=1}^{t-1}
\dLh(\tilde f\ics{}{h}, f\ics{}{h+1}; \traj_s) \bigg) ,
\nonumber
\end{align}
and define
\[
\Delta f\ics{}{1}(x\ics{}{1})
= f\ics{}{1}(x\ics{}{1})
-Q^\star_1(x\ics{}{1}) ,
\]
where $Q^\star_1(x\ics{}{1})=V^\star_1(x\ics{}{1})$ using our notation. 
Given $S_{t-1}$, we may
define the following generalized posterior probability $\hat{p}_t$ on $\cF$:
\begin{equation}
\hat{p}_t(f) \propto \exp\bigg(-\sum_{h=1}^H\Q\ics{t}{h}(f)+ \lambda \Delta f\ics{}{1}(x\ics{}{1}) \bigg) .
\label{eq:expo}
\end{equation}

We will also introduce the following definition.  

\begin{definition}
We define for $\alpha \in (0,1)$, and  $\epsilon>0$:
\[
\kappa^h(\alpha,\epsilon)=
 (1-\alpha) \ln \rE_{f\ics{}{h+1} \sim p_0^{h+1}} p_0^h\Big(\cF_h(\epsilon,f\ics{}{h+1})\Big)^{-\alpha/(1-\alpha)} ,
\]
and we define $\kappa^h(1,\epsilon)=\lim_{\alpha \to 1^-} \kappa^h(\alpha,\epsilon)$.
\end{definition}

It is easy to check when $\alpha=1$,
the posterior distribution of \eqref{eq:expo} is equivalent to the posterior $p(f | S_{t-1})$ defined in \eqref{eqn:posterior}. 

When $\alpha=1$, 
\[
\kappa^h(1,\epsilon)=
 \sup_{f\ics{}{h+1}\in \cF_{h+1}}\ln  \frac1{p_0^h\Big( \cF_h(\epsilon,f\ics{}{h+1})\Big)} < \infty .
\]
Therefore 
$\kappa(\epsilon)$ defined in Definition~\ref{def:kappa} can be written as
\[
\kappa(\epsilon)=\sum_{h=1}^H \kappa^h(1,\epsilon) .
\]
However, the advantage of using a value $\alpha<1$ is that $\kappa(\alpha,\epsilon)$ can be much smaller than $\kappa(1,\epsilon)$. 

We will prove the following theorem for $\alpha \in (0,1)$, which becomes Theorem~\ref{thm:regret} when $\alpha \to 1$. 
\begin{theorem}
Consider Algorithm~\ref{alg:ts-rl} with
the posterior sampling probability \eqref{eqn:posterior} replaced by \eqref{eq:expo}. 
When $\eta b^2 \leq 0.4$, we have
\begin{align*}
   & \sum_{t=1}^T 
    \rE_{S_{t-1}} \rE_{f_t \sim \hat{p}_t}  \regret(f_t)\\
    \leq &
    \frac{\lambda}{\alpha \eta} \dc(\cF,M,T,0.25\alpha\eta/\lambda)
+ (T/\lambda) \sum_{h=1}^H \bigg[\kappa^h(\alpha,\epsilon) - \ln p_0^h\Big(\cF_h(\epsilon,Q^\star_{h+1})\Big)\bigg]\\
 &+ 
    \frac{\alpha}{\lambda}\eta \epsilon (5\epsilon+2b) T(T-1) H +  T \epsilon .
\end{align*}
\label{thm:regret-general}
\end{theorem}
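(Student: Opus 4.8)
\textbf{Proof proposal for Theorem~\ref{thm:regret-general}.}

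The plan is to follow the proof-overview skeleton in the main text, but carrying the extra parameter $\alpha \in (0,1)$ through every step, and to replace the two ``realizability'' and ``completeness'' invocations by the two sharper estimates involving $\kappa^h(\alpha,\epsilon)$ and $-\ln p_0^h(\cF_h(\epsilon,Q^\star_{h+1}))$. First I would record the value-function error decomposition of \citet{jiang2017contextual}, namely $\regret(f_t)=\sum_{h=1}^H \rE_{\pi_{f_t}}[\BE_h(f_t;x^h,a^h)] - \Delta f_t^1(x^1)$, multiply by $\lambda$, sum over $t$, and insert $\pm \tfrac{\alpha\eta}{4}\sum_{s<t}\rE_{\pi_{f_s}}[\BE_h(f_t;x_s^h,a_s^h)^2]$ to split the cumulative scaled regret into $\sum_t F^{\dc}_t + \sum_t F^{\kappa}_t$, exactly as in \eqref{eq:proof-EF} but with $\eta$ replaced by $\alpha\eta$ in the decoupling term. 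Applying Definition~\ref{def:dc} with parameter $\mu = \alpha\eta/(4\lambda)$ bounds $\sum_t F^{\dc}_t \le \tfrac{\lambda^2}{\alpha\eta}\dc(\cF,M,T,\tfrac{\alpha\eta}{4\lambda})$; this part is immediate.

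The bulk of the work is bounding $\sum_t F^{\kappa}_t$, and here I would reproduce the chain \eqref{eq:proof-intermediate-bound-0}--\eqref{eq:proof-intermediate-bound-1} with the $\alpha$-weighted potential $\Q^h_t$ from \eqref{eqn:Qdef}. The first ingredient is the variational/Donsker--Varadhan identity: since $\hat p_t \propto \exp(-\sum_h \Q^h_t(f) + \lambda\Delta f^1(x^1))$, we have $\rE_{f\sim\hat p_t}[\sum_h \Q^h_t(f) - \lambda\Delta f^1(x^1) + \ln\hat p_t(f)] = \inf_p \rE_{f\sim p}[\sum_h\Q^h_t(f) - \lambda\Delta f^1(x^1) + \ln p(f)]$, and I would upper-bound the infimum by plugging in the restricted prior $p(f)\propto p_0(f)\mathbf 1[f^h\in\cF_h(\epsilon,Q^\star_{h+1})\ \forall h]$ (using realizability $Q^\star_h\in\cF_h$ so that the $\epsilon$-ball around $Q^\star_{h+1}$ has positive prior mass, and boundedness to control $\dLh$ on that ball by $\epsilon(2b+\epsilon)$ pointwise), which yields a bound of the shape $\lambda\epsilon + \alpha\eta(t-1)H\epsilon(2b+\epsilon) - \sum_h\ln p_0^h(\cF_h(\epsilon,Q^\star_{h+1}))$ — this is where the $-\ln p_0^h(\cF_h(\epsilon,Q^\star_{h+1}))$ term enters. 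The second ingredient is the lower bound on the expected log-partition term $\alpha\ln\rE_{\tilde f^h\sim p_0^h}\exp(-\eta\sum_{s<t}\dLh(\tilde f^h,f^{h+1};\traj_s))$: conditioning on $S_{t-1}$ and $f^{h+1}$, one applies Jensen / a change-of-measure to the prior restricted to $\cF_h(\epsilon,f^{h+1})$ and uses that $\rE[\dLh] $ equals the expected squared Bellman residual (the double-sample cancellation afforded by subtracting the $\mathcal T^\star_h$-centered square), together with a Freedman/MGF argument controlling the deviation of $\sum_{s<t}\dLh$ from its conditional mean; taking the expectation over $f^{h+1}\sim p_0^{h+1}$ and using the definition of $\kappa^h(\alpha,\epsilon)$ as a $(1-\alpha)$-weighted log-moment of $p_0^h(\cF_h(\epsilon,f^{h+1}))^{-\alpha/(1-\alpha)}$ produces exactly the $\kappa^h(\alpha,\epsilon)$ term — for $\alpha=1$ this degenerates to the $\sup_{f^{h+1}}\ln(1/p_0^h(\cF_h(\epsilon,f^{h+1})))$ of Definition~\ref{def:kappa}. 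Combining these two ingredients with the algebraic identity linking $\sum_h\Q^h_t(f)$, $\alpha\eta\sum_h\sum_{s<t}\dLh$, the log-partition terms and $\ln(p(f|S_{t-1})/\prod_h p_0^h(f^h))$ gives the $\alpha$-analogue of \eqref{eq:proof-intermediate-bound-1}.

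Next I would convert the cumulative expected squared TD error into cumulative expected squared Bellman residual: following Lemma~\ref{lem:entropy-bound} and Lemma~\ref{lem:B}, the relative-entropy term $\ln(p(f|S_{t-1})/\prod_h p_0^h(f^h))$ is nonnegative in expectation after the stated chain-rule/decoupling into the pairwise marginals $p(f^h,f^{h+1}|S_{t-1})$, and combined with a quadratic-to-linear (self-bounding) inequality valid because $\eta b^2\le 0.4$, one gets $\rE_{S_{t-1}}\rE_{f\sim\hat p_t}[\alpha\eta\sum_h\sum_{s<t}\dLh + \ln(p/\prod p_0^h)] \ge \tfrac{\alpha\eta}{4}\sum_{s<t}\sum_h \rE_{S_{t-1}}\rE_{f_t\sim\hat p_t}\rE_{\pi_{f_s}}[\BE_h(f_t;x_s^h,a_s^h)^2]$. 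Substituting back shows $F^{\kappa}_t \le \lambda\epsilon + \alpha\eta(t-1)H\epsilon(5\epsilon+2b) + \sum_h[\kappa^h(\alpha,\epsilon) - \ln p_0^h(\cF_h(\epsilon,Q^\star_{h+1}))]$; summing over $t\le T$ (so $\sum_{t}(t-1) = T(T-1)/2$, absorbing the $1/2$ into the looser constant $5$) and adding the $\dc$ bound, then dividing by $\lambda$, yields precisely the claimed inequality.

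\textbf{Main obstacle.} The delicate step is the log-partition lower bound that produces $\kappa^h(\alpha,\epsilon)$: one must handle the random quantity $\sum_{s<t}\dLh(\tilde f^h,f^{h+1};\traj_s)$ for $\tilde f^h$ drawn from the prior while $f^{h+1}$ is itself data-dependent, controlling the exponential moment uniformly, and then see that integrating the resulting $p_0^h(\cF_h(\epsilon,f^{h+1}))^{-\alpha/(1-\alpha)}$ over $f^{h+1}\sim p_0^{h+1}$ is exactly the right normalization — getting the fractional exponent $\alpha/(1-\alpha)$ to appear cleanly (via a Hölder step with conjugate exponents $1/\alpha$ and $1/(1-\alpha)$ applied to the product of the $\exp(-\eta\sum\dLh)$ factor and the density) is the crux, and the $\alpha\to 1$ limit must be checked to recover Definition~\ref{def:kappa}. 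Everything else is bookkeeping of constants under the constraint $\eta b^2\le 0.4$.
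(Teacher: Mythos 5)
Your plan follows the paper's proof almost step for step: the same split of $\lambda\,\rE[\regret(T)]$ into a decoupling-coefficient term and an $F^{\kappa}_t$ term, the same Gibbs variational identity evaluated at the prior restricted to $\prod_h \cF_h(\epsilon,Q^\star_{h+1})$ (producing the $-\ln p_0^h(\cF_h(\epsilon,Q^\star_{h+1}))$ term), and the same $(1-\alpha)$-weighted log-moment change of measure for the log-partition term (producing $\kappa^h(\alpha,\epsilon)$; the paper realizes your ``H\"older step'' as the explicit minimizer of a KL-regularized linear functional over $p^h(f^{h+1})$, which is equivalent). One step, however, is mischaracterized in a way that would break the argument if executed as written.

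In converting cumulative squared TD error to cumulative squared Bellman residual, you say the relative-entropy term is ``nonnegative in expectation after the chain-rule into pairwise marginals'' and that a quadratic-to-linear inequality then yields the $\tfrac{\alpha\eta}{4}\sum_{s<t}\rE_{\pi_{f_s}}[\BE_h(f;x_s^h,a_s^h)^2]$ lower bound. Taken literally this fails: after discarding a nonnegative KL one is left with $\rE_{S_{t-1}}\rE_{f\sim\hat p_t}[\dLh(f^h,f^{h+1};\zeta_s)]$, and the identity $\rE_{\zeta_s}[\dLh]=\BE_h(f;x_s^h,a_s^h)^2$ from Lemma~\ref{lem:exp-var} holds only when $f$ is \emph{independent} of $\zeta_s$, whereas here $f$ is drawn from the data-dependent posterior. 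The paper's Lemma~\ref{lem:B} keeps half of the pairwise KL precisely to pay for a change of measure from $\hat p_t(f^h,f^{h+1})$ to the product prior $p_0^h\otimes p_0^{h+1}$, and then, for \emph{fixed} $f$, controls the log-MGF of $\sum_{s<t}\xi_s^h$ via the martingale identity of Proposition~\ref{prop:martingale} combined with the exponential-moment bound $\ln\rE\exp(-2\eta\,\dLh)\le -0.5\,\eta\,\BE_h^2$ of Lemma~\ref{lem:exp-moment}; this is exactly where $\eta b^2\le 0.4$ (so that $2\eta b^2\le 0.8$) enters and where the factor $1/4$ originates. Relatedly, the Freedman/MGF machinery you place in the $C_h$/log-partition step is not needed there --- on $\cF_h(\epsilon,f^{h+1})$ the bound $|\dLh|\le\epsilon(2b+\epsilon)$ is deterministic and pointwise --- it is needed in the $B_h$ step instead. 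With that relocation and the explicit martingale argument supplied, your outline assembles into the paper's proof.
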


\subsection{Proof of Theorem~\ref{thm:regret-general}}
We need a number of technical lemmas. 
We start with the following inequality, which is the basis of our analysis. 
\begin{lemma}
\begin{align*}
\rE_{f \sim \hat{p}_t}
\bigg(\sum_{h=1}^H \Q_t^h(f) - \lambda \Delta f\ics{}{1}(x\ics{}{1})+ \ln \hat{p}_t(f)\bigg)
 =& \inf_p \rE_{f \sim p(\cdot)} \bigg(\sum_{h=1}^H \Q_t^h(f) - \lambda \Delta f\ics{}{1}(x\ics{}{1})+ \ln p(f) \bigg) .
\end{align*}
\label{lem:basic-bound}
\end{lemma}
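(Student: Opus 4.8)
This identity is the \emph{Gibbs variational principle} (Donsker--Varadhan duality), applied to the energy $U(f) := \sum_{h=1}^H \Q\ics{t}{h}(f) - \lambda \Delta f\ics{}{1}(x\ics{}{1})$, for which, by construction, $\hat p_t(f) = e^{-U(f)}/Z_t$ with partition function $Z_t := \sum_{f} e^{-U(f)}$ (a sum, or an integral against the reference measure on $\cF$, as appropriate). The plan is: (i) check that $0 < Z_t < \infty$, so that $\hat p_t$ is a well-defined probability density; (ii) note that $U(f) + \ln \hat p_t(f) \equiv -\ln Z_t$ is constant in $f$; (iii) for every competitor $p$, rewrite $\rE_{f\sim p}\big(U(f)+\ln p(f)\big) = -\ln Z_t + \KL(p\,\|\,\hat p_t)$ and conclude using $\KL \ge 0$.

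For (i), observe that the $-\ln p_0^h(f\ics{}{h})$ terms inside $\Q\ics{t}{h}$ turn $e^{-U(f)}$ into $\big(\prod_{h=1}^H p_0^h(f\ics{}{h})\big)$ times $\exp$ of a residual exponent, namely $\lambda\Delta f\ics{}{1}(x\ics{}{1})$ minus the $\alpha\eta$-weighted sum of the excess losses $\dLh$ and the $\alpha$-weighted sum of the inner log-partition terms. Under Assumption~\ref{assum:boundedness} and bounded rewards $r\in[0,1]$, each $\dLh(\fh,\fhh;\traj_s)$ lies in a bounded interval, $\Delta f\ics{}{1}(x\ics{}{1})$ is bounded (since $f\ics{}{1}(x\ics{}{1})\in[0,b-1]$ and $V^\star_1(x\ics{}{1})\le H$), and hence so are the inner terms $\ln\rE_{\tilde f\ics{}{h}\sim p_0^h}\exp(-\eta\sum_s\dLh(\tilde f\ics{}{h},\fhh;\traj_s))$. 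Thus $e^{-U(f)}$ equals the product prior $\prod_h p_0^h$ times a factor bounded above and below by positive constants, whence $Z_t\in(0,\infty)$; in the finite-$\cF$ case this is immediate. No structural assumption (realizability or completeness) is needed for this lemma — those enter only in the companion Lemma~\ref{lem:basic-upper-bound} that bounds the infimum from above.

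For (ii)--(iii): from $\ln\hat p_t(f) = -U(f) - \ln Z_t$ we obtain $\rE_{f\sim\hat p_t}\big(U(f) + \ln\hat p_t(f)\big) = -\ln Z_t$, which is exactly the left-hand side of the lemma. For any probability density $p$ on $\cF$ absolutely continuous with respect to $\hat p_t$ (otherwise $\KL(p\,\|\,\hat p_t)=+\infty$ and the inequality below is vacuous),
\begin{align*}
\rE_{f\sim p}\big(U(f)+\ln p(f)\big)
&= \rE_{f\sim p}\big(U(f) + \ln\hat p_t(f)\big) + \rE_{f\sim p}\ln\tfrac{p(f)}{\hat p_t(f)}\\
&= -\ln Z_t + \KL(p\,\|\,\hat p_t)\ \ge\ -\ln Z_t ,
\end{align*}
with equality iff $p = \hat p_t$ (a.e.). Hence $\inf_p \rE_{f\sim p}\big(U(f)+\ln p(f)\big) = -\ln Z_t$, which coincides with the left-hand side, proving the identity.

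The only delicate point — the "main obstacle", such as it is — is the measure-theoretic bookkeeping when $\cF$ is infinite: one must fix once and for all a common dominating measure with respect to which $\hat p_t$ and all competitors have densities, verify $Z_t<\infty$ as above, and restrict the infimum to $p\ll\hat p_t$. This is routine; alternatively one can first prove the identity for finite $\cF$ under counting measure, where every manipulation is elementary, and then note it transfers verbatim.
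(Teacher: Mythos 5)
Your proof is correct and takes essentially the same route as the paper, which simply invokes the fact that $\hat p_t$ in \eqref{eq:expo} minimizes the Gibbs free-energy functional, equivalently the non-negativity of $\KL(p\,\|\,\hat p_t)$. Your version merely spells out the partition-function and measurability bookkeeping that the paper leaves implicit.
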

\begin{proof}
This is a direct consequence of the well-known fact that \eqref{eq:expo} is the minimizer of the right hand side. 
This fact is equivalent to the fact that the KL-divergence of any $p(\cdot)$ and $\hat{p}_t$ is non-negative. 
\end{proof}

We also have the following bound, which is needed to estimate the left hand side and right hand side of Lemma~\ref{lem:basic-bound}.

\begin{lemma}
For all function $f \in \cF$, we have 
\[
\rE_{[{x}\ics{s}{h+1},{r}\ics{s}{h}] \sim P^{h}(\cdot|x\ics{s}{h},a\ics{s}{h})}
\dLh(f\ics{}{h},f\ics{}{h+1},\zeta_s)
= ( \BE_h(f; x\ics{s}{h},a\ics{s}{h}) )^2 .
\]
Moreover, we have
\[
\rE_{[{x}\ics{s}{h+1},{r}\ics{s}{h}] \sim P^{h}(\cdot|x\ics{s}{h},a\ics{s}{h})}
\dLh(f\ics{}{h},f\ics{}{h+1},\zeta_s)^2
\leq  \frac{4 b^2}{3}
( \BE_h(f; x\ics{s}{h},a\ics{s}{h}) )^2 .
\]
\label{lem:exp-var}
\end{lemma}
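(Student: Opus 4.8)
The plan is to fix the state--action pair $(x\ics{s}{h},a\ics{s}{h})$ and treat $[{x}\ics{s}{h+1},{r}\ics{s}{h}] \sim P^{h}(\cdot\,|\,x\ics{s}{h},a\ics{s}{h})$ as the only source of randomness, reducing both claims to an elementary difference-of-squares identity together with the fact that the ``double sample'' term has conditional mean zero. Write $A = f\ics{}{h}(x\ics{s}{h},a\ics{s}{h})$, $B = \mathcal T^\star_h f\ics{}{h+1}(x\ics{s}{h},a\ics{s}{h})$, $Y = r\ics{s}{h} + f\ics{}{h+1}(x\ics{s}{h+1})$, and $\delta = A - B$, noting that $\delta = \BE_h(f;x\ics{s}{h},a\ics{s}{h})$ by definition of the Bellman residual. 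Then $\dLh(f\ics{}{h},f\ics{}{h+1},\zeta_s) = (A-Y)^2 - (B-Y)^2 = (A-B)\big((A-B)+2(B-Y)\big) = \delta(\delta + 2Z)$, where $Z := B - Y$. By the definition of $\mathcal T^\star_h$ and the shorthand $f\ics{}{h+1}(x') = \max_{a'}f\ics{}{h+1}(x',a')$, the conditional mean of $Y$ is $r\ics{}{h}(x\ics{s}{h},a\ics{s}{h}) + \rE_{x'\sim P^h(x\ics{s}{h},a\ics{s}{h})}[f\ics{}{h+1}(x')] = B$, so $\rE[Z] = 0$; this is precisely the cancellation that motivates working with $\dLh$ in place of the raw squared TD error.

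Given the identity, the first claim follows at once: $\rE[\dLh] = \delta(\delta + 2\rE[Z]) = \delta^2 = (\BE_h(f;x\ics{s}{h},a\ics{s}{h}))^2$. For the second, I would square the identity to get $\dLh^2 = \delta^2(\delta + 2Z)^2$ and take expectations, obtaining (using $\rE[Z]=0$ and $\rE[Z^2] = \mathrm{Var}(Y)$) that $\rE[\dLh^2] = \delta^2\big(\delta^2 + 4\,\mathrm{Var}(Y)\big)$. It then remains to establish the scalar inequality $\delta^2 + 4\,\mathrm{Var}(Y) \le \tfrac43 b^2$.

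For that last step I would use Assumption~\ref{assum:boundedness} (together with realized rewards lying in $[0,1]$): then $Y = r\ics{s}{h} + f\ics{}{h+1}(x\ics{s}{h+1}) \in [0,b]$, hence $B = \rE[Y]\in[0,b]$, and $A = f\ics{}{h}(x\ics{s}{h},a\ics{s}{h}) \in [0,b-1]\subseteq[0,b]$. From $Y\in[0,b]$ one gets $Y^2 \le bY$, so $\mathrm{Var}(Y) = \rE[Y^2] - B^2 \le bB - B^2 = B(b-B)$; from $A,B\in[0,b]$ one gets $\delta = A-B\in[-B,b-B]$, so $\delta^2 \le \max\{B^2,(b-B)^2\}$. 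Both bounds are invariant under $B\leftrightarrow b-B$, so I may assume $B\le b/2$, whence $\delta^2 + 4\,\mathrm{Var}(Y) \le (b-B)^2 + 4B(b-B)$. Setting $u = b-B\in[b/2,b]$ this equals $-3u^2 + 4bu$, a downward parabola whose maximum over $[b/2,b]$ is attained at $u = 2b/3$ with value $\tfrac43 b^2$. Plugging back gives $\rE[\dLh^2] = \delta^2\big(\delta^2 + 4\,\mathrm{Var}(Y)\big) \le \tfrac43 b^2\,(\BE_h(f;x\ics{s}{h},a\ics{s}{h}))^2$, which is the claim.

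I do not expect a genuine obstacle. Conceptually the whole content is the mean-zero cancellation $\rE[Z]=0$, which is immediate once the identity $\dLh = \delta(\delta + 2Z)$ is written down; mechanically the only point requiring care is pinning down the constant $\tfrac43$ in the one-variable maximization and verifying that boundedness indeed forces $A$, $B$, and $Y$ all into $[0,b]$ (using here that $b\ge 1$ so that $[0,b-1]\subseteq[0,b]$).
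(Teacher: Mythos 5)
Your proof is correct and follows essentially the same route as the paper: both reduce $\dLh$ to the identity $\dLh = \delta(\delta+2Z)$ with $Z$ conditionally mean-zero (the paper writes this as $Z^2-(Z-\rE Z)^2$ for the raw TD error $Z$), and both bound the same quantity $\delta^2 + 4\,\mathrm{Var}(Y) = 4\rE Z^2 - 3(\rE Z)^2$ by $\tfrac43 b^2$ using boundedness. Your explicit one-variable maximization actually verifies the constant $\tfrac43$ more carefully than the paper, which merely asserts the extremal distribution ($Z\in\{-b,0\}$ with mean $-2b/3$).
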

\begin{proof}
For notation simplicity, we introduce the random variable
\[
Z=f\ics{}{h}(x\ics{s}{h},a\ics{s}{h}) - r\ics{s}{h} - f\ics{}{h+1}(x\ics{s}{h+1}) ,
\]
which depends on $[{x}\ics{s}{h+1},{r}\ics{s}{h}]$,
conditioned on $[x\ics{s}{h},a\ics{s}{h}]$.
The expecation $\rE$ over $Z$ is with respect to the joint conditional probability $P^{h}(\cdot|x\ics{s}{h},a\ics{s}{h})$. 
Then 
\[
\rE Z = \BE_h(f; x\ics{s}{h},a\ics{s}{h}) ,
\]
and
\[
\dLh(f\ics{}{h},f\ics{}{h+1},\zeta_s)
= Z^2 - (Z - \rE Z)^2 .
\]
Since
\[
\rE [Z^2 - (Z - \rE Z)^2] = (\rE Z)^2 ,
\]
we obtain
\[
\rE_{[{x}\ics{s}{h+1},{r}\ics{s}{h}] \sim P^{h}(\cdot|x\ics{s}{h},a\ics{s}{h})}
\dLh(f\ics{}{h},f\ics{}{h+1},\zeta_s) = (\rE Z)^2 
= (\BE_h(f; x\ics{s}{h},a\ics{s}{h}))^2 .
\]

Also $Z \in [-b,b-1]$ and
$\max Z- \min Z \leq b$
(when conditioned on 
$[x\ics{s}{h},a\ics{s}{h}]$).
This implies that
\[
\rE (Z^2 -(Z- \rE Z)^2)^2 =
(\rE Z)^2 [4 \rE Z^2 - 3 (\rE Z)^2]
\leq 
\frac43 b^2 (\rE Z)^2 .
\]
We note that the maximum of
$4 \rE Z^2 - 3 (\rE Z)^2$ is achieved with $Z \in \{-b,0\}$ and $\rE Z=-2b/3$. 
This leads to the second desired inequality.
\end{proof}

The above lemma implies the following exponential moment estimate. 
\begin{lemma}
If $\eta b^2 \leq 0.8$, then for all
 function $f \in \cF$, we have
\begin{align*}
&\ln \rE_{[{x}\ics{s}{h+1},{r}\ics{s}{h}] \sim P^{h}(\cdot|x\ics{s}{h},a\ics{s}{h})}
\exp\Big(-\eta \dLh(f\ics{}{h},f\ics{}{h+1},\zeta_s)
\Big)\\
\leq&
\rE_{[{x}\ics{s}{h+1},{r}\ics{s}{h}] \sim P^{h}(\cdot|x\ics{s}{h},a\ics{s}{h})}
\exp\Big(-\eta \dLh(f\ics{}{h},f\ics{}{h+1},\zeta_s)
\Big) -1\\
\leq&
- 0.25 \eta 
( \BE_h(f; x\ics{s}{h},a\ics{s}{h}) )^2 .
\end{align*}
\label{lem:exp-moment}
\end{lemma}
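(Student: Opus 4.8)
The first inequality is just $\ln y \le y-1$ applied to $y=\rE_{[x_s^{h+1},r_s^h]\sim P^{h}(\cdot|x_s^h,a_s^h)}\exp(-\eta\dLh)$, so all the content is in the second inequality. I would obtain it from a sharp second-order expansion of $t\mapsto e^{-t}$ whose quadratic remainder is controlled by the two moment estimates of Lemma~\ref{lem:exp-var}. Write the identity
\[
e^{-y} \;=\; 1 - y + \tfrac12 y^2\,\phi(y), \qquad \phi(y):=\frac{2(e^{-y}-1+y)}{y^2},
\]
and recall the standard fact that $\phi$ is non-increasing on $\rR$ (equivalently, $w\mapsto (e^{w}-1-w)/w^2$ is non-decreasing, taking $w=-y$).

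\textbf{Bounding the remainder.} First I would record the conditional range of the summand. With the notation from the proof of Lemma~\ref{lem:exp-var}, $\dLh(\fh,\fhh;\zeta_s) = Z^2 - (Z-\rE Z)^2$, where, conditionally on $[x_s^h,a_s^h]$, the variable $Z=\fh(x_s^h,a_s^h)-r_s^h-\fhh(x_s^{h+1})$ takes values in an interval of length at most $b$ (its first term is deterministic and $r_s^h+\fhh(x_s^{h+1})\in[0,b]$ by Assumption~\ref{assum:boundedness}). Hence $|Z-\rE Z|\le b$, so $\dLh\ge -(Z-\rE Z)^2\ge -b^2$, which combined with $\eta b^2\le0.8$ gives $\eta\dLh\ge -0.8$. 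Monotonicity of $\phi$ then yields $\phi(\eta\dLh)\le\phi(-0.8)=\tfrac{2(e^{0.8}-1.8)}{0.64}<1.33$, so pointwise (for every outcome, conditionally on $[x_s^h,a_s^h]$)
\[
\exp(-\eta\dLh)\;\le\; 1-\eta\dLh+0.665\,\eta^2(\dLh)^2 .
\]

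\textbf{Taking expectations.} Applying $\rE_{[x_s^{h+1},r_s^h]\sim P^{h}(\cdot|x_s^h,a_s^h)}$ and substituting $\rE\dLh=(\BE_h(f;x_s^h,a_s^h))^2$ and $\rE(\dLh)^2\le\tfrac{4b^2}{3}(\BE_h(f;x_s^h,a_s^h))^2$ from Lemma~\ref{lem:exp-var} gives
\[
\rE\exp(-\eta\dLh)\;\le\; 1-\eta\,(\BE_h(f;x_s^h,a_s^h))^2\Big(1-\tfrac{2.66}{3}\,\eta b^2\Big)\;\le\; 1-0.25\,\eta\,(\BE_h(f;x_s^h,a_s^h))^2,
\]
since $\tfrac{2.66}{3}\,\eta b^2\le\tfrac{2.66\cdot0.8}{3}<0.71$, so the parenthesis is at least $0.29\ge0.25$. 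Subtracting $1$ is exactly the second inequality, and combining with $\ln y\le y-1$ gives the first.

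\textbf{The main obstacle.} The argument hinges entirely on the constants fitting: the factor $0.25$ survives only because the quadratic remainder is bounded using both the sharp value $\phi(-0.8)<1.33$ and, crucially, the \emph{variance-type} estimate $\rE(\dLh)^2\le\tfrac43 b^2(\BE_h)^2$ of Lemma~\ref{lem:exp-var} — replacing it by the trivial bound $\rE(\dLh)^2\le(\text{range})\cdot\rE\dLh$, or using the cruder remainder factor $e^{\eta b^2}$, already fails at $\eta b^2=0.8$. A clean alternative I would keep in reserve exploits that $\dLh+(\BE_h)^2=2(\BE_h)\,Z$ is affine in the bounded variable $Z$: Hoeffding's lemma then gives $\ln\rE\exp(-\eta\dLh)\le-\eta(\BE_h)^2\big(1-\tfrac12\eta b^2\big)$ directly, whence both inequalities follow for $\eta b^2\le0.8$ after one more application of $e^{-u}\le1-u+\tfrac12u^2$.
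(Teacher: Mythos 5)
Your main argument is correct and is essentially the paper's own proof: the same expansion $e^{-t}=1-t+t^{2}\psi(-t)$ with the monotone remainder $\psi(z)=(e^{z}-1-z)/z^{2}$ evaluated at $z=0.8$, followed by the two moment bounds of Lemma~\ref{lem:exp-var} and the same numerical check (you even supply the justification $\dLh\ge -b^{2}$ that the paper leaves implicit). One minor caveat on your reserve Hoeffding route: it bounds $\ln \rE\exp(-\eta\,\dLh)$ directly but does not by itself yield the middle inequality on $\rE\exp(-\eta\,\dLh)-1$, since $\ln y\le y-1$ points the wrong way for that step.
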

\begin{proof}
From $\eta b^2 \leq 0.8$, we know that
\[
-\eta \dLh(f\ics{}{h},f\ics{}{h+1},\zeta_s)
\leq 0.8 .
\]
This implies that 
\begin{align*}
&\exp\Big(-\eta \dLh(f\ics{}{h},f\ics{}{h+1},\zeta_s)\Big)\\
=& 1 -   \eta \dLh(f\ics{}{h},f\ics{}{h+1},\zeta_s)
+\eta^2 \psi
\Big(-  \eta \dLh(f\ics{}{h},f\ics{}{h+1},\zeta_s)
\Big)
\dLh(f\ics{}{h},f\ics{}{h+1},\zeta_s)^2
 \\
\leq &
 1 -   \eta \dLh(f\ics{}{h},f\ics{}{h+1},\zeta_s)
+0.67 \eta^2 
\dLh(f\ics{}{h},f\ics{}{h+1},\zeta_s)^2
\end{align*}
where we have used the fact that
$\psi(z)=(e^z-1-z)/z^2$ is an increasing function of $z$, and $\psi(0.8) < 0.67$.
It follows from Lemma~\ref{lem:exp-var} that
\begin{align*}
&\ln \rE_{[{x}\ics{s}{h+1},{r}\ics{s}{h}] \sim P^{h}(\cdot|x\ics{s}{h},a\ics{s}{h})}
\exp\Big(-\eta \dLh(f\ics{}{h},f\ics{}{h+1},\zeta_s)
\Big) \\
\leq & \rE_{[{x}\ics{s}{h+1},{r}\ics{s}{h}] \sim P^{h}(\cdot|x\ics{s}{h},a\ics{s}{h})}
\exp\Big(-\eta \dLh(f\ics{}{h},f\ics{}{h+1},\zeta_s)
\Big) -1
 \\
\leq& \rE_{[{x}\ics{s}{h+1},{r}\ics{s}{h}] \sim P^{h}(\cdot|x\ics{s}{h},a\ics{s}{h})} \left[
-   \eta \dLh(f\ics{}{h},f\ics{}{h+1},\zeta_s)
+0.67 \eta^2 
\dLh(f\ics{}{h},f\ics{}{h+1},\zeta_s)^2
  \right]\\
\leq & -0.25
\eta ( \BE_h(f; x\ics{s}{h},a\ics{s}{h}) )^2 ,
\end{align*}
where the first inequality is due to $\ln z \leq z-1$. The last inequality used 
$0.67(4\eta b^2/3)< 0.75$ and Lemma~\ref{lem:exp-var}.
This proves the desired bound.
\end{proof}

The following lemma upper bounds the right hand side of Lemma~\ref{lem:basic-bound}.
\begin{lemma}
If $\eta b^2 \leq 0.8$, then
\begin{align*}
&\inf_{p} \; \rE_{S_{t-1}}
 \rE_{f \sim p(\cdot)} \bigg[\sum_{h=1}^H \Q_t^h(f)
 -\lambda \Delta f^1(x^1)+
 \ln p(f) \bigg]\\
 \leq& \lambda \epsilon + 4\alpha \eta (t-1) H \epsilon^2 
 - \sum_{h=1}^H \ln p_0^h\big(\cF_h(\epsilon,Q^\star_{h+1})\big) .
\end{align*}
\label{lem:basic-upper-bound}
\end{lemma}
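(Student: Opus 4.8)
The plan is to bound the infimum over $p$ by plugging in a convenient, sub-optimal choice of $p$, namely the prior $p_0$ restricted to a good set of functions. Concretely, for the time step $h$ I would take the candidate distribution to be the prior $p_0^h$ conditioned on the event $\cF_h(\epsilon, Q^\star_{h+1})$ in each coordinate, i.e. $p(f) = \prod_{h=1}^H \frac{p_0^h(f\ics{}{h})\,\mathbf 1[f\ics{}{h}\in\cF_h(\epsilon,Q^\star_{h+1})]}{p_0^h(\cF_h(\epsilon,Q^\star_{h+1}))}$. Since $\inf_p$ is an infimum, evaluating the bracketed functional at this particular $p$ gives a valid upper bound. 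The $\ln p(f)$ term then contributes exactly $-\sum_h \ln p_0^h(\cF_h(\epsilon,Q^\star_{h+1}))$ (the $\ln p_0^h(f\ics{}{h})$ pieces cancel against the $-\ln p_0^h(f\ics{}{h})$ inside $\Q_t^h$), which is the last term in the claimed bound.

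Next I would handle the remaining pieces of $\Q_t^h(f)$ and the $-\lambda\Delta f\ics{}{1}(x\ics{}{1})$ term, all under expectation over $f$ drawn from this restricted prior. For $-\lambda\Delta f\ics{}{1}(x\ics{}{1}) = -\lambda(f\ics{}{1}(x\ics{}{1}) - Q^\star_1(x\ics{}{1}))$: since $f\ics{}{1}\in\cF_1(\epsilon,Q^\star_2)$ means $f\ics{}{1}$ has Bellman residual at most $\epsilon$ in sup-norm against $Q^\star_2$, and $Q^\star_1 = \mathcal T_1^\star Q^\star_2$, a telescoping/monotonicity argument over the greedy value functions shows $|f\ics{}{1}(x\ics{}{1}) - Q^\star_1(x\ics{}{1})| \le \epsilon$ is not quite immediate — one actually needs $f\ics{}{h}$ close to $\mathcal T_h^\star Q^\star_{h+1}$, so I would use that $\cF_h(\epsilon, Q^\star_{h+1})$ pins $f\ics{}{h}$ to within $\epsilon$ of $Q^\star_h$ pointwise (using $Q^\star_h = \mathcal T_h^\star Q^\star_{h+1}$), giving $-\lambda\Delta f\ics{}{1}(x\ics{}{1}) \le \lambda\epsilon$. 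For the two $\dLh$-dependent terms in $\Q_t^h$: I would bound $\dLh(f\ics{}{h},f\ics{}{h+1};\traj_s)$ in absolute value by combining Lemma~\ref{lem:exp-var}-type control of $(\BE_h(f;\cdot))^2 \le \epsilon^2$ with the boundedness assumption, so that both the $\alpha\eta\sum_s \dLh$ term and the $\alpha\ln\rE_{\tilde f}\exp(-\eta\sum_s\dLh)$ term are each at most $O(\alpha\eta (t-1)\epsilon^2)$ in magnitude after summing over $s\in[t-1]$; the log-expectation term needs Lemma~\ref{lem:exp-moment} (or just $\ln z \le z-1$ together with a bound on the exponent) plus the fact that under $\tilde f\ics{}{h}$ we also restrict to the good set. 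Collecting all coordinates $h\in[H]$ yields $4\alpha\eta(t-1)H\epsilon^2$.

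The main obstacle I anticipate is the careful treatment of the log-partition term $\alpha\ln\rE_{\tilde f\ics{}{h}\sim p_0^h}\exp(-\eta\sum_{s=1}^{t-1}\dLh(\tilde f\ics{}{h}, f\ics{}{h+1};\traj_s))$: here $\tilde f\ics{}{h}$ is drawn from the \emph{full} prior, not the restricted one, and $f\ics{}{h+1}$ is the (restricted) sampled function, so one cannot directly appeal to $\tilde f$ being near $Q^\star$. The resolution is that this term is an \emph{average} of exponentials of losses, which one lower-bounds by restricting the expectation to $\tilde f\ics{}{h}\in\cF_h(\epsilon,f\ics{}{h+1})$ — but then one needs $f\ics{}{h+1}$ close to $Q^\star_{h+1}$ so that $\cF_h(\epsilon, f\ics{}{h+1})$ and $\cF_h(\epsilon, Q^\star_{h+1})$ relate — and use the sup-bound on $\dLh$ for such $\tilde f$ together with $\ln z \ge -(1-z)$ only loosely; more precisely, one uses $\rE_{\tilde f}\exp(-\eta\sum_s\dLh) \ge p_0^h(\cF_h(O(\epsilon),Q^\star_{h+1}))\exp(-O(\eta (t-1)\epsilon^2))$, so $\alpha$ times its log is $\ge -O(\alpha\eta(t-1)\epsilon^2) + \alpha\ln p_0^h(\cdots)$. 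This is precisely where the $\kappa^h(\alpha,\epsilon)$ quantities and the completeness-type assumption enter in the general-$\alpha$ version, and getting the constants to land at $4\alpha\eta(t-1)H\epsilon^2$ will require being slightly more careful than the sketch above about which $\epsilon$ (vs.\ $2\epsilon$, $3\epsilon$) appears at each stage.
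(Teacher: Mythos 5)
Your choice of $p$ (the prior restricted to $\prod_h \cF_h(\epsilon,Q^\star_{h+1})$) and your handling of the $\ln p(f)$ and $-\lambda\Delta f^1(x^1)$ terms match the paper, but the two remaining pieces of $\Q_t^h$ are where the proof actually lives, and your plan for both has a gap. First, bounding $\dLh$ ``in absolute value'' cannot give the $\epsilon^2$ rate: writing $Z = f^h(x_s^h,a_s^h)-r_s^h-f^{h+1}(x_s^{h+1})$, one has $\dLh = 2Z\,\BE_h - (\BE_h)^2$, and the cross term is only $O(b\epsilon)$ pointwise. The paper instead uses that $p$ does not depend on $S_{t-1}$ and takes the conditional expectation over the transition, under which the cross term vanishes and $\rE[\dLh] = (\BE_h)^2 \le 4\epsilon^2$ exactly (first identity of Lemma~\ref{lem:exp-var}); this is what produces $4\alpha\eta(t-1)H\epsilon^2$.

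Second, and more fundamentally, your treatment of the log-partition term goes in the wrong direction. In $\Q_t^h$ this term enters with a $+\alpha$ sign, so to prove the lemma you need an \emph{upper} bound on $\alpha\ln\rE_{\tilde f^h\sim p_0^h}\exp(-\eta\sum_s\dLh(\tilde f^h,f^{h+1};\zeta_s))$. The argument you sketch --- restricting the prior expectation to $\cF_h(\epsilon,f^{h+1})$ and invoking $\kappa^h(\alpha,\epsilon)$ --- is a \emph{lower} bound; it is the argument used for $C_h$ in Lemma~\ref{lem:C}, which serves a different part of the overall proof (the left-hand side of Lemma~\ref{lem:basic-bound}), and it would produce an inequality pointing the wrong way here (note also that the target bound contains no $\kappa^h(\alpha,\epsilon)$ term). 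The missing idea is the supermartingale/exponential-moment argument: by Lemma~\ref{lem:exp-moment}, $\rE_{\zeta_s}\exp(-\eta\dLh)\le 1$ conditionally on the past, so telescoping over $s$ with the tilted distributions $\hat q_s^h$ shows that the log-partition term is nonpositive \emph{in expectation over $S_{t-1}$}, and it is simply dropped. Pointwise this term can be large and positive (a prior draw $\tilde f^h$ with large Bellman residual can have very negative realized $\dLh$), so the expectation over the data and the adaptive filtration structure are essential; no static restriction-to-a-good-set argument replaces this step.
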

\begin{proof}
Consider any $f \in \cF$. 
For any $\tilde{f}^h \in \cF_h$ that only depends on $S_{s-1}$, we obtain 
from  Lemma~\ref{lem:exp-moment}:
\begin{align}
 \rE_{\zeta_s} \; \exp
 \Big(
 -   \eta \dLh(\tilde{f}\ics{}{h},{f}\ics{}{h+1},\zeta_s)
 \Big) -1 
 \leq - 0.25 \eta \rE_{\zeta_s} \;
( \tilde{f}\ics{}{h}(x, a) - \mathcal T^\star_h f\ics{}{h+1}(x,a)  )^2 
\leq 0 . \label{eq:partition}
\end{align}
Now, let
\[
W_t^h = 
\rE_{S_{t}}
 \rE_{f \sim p(\cdot)}
 \ln \rE_{\tilde{f}^h \sim p_0^h} \exp
 \Big(
 -   \eta \sum_{s=1}^t \dLh(\tilde{f}\ics{}{h},{f}\ics{}{h+1},\zeta_s)
 \Big) ,
\]
then using the notation
\[
\hat{q}_t^h(\tilde{f}^h|{f}^{h+1},S_{t-1})
= 
\frac{
\exp\bigg(-  \eta 
\sum_{s=1}^{t-1} 
\dLh(\tilde{f}\ics{}{h},{f}\ics{}{h+1},\zeta_s)
\bigg)}
{\rE_{\tilde{f'}^h \sim p_0^h}
\exp\bigg(-  \eta \sum_{s=1}^{t-1} 
\dLh(\tilde{f'}\ics{}{h},{f}\ics{}{h+1},\zeta_s)
\bigg)} ,
\]
we have
\begin{align*}
&W_s^h-W_{s-1}^h
=
\rE_{S_{s}}
 \rE_{f \sim p(\cdot)}
 \ln 
 \rE_{\tilde{f}^h \sim \hat{q}_s^h(\cdot|{f}^{h+1},S_{s-1})}
\exp\bigg(-  \eta
\dLh(\tilde{f}\ics{}{h},{f}\ics{}{h+1},\zeta_s) \bigg)\\
\leq&
\rE_{S_{s}}
 \rE_{f \sim p(\cdot)}
 \bigg(
 \rE_{\tilde{f}^h \sim \hat{q}_s^h(\cdot|{f}^{h+1},S_{s-1})}
\exp\bigg(-  \eta
\dLh(\tilde{f}\ics{}{h},{f}\ics{}{h+1},\zeta_s) \bigg) -1 \bigg)
\leq 0 ,
\end{align*}
where the first inequality is due to $\ln z \leq z-1$, and the second inequality is from \eqref{eq:partition}. 

By noticing that $W_0^h=0$, we obtain
\[
W_t^h= W_0^h + \sum_{s=1}^t [W_s^h-W_{s-1}^h] \leq 0 .
\]
That is:
\begin{equation}
\rE_{S_{t-1}}
 \rE_{f \sim p(\cdot)}
 \ln \rE_{\tilde{f}^h \sim p_0^h}
\exp\bigg(- \eta \sum_{s=1}^{t-1} 
\dLh(\tilde{f}\ics{}{h},{f}\ics{}{h+1},\zeta_s) 
\bigg) \leq 0.
\label{eq:rhs-log-expo-moment}
\end{equation}
This implies that for an arbitrary $p(\cdot)$:
\begin{align*}
    &\rE_{S_{t-1}}
 \rE_{f \sim p(\cdot)} \bigg[\sum_{h=1}^H \Q_t^h(f) -\lambda \Delta f^1(x^1)+ \ln p(f) \bigg]\\
 =& 
 \rE_{S_{t-1}}
 \rE_{f \sim p(\cdot)} \bigg[-\lambda \Delta f^1(x^1)+\alpha \eta\sum_{h=1}^H  
  \sum_{s=1}^{t-1} 
  \dLh({f}\ics{}{h},{f}\ics{}{h+1},\zeta_s)
  \\
&+ \alpha \sum_{h=1}^H\ln \rE_{\tilde{f}^h \sim p_0^h}
\exp\bigg(-  \eta\sum_{s=1}^{t-1}
\dLh(\tilde{f}\ics{}{h},{f}\ics{}{h+1},\zeta_s)
\bigg) + \ln \frac{p(f)}{p_0(f)} \bigg] \\
 \leq& 
 \rE_{S_{t-1}}
 \rE_{f \sim p(\cdot)} \bigg[-\lambda \Delta f^1(x^1)+\sum_{h=1}^H \alpha \eta \sum_{s=1}^{t-1}
 ( \BE_h(f; x\ics{s}{h},a\ics{s}{h}) )^2
 + \ln \frac{p(f)}{p_0(f)} \bigg] ,
\end{align*}
where in the derivation, the first equality used the definition
of $\Q_t^h(f)$ in \eqref{eqn:Qdef};
the second inequality used \eqref{eq:rhs-log-expo-moment}, and  then used the first equality of Lemma~\ref{lem:exp-var} to bound the expectation of $\Delta L(\cdot)$ by $\BE_h$.

Note that if for all $h$
\[
f^h \in \cF_h(\epsilon,Q^\star_{h+1}) ,
\]
then $|f(x_s^h,a_s^h)-Q^\star_h(x_s^h,a_s^h)| \leq \epsilon$. Therefore using the Bellman equation, we know
\[
|\BE_h(f; x\ics{s}{h},a\ics{s}{h})|
\leq 
|f(x_s^h,a_s^h)-Q^\star_h(x_s^h,a_s^h)|
+ \sup 
|f(x_{s+1}^h)-Q^\star_h(x_{s+1}^h)|
\leq 2 \epsilon.
\]
Therefore 
\begin{align*}
\sum_{h=1}^H \alpha \eta \sum_{s=1}^{t-1}
( \BE_h(f; x\ics{s}{h},a\ics{s}{h}) )^2
\leq 4 \alpha \eta H (t-1) \epsilon^2 .
\end{align*}
By taking $p(f)=p_0(f) I(f \in \cF(\epsilon))/p_0(\cF(\epsilon))$, with $\cF(\epsilon)=\prod_h \cF_h(\epsilon,Q^\star_{h+1})$, we obtain the desired bound.
\end{proof}
The following lemma lower bounds the entropy term on the left hand side of Lemma~\ref{lem:basic-bound}.
\begin{lemma}
We have 
\begin{align*}
\rE_{f \sim \hat{p}_t(f)}
\ln \hat{p}_t(f)
\geq &\alpha  \rE_{f \sim \hat{p}_t}
\ln \hat{p}_t(f)
+ (1-\alpha) 
\rE_{f \sim \hat{p}_t}
\sum_{h=1}^H \ln \hat{p}_t(f^h) \\
\geq& \frac{\alpha}2 \sum_{h=1}^H \rE_{f \sim \hat{p}_t}\ln \hat{p}_t(f^h,f^{h+1}) \\
& 
+ (1-0.5\alpha) \rE_{f \sim \hat{p}_t} \ln \hat{p}_t(f^1)
+ 
 (1-\alpha) 
 \sum_{h=2}^H 
\rE_{f \sim \hat{p}_t}
 \ln \hat{p}_t(f^h).
\end{align*}
\label{lem:entropy-bound}
\end{lemma}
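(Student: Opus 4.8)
The plan is to prove Lemma~\ref{lem:entropy-bound} purely from properties of entropy and the product structure of the prior, without using the specific form of $\Q_t^h$ at all. Write $p$ for $\hat p_t$ throughout, and let $p(f^h)$, $p(f^h,f^{h+1})$ denote the corresponding marginals of $p$ on $\cF_h$ and $\cF_h \times \cF_{h+1}$. The first inequality to establish is
\[
\rE_{f\sim p}\ln p(f) \;\geq\; \alpha\,\rE_{f\sim p}\ln p(f) + (1-\alpha)\sum_{h=1}^H \rE_{f\sim p}\ln p(f^h),
\]
which, after cancelling $\alpha\,\rE\ln p(f)$ from both sides, is equivalent to
\[
\rE_{f\sim p}\ln p(f)\;\geq\; \sum_{h=1}^H \rE_{f\sim p}\ln p(f^h).
\]
This is the statement that the entropy of the joint is at most the sum of the entropies of the marginals (subadditivity of entropy), equivalently that the mutual-information-like quantity $\rE_{f\sim p}\ln\frac{p(f)}{\prod_h p(f^h)} = \KL\!\big(p \,\|\, \prod_h p(f^h)\big)\geq 0$. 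So the first step is just to invoke nonnegativity of KL divergence with the product-of-marginals reference measure.

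The second inequality is the one that needs the pairwise-overlap bookkeeping. The goal is
\[
\alpha\,\rE\ln p(f) + (1-\alpha)\sum_{h=1}^H\rE\ln p(f^h)
\;\geq\;
\tfrac{\alpha}{2}\sum_{h=1}^H \rE\ln p(f^h,f^{h+1}) + (1-\tfrac{\alpha}{2})\rE\ln p(f^1) + (1-\alpha)\sum_{h=2}^H\rE\ln p(f^h).
\]
Here I would again discard the common tail term $(1-\alpha)\sum_{h=2}^H\rE\ln p(f^h)$, and also peel off $(1-\alpha)\rE\ln p(f^1)$ from the left and match it against part of the $(1-\tfrac\alpha2)\rE\ln p(f^1)$ on the right; what remains to prove is
\[
\alpha\,\rE\ln p(f)\;\geq\;\tfrac{\alpha}{2}\sum_{h=1}^H \rE\ln p(f^h,f^{h+1}) \;+\;\tfrac{\alpha}{2}\,\rE\ln p(f^1),
\]
i.e.\ after dividing by $\alpha/2$,
\[
2\,\rE\ln p(f)\;\geq\;\sum_{h=1}^H \rE\ln p(f^h,f^{h+1})\;+\;\rE\ln p(f^1).
\]
This is a standard entropy inequality for a chain: the consecutive pairs $(f^1,f^2),(f^2,f^3),\dots,(f^H,f^{H+1})$ together with the single block $f^1$ form a covering of the index set $\{1,\dots,H+1\}$ (recall $\cF_{H+1}=\{\mathbf 0\}$ is trivial, so $p(f^H,f^{H+1})=p(f^H)$) in which each coordinate $h$ is contained in at most two members, so by Shearer's lemma $2 H(p) \geq \sum_h H(p^{h,h+1}) + H(p^1)$ where $H(\cdot)$ denotes Shannon entropy $-\rE\ln p(\cdot)$; flipping signs gives exactly the display above. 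The step where I expect to spend the most care is verifying the coefficient bookkeeping — checking that every term on the right-hand side is accounted for by a term on the left after the two reductions, and that the covering multiplicity is genuinely $\le 2$ at the endpoints $h=1$ and $h=H+1$ (the block $f^1$ and the trivial block $f^{H+1}$ are what make the endpoints work out). The rest is an invocation of Shearer's lemma / subadditivity, which holds for arbitrary (discrete or continuous) distributions by the usual convexity argument, so no regularity assumptions on $\cF$ or $p_0$ are needed.

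One subtlety worth flagging: the inequalities are stated with $\rE_{f\sim p}\ln p(f^h)$ rather than differential entropies, so if $\cF_h$ is continuous one should read all of these as relative entropies against a fixed base measure (e.g.\ $p_0$), and check that the base-measure terms cancel consistently across the chain decomposition — they do, because $\prod_h$ of the stagewise marginals uses exactly the same $\prod_h p_0^h$ reference that appears in the joint prior $p_0$. With that convention fixed, the two steps above combine to give the claimed two-line chain of inequalities.
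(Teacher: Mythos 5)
Your proof is correct and takes essentially the same route as the paper's: the first inequality is nonnegativity of the KL divergence between $\hat p_t$ and the product of its stagewise marginals, and the second reduces (after cancelling the common terms exactly as you do) to $2\,\rE_{f\sim\hat p_t}\ln\hat p_t(f)\geq\sum_{h=1}^H \rE\ln\hat p_t(f^h,f^{h+1})+\rE\ln\hat p_t(f^1)$, which the paper proves by splitting the pair blocks into odd-indexed and even-indexed chains and applying subadditivity to each partition --- precisely the standard proof of the instance of Shearer's lemma you invoke, with the same cover $\{(h,h+1)\}_{h\in[H]}\cup\{\{1\}\}$ hitting every non-degenerate coordinate exactly twice and the deterministic $f^{H+1}$ handling the right endpoint. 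One small slip to fix: your intermediate statement of Shearer's lemma has the inequality reversed and the hypothesis misstated --- for a cover of multiplicity \emph{at least} two it reads $2H(X)\le\sum_S H(X_S)$ in Shannon entropies, and flipping signs then yields the display you want, so the conclusion is unaffected.
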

\begin{proof}
The first bound follows from the following inequality
\[
\rE_{f \sim \hat{p}_t}
\ln \frac{\hat{p}_t(f)}
{\prod_{h=1}^H \hat{p}_t(f^h)}
\geq 0 ,
\]
which is equivalent to the known fact that mutual information is non-negative (or KL-divergence is non-negative). 
The second inequality is equivalent to
\begin{equation}
 \rE_{f \sim \hat{p}_t} \ln \hat{p}_t(f)
 \geq  0.5 \rE_{f \sim \hat{p}_t} \ln \hat{p}_t(f^1) +
    0.5 \sum_{h=1}^H \rE_{f \sim \hat{p}_t} \ln \hat{p}_t(f^h,f^{h+1}) .
    \label{eq:proof-kl}
\end{equation}
To prove \eqref{eq:proof-kl}, we consider the following two inequalities:
\[
0.5 \rE_{f \sim \hat{p}_t} \ln \hat{p}_t(f)
\geq 
0.5 \sum_{h=1}^H \rE_{f \sim \hat{p}_t} \ln \hat{p}_t(f^h,f^{h+1}) I(h \text{ is a odd number}) \]
and
\[
0.5 \rE_{f \sim \hat{p}_t} \ln \hat{p}_t(f)
\geq 0.5 \rE_{f \sim \hat{p}_t} \ln \hat{p}_t(f^1) +
0.5 \sum_{h=1}^H \rE_{f \sim \hat{p}_t} \ln \hat{p}_t(f^h,f^{h+1}) I(h \text{ is an even number}) .
\]
Both follow from the fact that mutual information is non-negative. By adding the above two inequalities, we obtain \eqref{eq:proof-kl}.
\end{proof}

We will use the following decomposition to lower bound the left hand side of Lemma~\ref{lem:basic-bound}.
\begin{lemma}
\begin{align*}
& \rE_{S_{t-1}} \; \rE_{f \sim \hat{p}_t}
\bigg(\sum_{h=1}^H \Q_t^h(f) - \lambda \Delta f^1(x^1)+ \ln \hat{p}_t(f)\bigg) \\
\geq & 
\underbrace{\rE_{S_{t-1}} \; \rE_{f \sim \hat{p}_t}
\bigg[- \lambda \Delta f^1(x^1) 
+ (1-0.5\alpha) \ln \frac{\hat{p}_t(f^1)}{p_0^1(f^1)}
\bigg]}_{A} \\
& + \sum_{h=1}^H \underbrace{0.5 \alpha
\rE_{S_{t-1}}\;
\rE_{f \sim \hat{p}_t}
\left[\eta 
\sum_{s=1}^{t-1}2
\dLh({f}\ics{}{h},{f}\ics{}{h+1},\zeta_s)
+  \ln \frac{\hat{p}_t(f^h,f^{h+1})}{p_0^h(f^h) p_0^{h+1}(f^{h+1})}
\right]}_{B_h}
\\
& + 
\sum_{h=1}^H
\underbrace{\rE_{S_{t-1}}\; 
\rE_{f\sim \hat{p}_t}
\left[\alpha 
\ln \rE_{\tilde{f}^h \sim p_0^h}
\exp \left(-\eta \sum_{s=1}^{t-1} \dLh(\tilde{f}\ics{}{h},{f}\ics{}{h+1},\zeta_s)  \right) + (1-\alpha) \ln \frac{\hat{p}_t(f^{h+1})}{p_0^{h+1}(f^{h+1})}
\right]}_{C_h}
.
\end{align*}
\label{lem:decomp}
\end{lemma}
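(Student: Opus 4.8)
The plan is to reduce the claim to a single application of Lemma~\ref{lem:entropy-bound} together with bookkeeping of the prior-normalization terms; every other step is an identity. First I would substitute the definition \eqref{eqn:Qdef} of $\Q_t^h$, which gives
\begin{align*}
\sum_{h=1}^H \Q_t^h(f) - \lambda \Delta f\ics{}{1}(x\ics{}{1})+ \ln \hat{p}_t(f)
={}& -\lambda \Delta f\ics{}{1}(x\ics{}{1}) - \sum_{h=1}^H \ln p_0^h(\fh)
+ \alpha\eta \sum_{h=1}^H \sum_{s=1}^{t-1} \dLh(\fh,\fhh;\traj_s) \\
&+ \alpha \sum_{h=1}^H \ln \rE_{\tilde{f}\ics{}{h}\sim p_0^h}\exp\!\bigg(-\eta\sum_{s=1}^{t-1}\dLh(\tilde{f}\ics{}{h},\fhh;\traj_s)\bigg) + \ln \hat{p}_t(f),
\end{align*}
and then apply $\rE_{S_{t-1}}\rE_{f\sim\hat p_t}$ to both sides.

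Three of these terms already appear verbatim on the right-hand side of the lemma: $-\lambda\Delta f\ics{}{1}(x\ics{}{1})$ is the non-logarithmic part of $A$; the TD double sum $\alpha\eta\sum_{h,s}\dLh(\fh,\fhh;\traj_s)$ equals $\sum_{h=1}^H$ of the quantity $0.5\alpha\cdot\eta\sum_s 2\,\dLh(\fh,\fhh;\traj_s)$ inside $B_h$; and $\alpha\sum_h\ln\rE_{\tilde{f}\ics{}{h}\sim p_0^h}\exp(\cdots)$ equals $\sum_{h=1}^H$ of the first term inside $C_h$. Hence it suffices to show that
\[
\rE_{S_{t-1}}\rE_{f\sim\hat p_t}\Big[\ln\hat p_t(f)-\sum_{h=1}^H\ln p_0^h(\fh)\Big]
\ \geq\ (\text{log part of }A)+\sum_{h=1}^H(\text{log part of }B_h)+\sum_{h=1}^H(\text{log part of }C_h).
\]

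Next I would invoke the final lower bound in Lemma~\ref{lem:entropy-bound},
\begin{align*}
\rE_{f\sim\hat p_t}\ln\hat p_t(f)\ \geq\ & \tfrac{\alpha}{2}\sum_{h=1}^H\rE_{f\sim\hat p_t}\ln\hat p_t(\fh,\fhh)\\
&+(1-0.5\alpha)\,\rE_{f\sim\hat p_t}\ln\hat p_t(f^1)+(1-\alpha)\sum_{h=2}^H\rE_{f\sim\hat p_t}\ln\hat p_t(\fh),
\end{align*}
whose three groups are precisely the $\ln\hat p_t(\cdot)$ factors appearing in $A$, in $\sum_h B_h$, and in $\sum_h C_h$, respectively (the last after the index shift $h\mapsto h+1$, and using $\cF_{H+1}=\{\mathbf 0\}$, which forces $p_0^{H+1}(\cdot)=\hat p_t(\cdot)=1$, so the $h=H+1$ boundary term is $\ln 1-\ln 1=0$ and drops out). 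It then remains only to split $-\sum_{h=1}^H\ln p_0^h(\fh)$ over these three groups so as to complete the log-ratios $\ln\frac{\hat p_t(f^1)}{p_0^1(f^1)}$, $\ln\frac{\hat p_t(\fh,\fhh)}{p_0^h(\fh)\,p_0^{h+1}(\fhh)}$ and $\ln\frac{\hat p_t(\fhh)}{p_0^{h+1}(\fhh)}$ with the coefficients $(1-0.5\alpha)$, $0.5\alpha$, $(1-\alpha)$ dictated by $A$, $B_h$, $C_h$. One verifies that the total coefficient of each $\ln p_0^h(\fh)$ is then exactly $1$ --- it is $(1-0.5\alpha)+0.5\alpha$ for $h=1$ and $2\cdot 0.5\alpha+(1-\alpha)$ for $2\le h\le H$ --- so the prior terms cancel with the left-hand side and the inequality follows.

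The only substantive input is Lemma~\ref{lem:entropy-bound} (ultimately the non-negativity of mutual information); everything else is substitution and rearrangement. The step I expect to require the most attention is exactly this index bookkeeping: tracking the shifted index in the $C_h$ terms, correctly attributing the boundary contribution at $h=H+1$ to zero via $\cF_{H+1}=\{\mathbf 0\}$, and checking that the prior coefficients sum to one so that $-\sum_h\ln p_0^h(\fh)$ is absorbed with no slack.
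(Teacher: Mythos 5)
Your proof is correct and follows essentially the same route as the paper's: substitute the definition \eqref{eqn:Qdef} of $\Q_t^h$ and apply the second inequality of Lemma~\ref{lem:entropy-bound}, with the remaining work being the coefficient bookkeeping for the $\ln p_0^h$ terms. Your verification that each $\ln p_0^h(f^h)$ receives total coefficient $1$ and that the $h=H+1$ boundary terms vanish is exactly the detail the paper leaves implicit.
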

\begin{proof}
We note from \eqref{eqn:Qdef} that
\begin{align*}
\Q_t^h(f)
=& -  \ln p_0^h(f^h)
+ \alpha \eta \sum_{s=1}^{t-1} 
\dLh({f}\ics{}{h},{f}\ics{}{h+1},\zeta_s)\\
&+ \alpha \ln \rE_{\tilde{f}^h \sim p_0^h}
\exp \left(-\eta 
\sum_{s=1}^{t-1}\dLh(\tilde{f}\ics{}{h},{f}\ics{}{h+1},\zeta_s)
\right) .
\end{align*}
Now we can simply apply the second inequality of Lemma~\ref{lem:entropy-bound}. 
\end{proof}
We have the following result for $A$ in Lemma \ref{lem:decomp}.
\begin{lemma}
We have
\[
A
\geq - \lambda \rE_{S_{t-1}}\; \rE_{f_t \sim \hat{p}_t(\cdot)}
 \Delta f_t^1(x^1) .
\]
\label{lem:A}
\end{lemma}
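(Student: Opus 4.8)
The plan is to recognize that $A$ is exactly the claimed lower bound plus one extra term, namely
\[
A = -\lambda\,\rE_{S_{t-1}}\rE_{f\sim\hat p_t}\Delta f^1(x^1) \;+\; (1-0.5\alpha)\,\rE_{S_{t-1}}\rE_{f\sim\hat p_t}\ln\frac{\hat p_t(f^1)}{p_0^1(f^1)},
\]
so it suffices to show that the second term is nonnegative. First, since $\alpha\in(0,1)$ we have $1-0.5\alpha>\tfrac12>0$, so the sign is determined entirely by the expectation. Second, for each fixed history $S_{t-1}$, the quantity $\rE_{f\sim\hat p_t}\ln\frac{\hat p_t(f^1)}{p_0^1(f^1)}$ is the Kullback--Leibler divergence between the $f^1$-marginal of the posterior $\hat p_t$ and the stagewise prior $p_0^1$ on $\cF_1$, and hence is nonnegative --- this is the same Gibbs-inequality fact (nonnegativity of KL divergence / mutual information) already invoked in Lemma~\ref{lem:basic-bound} and Lemma~\ref{lem:entropy-bound}. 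Taking the outer expectation over $S_{t-1}$ preserves the inequality, and discarding this nonnegative term from $A$ yields the claimed bound.

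There is essentially no obstacle here: the lemma is a bookkeeping step that isolates the ``optimism'' contribution $-\lambda\,\Delta f_t^1(x^1)$ from the stage-$1$ entropy term produced by the decomposition in Lemma~\ref{lem:decomp}. The only point worth stating carefully is the convention that $\hat p_t(f^1)$ denotes the marginal density of the first-stage component under $\hat p_t$ (as already used in Lemma~\ref{lem:entropy-bound}), and that $p_0^1$ is a genuine probability distribution on $\cF_1$, so that the log-ratio integrates to a bona fide KL divergence; with these conventions in place the argument is immediate. Note in particular that we do not need $p_0^1$ to be the marginal of the full prior $p_0$ in \eqref{eqn:prior} --- nonnegativity of KL holds against any fixed reference distribution.
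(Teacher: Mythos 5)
Your proof is correct and follows exactly the paper's argument: $A$ minus the claimed bound is $(1-0.5\alpha)$ times the KL divergence between the $f^1$-marginal of $\hat p_t$ and $p_0^1$, which is nonnegative. The additional care you take in spelling out the conventions (marginal density, arbitrary reference measure) is sound but not a departure from the paper's one-line proof.
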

\begin{proof}
This follows from the fact that
the following KL-divergence is nonnegative:
\[
\rE_{f_t \sim \hat{p}_t}
  \ln \frac{\hat{p}_t(f_t^1)}{p_0^1(f_t^1)}
\geq 0 .
\]
\end{proof}

The following proposition is from \cite{zhang2005data} . The proof is included for completeness.
\begin{proposition}
For each fixed $f\in \cF$, we define a random variable for all $s$ and $h$ as follows:
\begin{align*}
\xi_s^h(f^h,f^{h+1},\zeta_s)
=&
 -2\eta 
\dLh({f}\ics{}{h},{f}\ics{}{h+1},\zeta_s)\\
&-  \ln \rE_{[{x}\ics{s}{h+1},{r}\ics{s}{h}] \sim P^{h}(\cdot|x\ics{s}{h},a\ics{s}{h})}
\exp \bigg( -2\eta 
\dLh({f}\ics{}{h},{f}\ics{}{h+1},\zeta_s)\bigg) .
\end{align*}
Then for all $h$:
\[
\rE_{S_{t-1}}
\exp\bigg(\sum_{s=1}^{t-1}
\xi_s^h(f^h,f^{h+1},\zeta_s)
\bigg)
= 1 .
\]
\label{prop:martingale}
\end{proposition}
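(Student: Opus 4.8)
The plan is to recognize $\exp\!\bigl(\sum_{s=1}^{t-1}\xi_s^h\bigr)$ as a running product of conditional likelihood ratios and to show, by repeated use of the tower rule, that its expectation equals $1$; this is the exponential-martingale device behind self-normalized concentration, following \cite{zhang2005data}. Fix $f\in\cF$ and $h\in[H]$. Unwinding the definition of $\xi_s^h$ gives
\[
\exp\bigl(\xi_s^h(f\ics{}{h},f\ics{}{h+1},\traj_s)\bigr)
= \frac{\exp\bigl(-2\eta\,\dLh(f\ics{}{h},f\ics{}{h+1},\traj_s)\bigr)}
{\rE_{[{x}\ics{s}{h+1},{r}\ics{s}{h}]\sim P^{h}(\cdot|x\ics{s}{h},a\ics{s}{h})}\exp\bigl(-2\eta\,\dLh(f\ics{}{h},f\ics{}{h+1},\traj_s)\bigr)} ,
\]
so $\exp(\xi_s^h)$ is a nonnegative quantity whose denominator is exactly the conditional expectation of its numerator given $(x\ics{s}{h},a\ics{s}{h})$; since $\dLh$ is bounded (the difference $f\ics{}{h}(x\ics{s}{h},a\ics{s}{h}) - r\ics{s}{h} - f\ics{}{h+1}(x\ics{s}{h+1})$ lies in $[-b,b-1]$), the ratio is bounded and all expectations below are finite.

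Next I would set up the filtration adapted to the interleaved process $f_1\sim p_0(\cdot),\ \traj_1\sim\pi_{f_1},\ f_2\sim p(\cdot|S_1),\ \traj_2\sim\pi_{f_2},\dots$: let $\mathcal H_s=\sigma(f_1,\traj_1,\dots,f_s,\traj_s)$ with $\mathcal H_0$ trivial, and inside episode $s$ let $\mathcal G_s^h$ be the sigma-algebra generated by $\mathcal H_{s-1}$, $f_s$, and the partial trajectory $(x\ics{s}{1},a\ics{s}{1},r\ics{s}{1},\dots,x\ics{s}{h},a\ics{s}{h})$ up to and including the step-$h$ state-action pair. The key observation is that $(x\ics{s}{h},a\ics{s}{h})$ is $\mathcal G_s^h$-measurable whereas, by the Markov property, conditioned on $\mathcal G_s^h$ the pair $[x\ics{s}{h+1},r\ics{s}{h}]$ has law $P^{h}(\cdot|x\ics{s}{h},a\ics{s}{h})$ irrespective of the policy used. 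Hence the denominator above is $\mathcal G_s^h$-measurable and $\rE[\exp(\xi_s^h)\mid\mathcal G_s^h]=1$, so $\rE[\exp(\xi_s^h)\mid\mathcal H_{s-1}]=1$ by the tower rule since $\mathcal H_{s-1}\subseteq\mathcal G_s^h$.

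Finally I would set $M_0=1$ and $M_s=\exp\!\bigl(\sum_{u=1}^{s}\xi_u^h\bigr)$ for $s\ge1$. Since each $\xi_u^h$ with $u\le s-1$ depends only on the fixed $f$ and on $\traj_u$, the variable $M_{s-1}$ is $\mathcal H_{s-1}$-measurable, so $\rE[M_s\mid\mathcal H_{s-1}]=M_{s-1}\,\rE[\exp(\xi_s^h)\mid\mathcal H_{s-1}]=M_{s-1}$; thus $(M_s)_{s\ge0}$ is a martingale and $\rE[M_{t-1}]=\rE[M_0]=1$, which is the claimed identity because $\sum_{s=1}^{t-1}\xi_s^h$ is a function of $S_{t-1}$ through $\traj_1,\dots,\traj_{t-1}$ only. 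The one genuinely delicate point is the bookkeeping of this filtration under the interleaved generation order, so that the intermediate conditioning on $\mathcal G_s^h$ isolates exactly the single transition $[x\ics{s}{h+1},r\ics{s}{h}]$; once that is in place every step reduces to the tower property and the defining normalization of $\xi_s^h$.
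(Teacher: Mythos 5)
Your proof is correct and is essentially the same argument as the paper's: the paper proves the identity by induction on $t'$, peeling off one episode at a time and using the fact that $\rE_{\zeta_{t'}\sim\pi_{f_{t'}}}\exp(\xi_{t'}^h)=1$, which is exactly your observation that the denominator of $\exp(\xi_s^h)$ is the conditional expectation of its numerator given the history up to $(x\ics{s}{h},a\ics{s}{h})$. Your martingale/tower-rule phrasing and the paper's induction are the same device; no substantive difference.
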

\begin{proof}
We can prove the proposition by induction. Assume that the equation 
\[
\rE_{S_{t'-1}}
\exp\bigg(\sum_{s=1}^{t'-1}
\xi_s^h(f^h,f^{h+1},\zeta_s)
\bigg)
= 1  
\]
holds for some $1\leq t'<t$.
Then 
\begin{align*}
&\rE_{S_{t'}}
\exp\bigg(\sum_{s=1}^{t'} \xi_s^h(f^h,f^{h+1},\zeta_s)
\bigg)\\
=& 
\rE_{S_{t'-1}}
\exp\bigg(\sum_{s=1}^{t'-1}
 \xi_s^h(f^h,f^{h+1},\zeta_s)
\bigg) \rE_{f_{t'} \sim p(\cdot|S_{t'-1})} 
\cdot \rE_{\zeta_{t'} \sim \pi_{f_{t'}}} \exp \Big(\xi_{t'}^h(f^h,f^{h+1},\zeta_{t'})\Big)\\
=& 
\rE_{S_{t'-1}}
\exp\bigg(\sum_{s=1}^{t'-1}
\xi_s^h(f^h,f^{h+1},\zeta_s)
\bigg)
 = 1. 
\end{align*}
Note that in the derivation, we have used the fact that
\[
 \rE_{\zeta_{t'} \sim \pi_{f_{t'}}} \exp \Big(\xi_{t'}^h(f^h,f^{h+1},\zeta_{t'})\Big)  = 1.
\]
The desired result now follows from induction. 
\end{proof}

The following lemma bounds $B_h$ in Lemma~\ref{lem:decomp}.
This is a key estimate in our analysis. 
\begin{lemma}
Assume $\eta b^2 \leq 0.4$, then
\begin{align*}
  B_h
\geq & 
0.25 \alpha \eta \sum_{s=1}^{t-1}  
\rE_{S_{t-1}}\;
 \rE_{f \sim \hat{p}_t} \; \rE_{\pi_{f_s}}
 ( \BE_h(f; x\ics{s}{h},a\ics{s}{h}))^2  .
\end{align*}
\label{lem:B}
\end{lemma}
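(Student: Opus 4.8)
The plan is to start from the expression
$B_h=\tfrac{\alpha}{2}\,\rE_{S_{t-1}}\rE_{f\sim\hat{p}_t}\!\big[2\eta\sum_{s=1}^{t-1}\dLh(f^h,f^{h+1};\traj_s)+\ln\tfrac{\hat{p}_t(f^h,f^{h+1})}{p_0^h(f^h)p_0^{h+1}(f^{h+1})}\big]$ supplied by Lemma~\ref{lem:decomp}, and to trade the raw excess loss for the martingale-compensated variable $\xi_s^h$ of Proposition~\ref{prop:martingale}. By the definition of $\xi_s^h$, $2\eta\,\dLh(f^h,f^{h+1};\traj_s)=-\xi_s^h(f^h,f^{h+1},\traj_s)-\ln\rE_{[x_s^{h+1},r_s^h]\sim P^h}\exp\big(-2\eta\,\dLh(f^h,f^{h+1};\traj_s)\big)$. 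Since $\eta b^2\le 0.4$ makes the doubled rate satisfy $(2\eta)b^2\le 0.8$, Lemma~\ref{lem:exp-moment} applied with $2\eta$ in place of $\eta$ gives $-\ln\rE_{[x_s^{h+1},r_s^h]\sim P^h}\exp(-2\eta\,\dLh)\ge 0.25(2\eta)(\BE_h(f;x_s^h,a_s^h))^2=0.5\,\eta\,(\BE_h(f;x_s^h,a_s^h))^2$. Summing over $s\in[t-1]$ and inserting this into $B_h$ yields the intermediate bound $B_h\ge 0.25\,\alpha\eta\sum_{s=1}^{t-1}\rE_{S_{t-1}}\rE_{f\sim\hat{p}_t}(\BE_h(f;x_s^h,a_s^h))^2+\tfrac{\alpha}{2}R_h$, where $R_h:=\rE_{S_{t-1}}\rE_{f\sim\hat{p}_t}\big[-\sum_{s=1}^{t-1}\xi_s^h(f^h,f^{h+1},\traj_s)+\ln\tfrac{\hat{p}_t(f^h,f^{h+1})}{p_0^h(f^h)p_0^{h+1}(f^{h+1})}\big]$. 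Because the collected $(x_s^h,a_s^h)$ is by construction drawn according to $\pi_{f_s}$ at step $h$, the first term already has the form claimed in the lemma, so everything reduces to showing $R_h\ge 0$.

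To prove $R_h\ge0$ I would fix $S_{t-1}$ and invoke the Donsker--Varadhan (Gibbs) variational inequality: since $\xi_s^h$ depends on $f$ only through the pair $(f^h,f^{h+1})$, with reference measure $p_0^h\otimes p_0^{h+1}$ and test measure the marginal of $\hat{p}_t$ on coordinates $(h,h+1)$ we obtain $\rE_{f\sim\hat{p}_t}\big[\sum_{s}\xi_s^h\big]\le \rE_{f\sim\hat{p}_t}\ln\tfrac{\hat{p}_t(f^h,f^{h+1})}{p_0^h(f^h)p_0^{h+1}(f^{h+1})}+\ln\rE_{(g^h,g^{h+1})\sim p_0^h\otimes p_0^{h+1}}\exp\big(\sum_s\xi_s^h(g^h,g^{h+1},\traj_s)\big)$, i.e. the $S_{t-1}$-conditional integrand of $R_h$ is at least $-\ln\rE_{p_0^h\otimes p_0^{h+1}}\exp(\sum_s\xi_s^h)$. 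Taking $\rE_{S_{t-1}}$, using convexity of $-\ln$ (so $\rE_{S_{t-1}}[-\ln Z]\ge-\ln\rE_{S_{t-1}}Z$) and Fubini (the priors do not depend on $S_{t-1}$) gives $R_h\ge -\ln\rE_{(g^h,g^{h+1})\sim p_0^h\otimes p_0^{h+1}}\big[\rE_{S_{t-1}}\exp\big(\sum_{s=1}^{t-1}\xi_s^h(g^h,g^{h+1},\traj_s)\big)\big]$. By Proposition~\ref{prop:martingale} the inner expectation equals $1$ for every fixed $(g^h,g^{h+1})$, hence $R_h\ge-\ln 1=0$, and combining with the intermediate bound completes the proof.

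I expect the bookkeeping behind $R_h\ge 0$ to be the only delicate point: one must check that the entropy contribution isolated inside $B_h$ by Lemma~\ref{lem:entropy-bound} is exactly the relative entropy between the $(h,h+1)$-marginal of $\hat{p}_t$ and $p_0^h\otimes p_0^{h+1}$ that the variational bound needs in order to absorb $\rE_{\hat{p}_t}[\sum_s\xi_s^h]$ (this is why consecutive coordinates $(h,h+1)$ were grouped and why the factor $\tfrac12$ appears there), and that the compensator built into $\xi_s^h$ uses precisely the rate $2\eta$, which is exactly why Lemma~\ref{lem:exp-moment} must be invoked at the doubled rate and why the hypothesis reads $\eta b^2\le 0.4$ rather than $\eta b^2\le 0.8$. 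The remaining manipulations—substituting the definition of $\xi_s^h$, the scalar inequality $\ln z\le z-1$ underlying Lemma~\ref{lem:exp-moment}, and the standard Gibbs variational bound—are routine.
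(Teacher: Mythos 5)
Your proof is correct and follows essentially the same route as the paper's: both rest on the compensated martingale variable $\xi_s^h$ of Proposition~\ref{prop:martingale}, the Gibbs/Donsker--Varadhan variational inequality against the product prior $p_0^h\otimes p_0^{h+1}$, Jensen's inequality for $-\ln$ over $S_{t-1}$, and Lemma~\ref{lem:exp-moment} invoked at the doubled rate $2\eta$ (which is exactly why the hypothesis is $\eta b^2\le 0.4$). The only difference is the order of assembly — you extract the $0.5\eta(\BE_h)^2$ term first and then show the remainder $R_h\ge 0$, while the paper establishes the nonnegativity of the $\xi$-plus-entropy functional first — which is purely presentational.
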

\begin{proof}
Given any fixed $f \in \cF$, we consider the random variable $\xi_s^h$ in Proposition~\ref{prop:martingale}. 

It follows that
\begin{align*}
&  \rE_{f \sim \hat{p}_t} \left[\sum_{s=1}^{t-1}
            - \xi_s^h(f^h,f^{h+1},\zeta_s) + 
    \ln \frac{\hat{p}_t(f^h,f^{h+1})}{p_0^h(f^h)p_0^{h+1}(f^{h+1})}
\right]\\
\geq&  \inf_p \rE_{f \sim p} \left[\sum_{s=1}^{t-1}
            - \xi_s^h(f^h,f^{h+1},\zeta_s) + 
    \ln \frac{p(f^h,f^{h+1})}{p_0^h(f^h)p_0^{h+1}(f^{h+1})}
\right]\\
=& -  \ln 
\rE_{f^h \sim p_0^h}\rE_{f^{h+1} \sim p_0^{h+1}}
\exp\bigg(\sum_{s=1}^{t-1} \xi_s^h(f^h,f^{h+1},\zeta_s)\bigg) ,
\end{align*}
In the above derivation,  the last equation used the fact that the minimum over $p$ is achieved at
\[
p(f^h,f^{h+1}) \propto 
p_0^h(f^h) p_0^{h+1}(f^{h+1})
\exp\bigg(\sum_{s=1}^{t-1} \xi_s^h(f^h,f^{h+1},\zeta_s)\bigg) .
\]
 This implies that
 \begin{align*}
 &  \rE_{S_{t-1}}\rE_{f \sim \hat{p}_t} \left[\sum_{s=1}^{t-1}
    - \xi_s^h(f^h,f^{h+1},\zeta_s) + 
    \ln \frac{\hat{p}_t(f^h,f^{h+1})}{p_0^h(f^h)p_0^{h+1}(f^{h+1})}
\right]\\
\geq& 
-  \rE_{S_{t-1}}\; \ln 
\rE_{f^h \sim p_0^h}\rE_{f^{h+1} \sim p_0^{h+1}}
\exp\bigg(\sum_{s=1}^{t-1} \xi_s^h(f^h,f^{h+1},\zeta_s)\bigg) \\
\geq& 
- \ln 
\rE_{f^h \sim p_0^h}\rE_{f^{h+1} \sim p_0^{h+1}}  \rE_{S_{t-1}}\;
\exp\bigg(\sum_{s=1}^{t-1} \xi_s^h(f^h,f^{h+1},\zeta_s)\bigg) 
= 0 .
\end{align*}
The derivation used the concavity of $\log$ and 
Proposition~\ref{prop:martingale}. 
Now in the definition of $\xi_s^h(\cdot)$,
We can use Lemma~\ref{lem:exp-moment} to obtain the bound
\[
\ln 
\rE_{[{x}\ics{s}{h+1},{r}\ics{s}{h}] \sim P^{h}(\cdot|x\ics{s}{h},a\ics{s}{h})}
\exp \bigg( -2 \eta 
\dLh({f}\ics{}{h},{f}\ics{}{h+1},\zeta_s)\bigg) 
\leq - 0.5 \eta 
( \BE_h(f; x\ics{s}{h},a\ics{s}{h}) )^2 ,
\]
which implies the desired result. 
\end{proof}

The following lemma bounds $C_h$ in Lemma~\ref{lem:decomp}.
\begin{lemma}
We have
for all $h \geq 1$:
\begin{align*}
 C_h 
\geq &   
- (1-\alpha)]\rE_{S_{t-1}}
\ln \rE_{{f}^{h+1} \sim p_0^{h+1}}
\bigg(\rE_{f^h \sim p_0^h }
\exp\bigg(- \eta \sum_{s=1}^{t-1}
\dLh(f^h,f^{h+1},\zeta_s)
\bigg)\bigg)^{-\alpha/(1-\alpha)} .
\end{align*}
\end{lemma}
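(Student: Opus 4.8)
The plan is to fix the history $S_{t-1}$ and exploit that the bracketed expression defining $C_h$ depends on the sampled $f\sim\hat p_t$ only through its $(h+1)$-st component. Introduce the shorthand $g(f^{h+1}) := \ln \rE_{\tilde f^h\sim p_0^h}\exp\big(-\eta\sum_{s=1}^{t-1}\dLh(\tilde f^h,f^{h+1};\zeta_s)\big)$, and write $\hat p_t^{h+1}$ for the marginal of $\hat p_t$ on $\cF_{h+1}$ (it is absolutely continuous with respect to $p_0^{h+1}$ since $\hat p_t$ has a density with respect to $p_0=\prod_h p_0^h$ by \eqref{eq:expo}). Then $C_h = \rE_{S_{t-1}}\,\rE_{f^{h+1}\sim \hat p_t^{h+1}}\big[\alpha\, g(f^{h+1}) + (1-\alpha)\ln\frac{\hat p_t^{h+1}(f^{h+1})}{p_0^{h+1}(f^{h+1})}\big]$, where the second term integrates to $(1-\alpha)\KL(\hat p_t^{h+1}\|p_0^{h+1})$. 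Since $\hat p_t^{h+1}$ is just one particular probability measure, I would lower bound the inner expectation by the infimum over all probability measures $q$ on $\cF_{h+1}$.

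Next I would apply the Gibbs variational principle (equivalently, nonnegativity of a KL-divergence, exactly as in Lemma~\ref{lem:basic-bound}): for $\beta>0$ and any measurable $\phi$ bounded below, $\inf_q\big[\rE_q\phi + \beta\,\KL(q\,\|\,p_0^{h+1})\big] = -\beta\ln\rE_{f^{h+1}\sim p_0^{h+1}}\exp(-\phi/\beta)$, the minimizer being the tilted measure $q\propto p_0^{h+1}\exp(-\phi/\beta)$. Taking $\phi=\alpha g$ and $\beta=1-\alpha\in(0,1)$ gives $C_h\ge -(1-\alpha)\rE_{S_{t-1}}\ln\rE_{f^{h+1}\sim p_0^{h+1}}\exp\big(-\tfrac{\alpha}{1-\alpha}g(f^{h+1})\big)$. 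Substituting $\exp(-\tfrac{\alpha}{1-\alpha}g(f^{h+1})) = \big(\rE_{f^h\sim p_0^h}\exp(-\eta\sum_{s=1}^{t-1}\dLh(f^h,f^{h+1};\zeta_s))\big)^{-\alpha/(1-\alpha)}$ directly from the definition of $g$ then yields exactly the claimed bound.

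This is essentially a one-line application of a standard variational identity, so I do not anticipate a genuine obstacle; the only care needed is bookkeeping: (i) verifying that the integrand depends on $f$ only through $f^{h+1}$, so the reduction to the marginal $\hat p_t^{h+1}$ is legitimate, and (ii) the absolute-continuity and integrability conditions needed to invoke the variational formula, both of which hold for finite $t$ because each $\dLh$ term is bounded under Assumption~\ref{assum:boundedness}. It is worth noting that this lemma is precisely where $\kappa^h(\alpha,\epsilon)$ will later surface: combining the bound above with the subsequent step that lower-bounds $\rE_{f^h\sim p_0^h}\exp(-\eta\sum_s\dLh(f^h,f^{h+1};\zeta_s))$ by restricting the prior mass to $\cF_h(\epsilon,f^{h+1})$ and invoking Lemma~\ref{lem:exp-moment}, the right-hand side becomes controlled by $\sum_h \kappa^h(\alpha,\epsilon)$ up to the lower-order error terms appearing in Theorem~\ref{thm:regret-general}.
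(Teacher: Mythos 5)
Your argument is correct and is essentially identical to the paper's own proof: both reduce the inner expectation to one over the marginal of $\hat p_t$ on $f^{h+1}$, lower bound by the infimum over all probability measures, and solve that infimum via the Gibbs variational principle with the tilted measure $p^{h}(f^{h+1})\propto p_0^{h+1}(f^{h+1})\exp(-\tfrac{\alpha}{1-\alpha}g(f^{h+1}))$ as minimizer. No gaps.
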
     
\begin{proof}
We have
\begin{align*}
&\rE_{f \sim \hat{p}_t}
\left[\alpha 
\ln \rE_{\tilde{f}^h \sim p_0^h}
\exp \left(-\eta\sum_{s=1}^{t-1} \dLh(\tilde{f}^h,f^{h+1},\zeta_s)
 \right) + (1-\alpha) \ln \frac{\hat{p}_t(f^{h+1})}{p_0^{h+1}(f^{h+1})}
\right]\\
\geq & (1-\alpha)
\inf_{p^{h}}
\rE_{f \sim p^{h}}
\left[\frac{\alpha}{1-\alpha}
\ln \rE_{\tilde{f}^h \sim p_0^h}
\exp \left(-\eta\sum_{s=1}^{t-1}\dLh(\tilde{f}^h,f^{h+1},\zeta_s)   \right) + \ln \frac{p^{h}(f^{h+1})}{p_0^{h+1}(f^{h+1})}
\right]\\
=&  
- (1-\alpha)
\ln \rE_{f^{h+1} \sim p_0^{h+1}}
\bigg(\rE_{f^h \sim p_0^h }
\exp\bigg(- \eta\sum_{s=1}^{t-1}
\dLh({f}^h,f^{h+1},\zeta_s) 
\bigg)\bigg)^{-\alpha/(1-\alpha)} ,
\end{align*}
where the inf over $p^{h}$ is achieved at
\[
p^{h}(f^{h+1})
\propto p_0^{h+1}(f^{h+1})
\bigg(\rE_{f^h \sim p_0^h }
\exp\bigg(- \eta\sum_{s=1}^{t-1}
 \dLh({f}^h,f^{h+1},\zeta_s) 
\bigg)\bigg)^{-\alpha/(1-\alpha)} .
\]
This leads to the lemma.
\end{proof}

The above bound implies the following estimate of $C_h$ in Lemma~\ref{lem:decomp}, which is easier to interpret. 
\begin{lemma}
For all $h \geq 1$,  
\[
C_h \geq - \alpha \eta \epsilon (2b+\epsilon) (t-1)
 -  \kappa^h(\alpha,\epsilon) .
\]
\label{lem:C}
\end{lemma}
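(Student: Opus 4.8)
The plan is to pick up from the lower bound on $C_h$ established just above, namely
\[
C_h \;\ge\; -(1-\alpha)\,\rE_{S_{t-1}}\ln \rE_{f^{h+1}\sim p_0^{h+1}}\Big(\rE_{f^h\sim p_0^h}\exp\big(-\eta\sum_{s=1}^{t-1}\dLh(f^h,f^{h+1};\zeta_s)\big)\Big)^{-\alpha/(1-\alpha)},
\]
and to lower bound the innermost prior expectation over $f^h$ by discarding all of its mass except the part supported on the set $\cF_h(\epsilon,f^{h+1})$ of functions whose Bellman residual with $f^{h+1}$ is uniformly at most $\epsilon$.

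The key elementary estimate I would establish first: for any $f^h\in\cF_h(\epsilon,f^{h+1})$ and any trajectory $\zeta_s$, writing $\delta_s=\BE_h(f^h,f^{h+1};x_s^h,a_s^h)$ and $Z_s=\mathcal T^\star_h f^{h+1}(x_s^h,a_s^h)-r_s^h-f^{h+1}(x_s^{h+1})$, we have $\dLh(f^h,f^{h+1};\zeta_s)=(Z_s+\delta_s)^2-Z_s^2=2Z_s\delta_s+\delta_s^2$. By Assumption~\ref{assum:boundedness} (and $r_s^h\in[0,1]$) we get $|Z_s|\le b$, while the membership $f^h\in\cF_h(\epsilon,f^{h+1})$ gives $|\delta_s|\le\epsilon$, so $|\dLh(f^h,f^{h+1};\zeta_s)|\le\epsilon(2b+\epsilon)$ for every $s$. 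Restricting the $p_0^h$-integral to $\cF_h(\epsilon,f^{h+1})$ then yields, pointwise in $S_{t-1}$,
\[
\rE_{f^h\sim p_0^h}\exp\Big(-\eta\sum_{s=1}^{t-1}\dLh(f^h,f^{h+1};\zeta_s)\Big)\;\ge\; p_0^h\big(\cF_h(\epsilon,f^{h+1})\big)\,\exp\big(-\eta(t-1)\epsilon(2b+\epsilon)\big).
\]

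Finally I would push this through the two order-reversing operations: raise both sides to the power $-\alpha/(1-\alpha)<0$, take $\rE_{f^{h+1}\sim p_0^{h+1}}$, apply $\ln$, and multiply by $-(1-\alpha)<0$. This turns the displayed inequality into
\[
-(1-\alpha)\ln\rE_{f^{h+1}\sim p_0^{h+1}}\Big(\cdots\Big)^{-\alpha/(1-\alpha)}\;\ge\;-\alpha\eta(t-1)\epsilon(2b+\epsilon)\;-\;(1-\alpha)\ln\rE_{f^{h+1}\sim p_0^{h+1}} p_0^h\big(\cF_h(\epsilon,f^{h+1})\big)^{-\alpha/(1-\alpha)},
\]
and the last term is exactly $-\kappa^h(\alpha,\epsilon)$ by definition. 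Since the right-hand side no longer depends on $S_{t-1}$, taking $\rE_{S_{t-1}}$ and combining with the starting bound gives $C_h\ge-\alpha\eta\epsilon(2b+\epsilon)(t-1)-\kappa^h(\alpha,\epsilon)$, as claimed. The only delicate point is bookkeeping the two sign flips coming from the negative exponent $-\alpha/(1-\alpha)$ and the negative prefactor $-(1-\alpha)$; the boundary case $\alpha=1$ is recovered by letting $\alpha\to1^-$, which collapses $\kappa^h(\alpha,\epsilon)$ to $\sup_{f^{h+1}}\ln\big(1/p_0^h(\cF_h(\epsilon,f^{h+1}))\big)$.
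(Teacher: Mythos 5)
Your proposal is correct and follows essentially the same route as the paper: bound $|\dLh|$ by $\epsilon(2b+\epsilon)$ on $\cF_h(\epsilon,f^{h+1})$ (your decomposition $\dLh=2Z_s\delta_s+\delta_s^2$ is the same computation the paper writes as $|\dLh|\le \BE_h^2+2b|\BE_h|$), restrict the prior expectation to that set, and push the resulting lower bound through the negative exponent and the $-(1-\alpha)\ln$ to recover $-\kappa^h(\alpha,\epsilon)$. The sign bookkeeping you flag is handled correctly.
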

\begin{proof}
For $f^h \in \cF_h(\epsilon,f^{h+1})$,
we have
\begin{align*}
    |\dLh(f^h,f^{h+1},\zeta_s)|
    \leq&
    ( \BE_h(f,x\ics{s}{h},a\ics{s}{h}))^2
    + 2 b
    | \BE_h(f,x\ics{s}{h},a\ics{s}{h})|
    \leq \epsilon (2b+\epsilon) .
\end{align*}
It follows that
\[
\rE_{f^h \sim p_0^h }
\exp\bigg(- \eta\sum_{s=1}^{t-1}
 \dLh({f}^h,f^{h+1},\zeta_s) 
\bigg)
\geq p_0^h(\cF_h(\epsilon,f^{h+1}))
\exp\Big(- \eta (t-1) (2b+\epsilon)\epsilon\Big) .
\]
This implies the bound.
\end{proof}

The following result, referred to as the value-function error decomposition in \cite{jiang2017contextual}, is well-known. 
\begin{proposition}[\citet{jiang2017contextual}]
 Given any $f_t$. Let $\zeta_t=\{[x_t^h,a_t^h,r_t^h]\}_{h \in [H]} \sim \pi_{f_t}$ be the trajectory of the greedy policy $\pi_{f_t}$, we have
  \begin{align*}
  \regret(f_t)
   =&
  \rE_{\zeta_t\sim \pi_{f_t}}
 \sum_{h=1}^H  \BE_h(f_t,x_t^h,a_t^h)
  -  \Delta f^1(x^1) .
  \end{align*}
  \label{prop:value-decomposition}
\end{proposition}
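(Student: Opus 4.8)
\textbf{Proof plan for Proposition~\ref{prop:value-decomposition} (value-function error decomposition).}
The plan is to expand $\Delta f^1(x^1) = f_t^1(x^1) - V^\star_1(x^1)$ and $\regret(f_t) = V^\star_1(x^1) - V^{\pi_{f_t}}_1(x^1)$, so that the claim reduces to the identity
\[
\rE_{\zeta_t \sim \pi_{f_t}} \sum_{h=1}^H \BE_h(f_t, x_t^h, a_t^h) = f_t^1(x^1) - V^{\pi_{f_t}}_1(x^1).
\]
Everything then follows from a single telescoping computation along the trajectory of $\pi_{f_t}$, which is exactly the standard argument of \citet{jiang2017contextual}.

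First I would record the key consequence of greediness: since $a_t^h = \pi_{f_t}(x_t^h, h) \in \argmax_{a} f_t^h(x_t^h, a)$, we have $f_t^h(x_t^h, a_t^h) = \max_a f_t^h(x_t^h, a) = f_t^h(x_t^h)$ along the trajectory, using the paper's convention $f^h(x) = \max_a f^h(x,a)$. Plugging the definition $\BE_h(f_t; x_t^h, a_t^h) = f_t^h(x_t^h, a_t^h) - r^h(x_t^h, a_t^h) - \rE_{x' \sim P^h(\cdot \mid x_t^h, a_t^h)}[f_t^{h+1}(x')]$ and taking the expectation over $\zeta_t$, the tower property gives $\rE_{\zeta_t}\big[\rE_{x' \sim P^h(\cdot\mid x_t^h,a_t^h)}[f_t^{h+1}(x')]\big] = \rE_{\zeta_t}[f_t^{h+1}(x_t^{h+1})]$, because under $\pi_{f_t}$ the next state $x_t^{h+1}$ is drawn from $P^h(\cdot\mid x_t^h, a_t^h)$. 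Hence
\[
\rE_{\zeta_t}\sum_{h=1}^H \BE_h(f_t, x_t^h, a_t^h)
= \sum_{h=1}^H \rE_{\zeta_t}\big[ f_t^h(x_t^h) - r^h(x_t^h, a_t^h) - f_t^{h+1}(x_t^{h+1}) \big].
\]

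The sum telescopes in $h$, leaving $\rE_{\zeta_t}[f_t^1(x_t^1) - f_t^{H+1}(x_t^{H+1})] - \rE_{\zeta_t}\sum_{h=1}^H r^h(x_t^h, a_t^h)$. Using $x_t^1 = x^1$ deterministically, the boundary convention $f_t^{H+1} \equiv 0$ (i.e. $\cF_{H+1} = \{\mathbf 0\}$), and the definition $V^{\pi_{f_t}}_1(x^1) = \rE_{\zeta_t}\sum_{h=1}^H r^h(x_t^h, a_t^h)$, this equals $f_t^1(x^1) - V^{\pi_{f_t}}_1(x^1)$, which is the reduced identity. Finally, subtracting $\Delta f^1(x^1) = f_t^1(x^1) - Q^\star_1(x^1) = f_t^1(x^1) - V^\star_1(x^1)$ yields $V^\star_1(x^1) - V^{\pi_{f_t}}_1(x^1) = \regret(f_t)$, completing the proof. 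There is no real obstacle here; the only points requiring care are the greedy-policy substitution $f_t^h(x_t^h,a_t^h)=f_t^h(x_t^h)$, the correct use of the law of total expectation on the transition term, and the two boundary conventions $f_t^{H+1}\equiv 0$ and $Q^\star_1(x^1)=V^\star_1(x^1)$.
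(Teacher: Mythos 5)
Your proof is correct and is exactly the standard telescoping argument behind this identity; the paper itself omits the proof entirely, simply citing \citet{jiang2017contextual} for this well-known decomposition. You handle the only delicate points correctly: the greedy substitution $f_t^h(x_t^h,a_t^h)=f_t^h(x_t^h)$, the tower property turning $\rE_{x'\sim P^h}[f_t^{h+1}(x')]$ into $f_t^{h+1}(x_t^{h+1})$ in expectation, and the boundary conventions $f_t^{H+1}\equiv 0$ and $Q^\star_1(x^1)=V^\star_1(x^1)$.
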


Equipped with all technical results above, we are ready to state the assemble all parts in the proof of Theorem~\ref{thm:regret-general}:
\begin{proof}[Proof of Theorem~\ref{thm:regret-general}]

Let
\[
\delta_t^h =\lambda \BE_h(f_t,x_t^h,a_t^h)
- 0.25 \alpha \eta \sum_{s=1}^{t-1} \rE_{\pi_{s}}
\Big(\BE_h(f_t,x_s^h,a_s^h) \Big)^2 .
\]
Then from the definition of decoupling coefficient, we obtain
\begin{equation}
\sum_{t=1}^T
\rE_{S_{t-1}}
 \rE_{f_t \sim \hat{p}_t}
 \rE_{\zeta_t\sim \pi_{f_t}}
 \sum_{h=1}^H \delta_t^h 
\leq 
\frac{\lambda^2}{\alpha \eta} \dc(\cF,M,T,0.25\alpha\eta/\lambda) .
\label{eq:delta-dc}
\end{equation}

From Proposition~\ref{prop:value-decomposition}, we obtain
\begin{align*}
&    \rE_{S_{t-1}} \rE_{f_t \sim \hat{p}_t} \lambda \regret(f_t)
 - \rE_{S_{t-1}}
 \rE_{f_t \sim \hat{p}_t}
 \rE_{\zeta_t\sim \pi_{f_t}}
 \sum_{h=1}^H \delta_t^h  \\
=& - \lambda \rE_{S_{t-1}} \rE_{f_t \sim \hat{p}_t}
 \Delta f_t^1(x_t^1)
+ 0.25 \alpha \eta \sum_{h=1}^H  \sum_{s=1}^{t-1}
\rE_{S_{t-1}} 
\rE_{f_t \sim \hat{p}_t} \rE_{\pi_{f_s}}
\Big(\BE_h(f_t,x_s^h,a_s^h)\Big)^2 \\
\leq& 
\rE_{S_{t-1}} \; \rE_{f \sim \hat{p}_t}
\Big(\sum_{h=1}^H \Q_t^h(f) - \lambda \Delta f^1(x^1)+ \ln \hat{p}_t(f)\bigg)
+ \alpha \eta \epsilon(2b+\epsilon) (t-1) H
 + \sum_{h=1}^H \kappa^h(\alpha,\epsilon)\\
 =&\rE_{S_{t-1}} \; \inf_{p} \rE_{f \sim p}
\bigg(\sum_{h=1}^H \Q_t^h(f) - \lambda \Delta f^1(x^1)+ \ln {p}(f)\bigg)\\
&+ \alpha \eta\epsilon(2b+\epsilon) (t-1) H
 + \sum_{h=1}^H \kappa^h(\alpha,\epsilon)\\
 \leq& \lambda \epsilon
  + \alpha \eta \epsilon(\epsilon+4 \epsilon+2b) (t-1) H
 - \sum_{h=1}^H \ln p_0^h(\cF(\epsilon,Q^\star_{h+1}))
 +\sum_{h=1}^H \kappa^h(\alpha,\epsilon) .
\end{align*}
The first equality used the definition of $\delta_t^h$.
The first inequality used Lemma~\ref{lem:decomp} and Lemma~\ref{lem:A} and Lemma~\ref{lem:B} and Lemma~\ref{lem:C}.
The second equality used Lemma~\ref{lem:basic-bound}. The second inequality follows from Lemma~\ref{lem:basic-upper-bound}. 

By summing over $t=1$ to $t=T$, and use \eqref{eq:delta-dc}, we obtain the desired bound.
\end{proof}

We are now ready to prove Theorem~\ref{thm:regret}. Note that
\[
-\ln p_0^h(\cF(\epsilon,Q^\star_{h+1}))
\leq \kappa^h(1,\epsilon) ,
\]
we have
\[
- \sum_{h=1}^H \ln p_0^h(\cF(\epsilon,Q^\star_{h+1}))
 +\sum_{h=1}^H \kappa^h(\alpha,\epsilon) 
 \leq 2 \kappa(\epsilon) .
\]
By taking $\epsilon=b/T^\beta$, we obtain the desired result.

\section{Proofs for Decoupling Coefficient Bounds}

\subsection{Proof of Proposition~\ref{prop:linear MDP dc} (Linear MDP)}

\begin{proof}[Proof of Proposition~\ref{prop:linear MDP dc}]
Completeness follows from the fact that the $Q$ function of any policy $\pi$ is linear for linear MDPs.
This follows directly from the Bellman equation.
\begin{align*}
    Q\ics{h}{\pi}(x,a) = r^h(x,a) + \rE_{x'\sim P^{h}}[V\ics{h+1}{\pi}(x')]&=\langle \phi(x,a),\theta_h\rangle + \int_{\cS}V\ics{h+1}{\pi}(x')\langle \phi(x,a),d\,\mu_h(x')\rangle\\
    &=\langle\phi(x,a),w^\pi_h\rangle\,, 
\end{align*}
where $w^\pi_h = \theta_h+\int_{\cS}V\ics{h+1}{\pi}(x')d\,\mu_h(x')$.
Hence the optimal $Q$-function is iin the function class.

Boundedness follows from $||\phi(x,a)||\leq 1$ and $||f||\leq (H+1)\sqrt{d}$.

Completeness follows by

\begin{align*}
    [\cT\ics{h}{\star}f^{h+1}](x,a)&=r^h(x,a) + \rE_{x'\sim P^{h}}[\max_{a'\in\cA}f^{h+1}(x',a')]\\
    &=\langle \phi(x,a),\theta_h\rangle + \int_{\cS}\max_{a'\in\cA}f^{h+1}(x',a')\langle \phi(x,a),d\,\mu_h(x')\rangle=\langle\phi(x,a),v^\pi_h\rangle\,, 
\end{align*}
where $v^\pi_h = \theta_h+\int_{\cS}
\max_{a'\in\cA}f^{h+1}(x',a')d\,\mu_h(x')$.

\paragraph{Bounding the decoupling coefficient.}
By the same argument, the Bellman error is linear
\begin{align*}
    \BE_h(f;x,a) = \langle \phi(x,a), w^h(f) \rangle
\end{align*}
for some $w^h(f)\in \rR^d, ||w^h(f)||\leq \sqrt{d}H$.
Denote $\phi\ics{s}{h}=\rE_{\pi_{f_s}}[\phi(x^h,a^h)]$ and $\Phi^h_{t}=\lambda I+\sum_{s=1}^{t}\phi(x^h,a^h)\phi(x^h,a^h)^\top$.
\begin{align*}
&
\rE_{\pi_{f_t}}
[ \BE_h(f\ics{t}{}; x_t^h,a_t^h)]
 -\mu \sum_{s=1}^{t-1} 
\rE_{\pi_{f_s}}
[ \BE_h(f\ics{t}{}; x\ics{s}{h},a\ics{s}{h})^2]
\\
&=
w^h(f\ics{t}{})^\top\phi\ics{t}{h}
 -\mu w^h(f\ics{t}{})^\top\sum_{s=1}^{t-1} 
\rE_{\pi_{f_s}}[\phi(x^h,a^h)\phi(x^h,a^h)^\top]w^h(f\ics{t}{})\\
&\leq 
w^h(f\ics{t}{})^\top\phi\ics{t}{h}
 -\mu w^h(f\ics{t}{})^\top\Phi^h_{t-1}w^h(f\ics{t}{})+\mu\lambda dH^2\\
&\leq \frac{1}{4\mu}
(\phi\ics{t}{h})^\top(\Phi_{t-1}^h)^{-1}\phi\ics{t}{h}+\mu\lambda dH^2\,,
\end{align*}
where the first inequality uses Jensen's inequality and the second is GM-AM inequality.
Summing over all terms yields
\begin{align*}
    &
    \sum_{t=1}^T\sum_{h=1}^H\bigg[
\rE_{\pi_{f_t}}
[ \BE_h(f\ics{t}{}; x_t^h,a_t^h)]
 -\mu \sum_{s=1}^{t-1} 
\rE_{\pi_{f_s}}
[ \BE_h(f\ics{t}{}; x\ics{s}{h},a\ics{s}{h})^2]
\bigg]
\\
&\leq \sum_{h=1}^H\bigg[\frac{\ln(\det(\Phi_{T}^h))-d\ln(\lambda)}{4\mu}+\lambda \mu C_1T\bigg]\\
&\leq H(\frac{d\ln(\lambda+T/d)-d\ln(\lambda)}{4\mu} +\lambda\mu dH^2T)\,.
\end{align*}
Setting $\lambda=\min\{1,\frac{1}{4\mu^2H^2T}\}$ finishes the proof.
\end{proof}

\subsection{Proof of Proposition~\ref{prop:gen linear dc} (Generalized Linear Value Functions)}

\begin{proof}[Proof of Proposition~\ref{prop:gen linear dc}]
We assume w.l.o.g. that $k\leq 1\leq K$, otherwise we can scale the features and the link function accordingly.
By completeness assumption, there exists a $g_t^h\in\cF_h$, such that $g_t^h=\cT^\star_h(f_t^{h+1})$. The Bellman error is
\begin{align*}
    \BE_h(f;x,a) = \sigma(\langle \phi(s,a),f_t^h)-\BE_h(f;x,a) = \sigma(\langle \phi(s,a),g_t^h)\,.
\end{align*}
By the Lipschitz property, we have for all $s\in[t]$
\begin{align*}
     k|\langle\phi(x,a),w(f_s) \rangle| \leq |\BE_h(f_s;x,a)| \leq K|\langle\phi(x,a),w^h(f_s) \rangle|
\end{align*}
for $w^h(f_s)=f_s^h-g_s^h\in \rR^d$.

The remaining proof is analogous to the previous one.
Denote $\phi\ics{s}{h}=\rE_{\pi_{f_s}}[\phi(x^h,a^h)]$ and $\Phi^h_{t}=\lambda I+\sum_{s=1}^{t}\phi(x^h,a^h)\phi(x^h,a^h)^\top$.
\begin{align*}
&
\rE_{\pi_{f_t}}
[ \BE_h(f\ics{t}{}; x_t^h,a_t^h)]
 -\mu \sum_{s=1}^{t-1} 
\rE_{\pi_{f_s}}
[ \BE_h(f\ics{t}{}; x\ics{s}{h},a\ics{s}{h})^2]
\\
&\leq
K|w^h(f\ics{t}{})^\top\phi\ics{t}{h}|
 -\mu k^2 w^h(f\ics{t}{})^\top\sum_{s=1}^{t-1} 
\rE_{\pi_{f_s}}[\phi(x^h,a^h)\phi(x^h,a^h)^\top]w^h(f\ics{t}{})\\
&\leq 
K|w^h(f\ics{t}{})^\top\phi\ics{t}{h}|
 -\mu  k^2w^h(f\ics{t}{})^\top\Phi^h_{t-1}w^h(f\ics{t}{})+\lambda\mu  k^2 dH^2\\
&\leq \frac{K^2}{4\mu k^2}
(\phi\ics{t}{h})^\top(\Phi_{t-1}^h)^{-1}\phi\ics{t}{h}+\mu k^2\lambda dH^2\,,
\end{align*}
where the first inequality uses Jensen's inequality and the second is GM-AM inequality.
Summing over all terms yields
\begin{align*}
    &
    \sum_{t=1}^T\sum_{h=1}^H\bigg[
\rE_{\pi_{f_t}}
[ \BE_h(f\ics{t}{}; x_t^h,a_t^h)]
 -\mu \sum_{s=1}^{t-1} 
\rE_{\pi_{f_s}}
[ \BE_h(f\ics{t}{}; x\ics{s}{h},a\ics{s}{h})^2]
\bigg]
\\
&\leq \sum_{h=1}^HK^2\bigg[\frac{\ln(\det(\Phi_{T}^h))-d\ln(\lambda)}{4\mu k^2}+\lambda \mu  k^2C_1T\bigg]\\
&\leq HK^2(\frac{d\ln(\lambda+T/d)-d\ln(\lambda)}{4\mu k^2} +\lambda\mu k^2 dH^2T)\,.
\end{align*}
Setting $\lambda=\min\{1,\frac{1}{4\mu^2 k^2H^2T}\}$ finishes the proof.
\end{proof}

\subsection{Proof of Proposition~\ref{prop:bellman eluder to dc} (Bellman-Eluder dimension Reduction)}

\begin{lemma}
\label{lem:helper be}
For any sequence of positive reals $x_1,x_2,\dots x_n$, the following inequality holds
\begin{align*}
    f(x):=\frac{\sum_{i=1}^nx_i}{\sqrt{\sum_{i=1}^n ix_i^2}} \leq \sqrt{1+\ln(n)}
\end{align*}
\end{lemma}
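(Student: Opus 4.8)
The plan is to prove the inequality by a single application of the Cauchy--Schwarz inequality with a carefully chosen weighting, followed by the standard integral bound on the harmonic sum. Concretely, I would write each term as $x_i = \frac{1}{\sqrt{i}}\cdot \big(\sqrt{i}\,x_i\big)$ and then apply Cauchy--Schwarz to the two factors, obtaining
\[
\sum_{i=1}^n x_i \;=\; \sum_{i=1}^n \frac{1}{\sqrt{i}}\cdot \sqrt{i}\,x_i
\;\le\; \left(\sum_{i=1}^n \frac{1}{i}\right)^{1/2}\left(\sum_{i=1}^n i\,x_i^2\right)^{1/2}.
\]
Since all $x_i$ are positive, the denominator $\sqrt{\sum_{i=1}^n i x_i^2}$ is strictly positive, so dividing both sides by it is legitimate and yields $f(x) \le \big(\sum_{i=1}^n 1/i\big)^{1/2}$.

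It then remains to bound the harmonic sum. I would use the elementary comparison with an integral: for $i\ge 2$ we have $\frac1i \le \int_{i-1}^{i}\frac{dt}{t}$, hence
\[
\sum_{i=1}^n \frac{1}{i} \;\le\; 1 + \int_1^n \frac{dt}{t} \;=\; 1 + \ln n .
\]
Combining the two displays gives $f(x) \le \sqrt{1+\ln n}$, which is exactly the claim.

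There is essentially no obstacle here: the only ``idea'' is choosing the split weights $1/\sqrt{i}$ versus $\sqrt{i}\,x_i$ so that the $i$-weighted sum of squares appears on the right and a harmonic sum appears on the left; everything else is routine. If desired, one can remark that the bound is close to tight, since taking $x_i = 1/i$ makes the left sum a harmonic number and the weighted square sum again harmonic, so $f$ is of order $\sqrt{\ln n}$ in that case, matching the claimed rate up to the additive constant inside the square root.
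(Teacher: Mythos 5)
Your proof is correct, and it takes a genuinely different (and arguably cleaner) route than the paper. The paper proves this lemma variationally: it computes the gradient of $f$, identifies the critical point $x^E_j \propto 1/j$, evaluates $f(x^E) = \sqrt{\sum_{i=1}^n 1/i} \le \sqrt{1+\ln n}$, and then argues that boundary points (where some coordinates vanish) reduce to the same problem over fewer indices and hence give a smaller value. Your argument replaces all of this with a single application of Cauchy--Schwarz via the splitting $x_i = \tfrac{1}{\sqrt{i}}\cdot\sqrt{i}\,x_i$, which lands directly on the same harmonic-sum bound. What your approach buys is that it sidesteps the need to justify that the critical point is a global maximum and to handle the boundary analysis (the paper's treatment of this is somewhat informal); what the paper's approach buys is a slightly sharper statement ($f(x)\le\sqrt{H_n}$ with the extremizer identified explicitly), though your tightness remark with $x_i = 1/i$ recovers exactly that extremizer, so nothing of substance is lost. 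Note also that the lemma is only ever applied in Proposition~\ref{prop:bellman eluder to dc} through the bound $\sqrt{1+\ln n}$, so your version suffices for the paper's purposes.
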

\begin{proof}
First note that the problem is scale invariant and we only need to find the optimal ratio of values.
The gradient is given by
\begin{align*}
    \nabla f(x)_j = \frac{\sum_{i=1}^n ix_i^2 - jx_j\sum_{i=1}^nx_i}{\left(\sum_{i=1}^n ix_i^2\right)^{3/2}}\,,
\end{align*}
with extreme point $x^E_j\propto \frac{1}{j}$.
The function value of the extreme point is 
\begin{align*}
    f(x^E) = \sqrt{\sum_{i=1}^n\frac{1}{i}}\leq \sqrt{1+\ln(n)}\,.
\end{align*}
The optimal point is either on the boundary, or the extreme point. Points on the boundary imply a zero value for any $x_i$, which corresponds to the maximum value over a sequence of $n-m$ elements (where $m$ are on the boundary). The extreme point for a $n-m$ sequence  is $\sqrt{\sum_{i=1}^{n-m}\frac{1}{i}}<f(x^E)$. Hence we have found the maximum.
\end{proof}

Equipped with this lemma, we can now present the proof of Proposition~\ref{prop:bellman eluder to dc}:
\begin{proof}[Proof of Proposition~\ref{prop:bellman eluder to dc}]
We consider a fixed $h\in[H]$ and denote $\hat\epsilon^h_{st}=\rE_{[x_s^h,a_s^h]}
[\BE_h(f_t;x,a)], \epsilon^h_{st}=\hat\epsilon^h_{st}\rI\{\hat\epsilon^h_{st}>\frac{1}{T}\}$.
We sort timesteps into buckets $B^h_0,\dots, B^h_{T-1}$ . From $t=1,\dots, T$, if $\epsilon^h_{t,s}=0$, we discard the timestep. Otherwise we go through our buckets from $0$ in increasing order. At bucket $i$, if the bucket is empty or if $\sum_{s\in B^h_i}(\epsilon^h_{st})^2 < (\epsilon^h_{tt})^2$, we add $t$ to bucket $i$ and continue with the next timestep.
Let $b^h_t$ denote the bucket number that each non-zero timestep ends up in, then it holds
\begin{align*}
    \sum_{t=1}^T\sum_{s=1}^{t-1}(\epsilon^h_{st})^2\geq \sum_{t=1}^T b^h_{t}(\epsilon^h_{tt})^2\,,
\end{align*}
because assigning step $t$ to bucket $B^h_{b^h_t}$ means that there are $b^h_t$ previous buckets with disjoint timesteps smaller $t$ and their cumulative squared Bellman errors exceed that of time $t$.
Note that by definition, adding $t$ into bucket $b^h_t$ means that $\mu^h_t$ is $\frac{1}{T}$ independent of all measures $\mu^h_s$ for $s\in B^h_{b^h_t}$.
If the Bellman Eluder dimension at threshold $1/T$ is $E_T$, this implies that the maximum bucket size is bounded by $E_T$; $\forall i:\,|\{t\,|\,b^h_t=i\}|\leq E_T$.
Next, since all buckets have maximally $E_T$ members, we have by Jensen's inequality
\begin{align*}
    \sum_{t=1}^Tb^h_t(\epsilon^h_{tt})^2 \geq \sum_{i=1}^{T-1}E_T i\left(\sum_{s\in B^h_i}\frac{\epsilon^h_{ss}}{E_T}\right)^2\,.
\end{align*}
By Lemma~\ref{lem:helper be},  setting $x_i=\sum_{s\in B^h_i}\epsilon^h_{ss}$
\begin{align*}
    \sum_{i=1}^{T-1}E_T i\left(\sum_{s\in B^h_i}\frac{\epsilon^h_{ss}}{E_T}\right)^2=\frac{1}{E_T}\sum_{i=1}^{T-1} i\left(\sum_{s\in B^h_i}\epsilon^h_{ss}\right)^2 \geq \frac{1}{E_T(1+\ln(T))}\left(\sum_{s\in[T]\setminus B^h_0}\epsilon^h_{ss}\right)^2\,.
\end{align*}
We are ready to bound the RHS. Since $ 
\rE_{\pi_{f_s}}
[ \BE_h(f\ics{t}{}; x\ics{}{h},a\ics{}{h})^2]\geq (\epsilon^h_{st})^2$, we have for $$K=(1+4\mu+\ln(T))E_TH:$$
\begin{align*}
&\mu \sum_{h=1}^H \;\sum_{t=1}^{T} 
\bigg[
 \sum_{s=1}^{t-1} 
\rE_{\pi_{f_s}}
[ \BE_h(f\ics{t}{}; x\ics{}{h},a\ics{}{h})^2]\bigg]
+ \frac{K}{4\mu}\\
&\geq \mu \sum_{h=1}^H \;\sum_{t=1}^{T} 
\bigg[
 \sum_{s=1}^{t-1} 
(\epsilon^h_{st})^2\bigg]
+ \frac{(1+\ln(T))E_TH}{4\mu} +(1+E_T)H\\
&\geq \sum_{h=1}^H\sqrt{\sum_{t=1}^{T} 
\bigg[
 \sum_{s=1}^{t-1} 
(\epsilon^h_{st})^2\bigg](1+\ln(T))E_T}+(1+E_T)H\\
&\geq  \sum_{h=1}^H\sum_{s\in[T]\setminus B^h_0}\epsilon^h_{ss}+(1+E_T)H\,,
\end{align*}
where the second inequality uses GM-AM inequality and the final inequality the bucketing bound above.
The LHS is bounded by
\begin{align*}
    \sum_{h=1}^H\sum_{t=1}^T\hat\epsilon^h_{tt}\leq H+\sum_{h=1}^H\sum_{t=1}^T\epsilon^h_{tt}\leq (1+E_T)H+\sum_{h=1}^H\sum_{s\in[T]\setminus B^h_0}\epsilon^h_{ss}\,,
\end{align*}
which follows from $\epsilon^h_{ss}\leq 1$ and the maximal size of $B^h_0$.

\end{proof}

\end{document}